\documentclass{article}
\usepackage{graphicx}
\usepackage{multirow}
\usepackage{tabularx}
\usepackage{bbding}
\usepackage{algorithm}
\usepackage{algorithmic}
\usepackage{amsmath}
\usepackage{amsthm}  % 提供定理类环境支持
\usepackage{amssymb}
\usepackage{makecell}
\usepackage{colortbl}
\usepackage{subfig}
\usepackage{enumitem}
\usepackage{tcolorbox}

\newtheorem{lemma}{Lemma}[section]
\newtheorem{assumption}{Assumption}[section]
\newtheorem{theorem}{Theorem}[section]
\newtheorem{remark}{Remark}[section]

\usepackage[final]{neurips_2025}

\usepackage[utf8]{inputenc} % allow utf-8 input
\usepackage[T1]{fontenc}    % use 8-bit T1 fonts
\usepackage{hyperref}       % hyperlinks
\usepackage{url}            % simple URL typesetting
\usepackage{booktabs}       % professional-quality tables
\usepackage{amsfonts}       % blackboard math symbols
\usepackage{nicefrac}       % compact symbols for 1/2, etc.
\usepackage{microtype}      % microtypography
\usepackage{xcolor}         % colors

\definecolor{customgray}{gray}{0.9}

\usepackage{fancybox}

\title{MGUP: A Momentum-Gradient Alignment Update Policy for Stochastic Optimization}

\author{
  Da Chang$^1$$^3$$^4$
  , Ganzhao Yuan$^2$$^1$\thanks{Corresponding author: \texttt{yuanganzhao@foxmail.com}} \\
  \centerline{$^1$Shenzhen Institute of Advanced Technology, Chinese Academy of Sciences}\\
  \centerline{$^2$Shenzhen University of Advanced Technology, $^3$Pengcheng Laboratory}\\
  \centerline{$^4$University of Chinese Academy of Sciences}
}

\begin{document}

\maketitle

\begin{abstract}
Efficient optimization is essential for training large language models. Although intra-layer selective updates have been explored, a general mechanism that enables fine-grained control while ensuring convergence guarantees is still lacking. To bridge this gap, we propose \textbf{MGUP}, a novel mechanism for selective updates. \textbf{MGUP} augments standard momentum-based optimizers by applying larger step-sizes to a selected fixed proportion of parameters in each iteration, while applying smaller, non-zero step-sizes to the rest. As a nearly {plug-and-play} module, \textbf{MGUP} seamlessly integrates with optimizers such as AdamW, Lion, and Muon. This yields powerful variants such as \textbf{MGUP-AdamW}, \textbf{MGUP-Lion}, and \textbf{MGUP-Muon}. Under standard assumptions, we provide theoretical convergence guarantees for \textbf{MGUP-AdamW} (without weight decay) in stochastic optimization. Extensive experiments across diverse tasks, including MAE pretraining, LLM pretraining, and downstream fine-tuning, demonstrate that our \textbf{MGUP}-enhanced optimizers achieve superior or more stable performance compared to their original base optimizers. We offer a principled, versatile, and theoretically grounded strategy for efficient intra-layer selective updates, accelerating and stabilizing the training of large-scale models. The code is publicly available at \url{https://github.com/MaeChd/MGUP}.
\end{abstract}

\vspace{-5pt}
\section{Introduction}
\vspace{-5pt}
Recent studies reveal that the learning matrix during Large Language Model (LLM) training exhibits low-rank properties, suggesting that learning predominantly occurs in a low-dimensional space~\cite{gur2018gradient,DBLP:conf/iclr/LarsenFBG22}. This observation has catalyzed the development of methods such as Galore \cite{DBLP:conf/icml/Zhao0CWAT24} and LDAdam \cite{DBLP:conf/iclr/0007SMA25}, which use gradient low-rank decomposition to achieve performance comparable to full-rank updates while reducing memory consumption. Although low-rank properties do not directly imply sparsity, the insight that optimization occurs in a low-dimensional space provides a crucial foundation for selective parameter updates. This principle is exemplified by SIFT \cite{DBLP:conf/icml/SongLZ0024}, which achieves efficient adaptation through gradient-based sparse parameter updates, leveraging the low intrinsic dimensionality and sparse gradient characteristics inherent in LLMs. Building on this foundation, several innovative layer-wise selective update methods have emerged, including AutoFreeze \cite{Liu2021AutoFreezeAF}, LOMO \cite{Lv2023FullPF}, LISA \cite{DBLP:conf/nips/PanLDPZH024}, and BAdam \cite{luo2024badam}. By strategically freezing certain layers while updating others, these methods achieve performance comparable to, or even surpassing, that of full-parameter updates.

While layer-wise selective updates show promise, finer-grained parameter selection remains underexplored. Although SIFT~\cite{DBLP:conf/icml/SongLZ0024} investigates sparse intra-layer updates, a systematic methodology for identifying the most critical parameters within each layer is still lacking. This research gap motivates the development of novel intra-layer sparse update strategies.

Recently, Liang et al.~\cite{Liang2024CautiousOI} have proposed Cautious Optimizers, a novel intra-layer sparse update strategy. This approach selectively updates only parameters where momentum and gradient are aligned (i.e., $\mathbf{m}_t \odot \mathbf{g}_t > 0$), enabling larger updates for aligned directions while skipping misaligned ones. Conceptually, it extends earlier adaptive optimizers like AdaBelief \cite{zhuang2020adabelief}, which adjusts step sizes using $(\mathbf{m}_t-\mathbf{g}_t)^2$, but introduces parameter selection based on momentum-gradient alignment.

However, both methods have notable limitations. AdaBelief's update mechanism relies heavily on Adam's second-moment estimation, which restricts its applicability to optimizers that do not compute second moments (e.g., Lion \cite{DBLP:conf/nips/ChenLHRW0DLHLL23} or Muon \cite{jordan2024muon}). Furthermore, Cautious Optimizers lack rigorous theoretical convergence guarantees in the stochastic setting. Although the strategy offers theoretical insights in the deterministic case, its convergence properties remain unverified under stochastic conditions. This raises a crucial question: 
\begin{center}
\shadowbox{\begin{minipage}[t]{0.95\columnwidth}%
\it Within the stochastic optimization setting, can the concept of intra-layer sparsity in updates, based on momentum-gradient direction consistency, truly serve as a plug-and-play mechanism?
\end{minipage}}
\end{center}
If so, what are the boundaries of its effectiveness? If not, what are the underlying reasons?

%\begin{center}
%    \textbf{\textit{Within the stochastic optimization setting, can the concept of intra-layer sparsity in updates, based on momentum-gradient direction consistency, truly serve as a plug-and-play mechanism?}}
%\end{center}

We explore this issue in detail in the theoretical analysis presented in Section~\ref{th}. Specifically, we demonstrate that for Adam variants incorporating a mask, simply setting the update step to zero for parameters where momentum and gradient directions are misaligned significantly impacts the convergence properties of stochastic optimization. This motivates rethinking how to perform selective parameter updates more effectively in stochastic optimization settings to maintain favorable convergence properties. For example, without guided parameter selection, certain extreme cases can occur: \textit{(i)} only a small fraction of parameters receive substantial updates (potentially leading to unstable training), or \textit{(ii)} the updates for the vast majority of parameters are overly suppressed (potentially resulting in slow training). Therefore, we propose that a promising policy involves not only considering the alignment between momentum and gradient direction but also regulating the proportion of parameters receiving substantial versus minor updates, to strike a balance between training efficiency and stability.

Motivated by our theoretical analysis and resulting design considerations, we introduce a novel selective update method: \textbf{MGUP} (\textbf{M}omentum-\textbf{G}radient alignment 
\textbf{U}pdate \textbf{P}olicy). \textbf{MGUP} updates parameters selectively and differentially by sorting the values of the element-wise product $\mathbf{m}_t \odot \mathbf{g}_t$. Specifically, the top $K$ parameters ranked by $\mathbf{m}_t \odot \mathbf{g}_t$ receive a scaled step size $\alpha\cdot\eta_t$ ($\alpha>1$), while the rest receive $\gamma\cdot\eta_t$ ($\gamma<1$), where $\eta_t$ is the base step size from the original optimizer. \textbf{MGUP} is inspired by the cautious update strategy, refining it in line with the principles of AdaBelief and Cautious Optimizers by dynamically adjusting update strength based on momentum-gradient alignment.

Our contributions are summarized as follows:

%\paragraph{Contributions.}
\begin{itemize}[leftmargin=1.5em]

\item We develop a novel selective parameter update mechanism, \textbf{MGUP}, which assigns larger step sizes to a subset of parameters and smaller ones to the rest. As a plug-and-play mechanism, \textbf{MGUP} can be integrated into momentum-based optimizers such as AdamW, Lion, and Muon, yielding variants we refer to as \textbf{MGUP-AdamW}, \textbf{MGUP-Lion}, and \textbf{MGUP-Muon}.

\item We establish the convergence of the Adam optimizer with the \textbf{MGUP} mechanism in the stochastic setting, providing theoretical guarantees for its reliability.
    
\item We validate the proposed \textbf{MGUP} optimizers through key experiments, including MAE pretraining of ViT-27M on CIFAR-10; autoregressive pretraining of LLaMA2-71M and Qwen2.5-150M on Wikitext-103; and fine-tuning of RoBERTa-base on GLUE and LLaMA2-7B for GSM-8K. These results show the robustness and versatility of \textbf{MGUP} across diverse models and tasks.
    
\end{itemize}

\vspace{-5pt}
\section{Related Work}
\vspace{-5pt}
In this section, we review the basic principles of stochastic optimization methods relevant to the momentum-gradient approach. We consider minimizing the objective function as follows:
\begin{equation}
\label{problem}
\min_{\mathbf{x}\in \mathbb{R}^d}\,f(\mathbf{x}),\mathrm{~where~}f(\mathbf{x})=\mathbb{E}_{\xi\sim\mathcal{D}}[f(\mathbf{x};\xi)].
\end{equation}
Here, $f: \mathbb{R}^d \rightarrow \mathbb{R}$ is a differentiable and possibly nonconvex function, $\xi$ represents a random vector, such as a training data point, sampled from an unknown data distribution $\mathcal{D}$.

In the context of solving problem (\ref{problem}),  momentum-based methods are foundational in large-scale machine learning optimization, accumulating past gradient information to accelerate convergence and navigate complex loss landscapes. The standard momentum update, an exponentially weighted moving average (EWMA) of gradients, is given by
$$\mathbf{m}_t = \beta_1 \mathbf{m}_{t-1} + (1-\beta_1)\mathbf{g}_t,$$
where $\beta_1$ is the decay factor, $\mathbf{m}_t$ denotes the momentum vector, and $\mathbf{g}_t$ denotes the gradient at the $t$-th iteration. This technique smooths gradient estimates, empirically and theoretically accelerating convergence and enhancing training stability \cite{10.5555/3042817.3043064,Chen2019DemonIN,Jelassi2022TowardsUH,Fu2023WhenAW}.

While standard momentum is a robust baseline, research has sought to improve it, primarily through: \textit{(i)} reducing stochastic gradient estimate variance and \textit{(ii)} adapting learning based on momentum and gradient characteristics.

Variance reduction techniques, such as SPIDER~\cite{DBLP:conf/nips/FangLLZ18}, STORM~\cite{DBLP:conf/nips/CutkoskyO19}, SUPER-ADAM~\cite{DBLP:conf/nips/HuangLH21}, and MARS~\cite{Yuan2024MARSUT}, operate by substituting the original stochastic gradient $\mathbf{g}_t$ with a gradient estimator $\mathbf{g}_t'$ that exhibits lower variance. This refined estimator is then used in the momentum update: $\mathbf{m}_t = \beta\mathbf{m}_{t-1} + (1-\beta)\mathbf{g}_t'$. While these methods theoretically accelerate convergence, they often necessitate additional computation or storage (e.g., storing past gradients). In contrast, \textbf{MGUP} adopts a distinct strategy, focusing on adaptively adjusting the update magnitude based on the characteristics of momentum and the current stochastic gradient, rather than directly altering the variance of the gradient estimation.

Another significant method involves adapting the optimization step based on the perceived reliability or characteristics of the momentum estimate. The intuition guiding this class of methods can be summarized as:
\begin{center}
\fbox{\begin{minipage}[t]{0.95\columnwidth}%
\itshape Increase step size for trustworthy momentum; Decrease step size for untrustworthy momentum.
\end{minipage}}
\end{center}
This adaptation is often implemented by modulating the momentum vector, which can be represented generally as:
\begin{equation}
  \mathbf{x}_{t+1}= \mathbf{x}_t - \eta_t\mathbf{m}_t  \odot \phi_t,
\end{equation}
where $\phi_t$ is a scaling factor, often applied element-wise, determined by gradient statistics.

Early adaptive methods, like Adagrad \cite{duchi2011adaptive}, introduced per-parameter learning rates by accumulating squared gradients. The widely adopted Adam optimizer \cite{DBLP:journals/corr/KingmaB14} builds on this by using EWMAs for both the first moment $\mathbf{m}_t$ and the second moment $\mathbf{v}_t$ of the gradients:
$$
\mathbf{v}_t = \beta_2 \mathbf{v}_{t-1} + (1-\beta_2)\mathbf{g}_t^2.
$$
The update step is then element-wise scaled by $1/\sqrt{\hat{\mathbf{v}}_t + \epsilon}$, with $\hat{\mathbf{v}}_t$ being a bias-corrected $\mathbf{v}_t$ and $\epsilon>0$ is a small constant. This enables Adam to adapt the learning rate per parameter based on historical gradient magnitudes. Subsequent research delved into various scaling factors, frequently investigating the interplay between the current gradient $\mathbf{g}_t$ and the accumulated momentum $\mathbf{m}_t$. The AdaBelief optimizer \cite{zhuang2020adabelief} modifies Adam's second moment by using the squared difference between momentum and the current gradient, $(\mathbf{m}_t-\mathbf{g}_t)^2$, instead of the raw squared gradient $\mathbf{g}_t^2$. The update rule for the second moment $\mathbf{v}_t$ is as follows, with the initial condition $\mathbf{v}_0 = 0$:
$$\textstyle\mathbf{v}_t = \beta_2 \mathbf{v}_{t-1} + (1-\beta_2)(\mathbf{m}_t-\mathbf{g}_t)^2=(1-\beta_2)\sum_{i=1}^t\beta_2^{t-i}(\mathbf{m}_i-\mathbf{g}_i)^2.$$
The term $(\mathbf{m}_t-\mathbf{g}_t)^2$ measures "belief" in the current gradient by its consistency with momentum. Significant deviation increases the corresponding element in $\mathbf{v}_t$, reducing that parameter's effective step size. This mechanism aims to merge Adam's rapid convergence with SGD's generalization. Denote $\mathbf{m}_{t,i}$ and $\mathbf{g}_{t,i}$ as the $i$-th elements of the momentum vector $\mathbf{m}_{t}$ and gradient vector $\mathbf{g}_{t}$, respectively. If $\mathbf{m}_{t,i}$ and $\mathbf{g}_{t,i}$ have different signs, $(\mathbf{m}_{t,i}-\mathbf{g}_{t,i})^2$ is typically larger than $\mathbf{g}_{t,i}^2$ (for similar magnitudes), increasing $\mathbf{v}_{t,i}$ and adaptively decreasing the step size. Meanwhile, a more direct approach to leveraging the sign consistency between momentum and gradient is taken by the Cautious Optimizers \cite{Liang2024CautiousOI}. It employs an element-wise mask $\varphi_t$ to selectively apply momentum updates:
$$
\begin{aligned}
    \varphi_t &= \alpha\cdot\mathbb{I}(\mathbf{m}_t \odot \mathbf{g}_t >0),\\
\mathbf{x}_{t+1} &= \mathbf{x}_t -\eta_t \mathbf{m}_t \odot \varphi_t.
\end{aligned}
$$
Here, $\mathbb{I}(\cdot)$ is the indicator function, which equals 1 when its argument is positive and 0 otherwise. If the signs $\mathbf{m}_{t,i}$ and $\mathbf{g}_{t,i}$ are align, the momentum component $\mathbf{m}_{t,i}$ is scaled by $\alpha>1$; otherwise, the update for that component is nullified. This "Cautious Updating" strategy aims to prevent updates from potentially conflicting gradient information.

However, these advanced adaptive methods have notable limitations. AdaBelief's reliance on second-moment estimation restricts its applicability primarily to Adam-style optimizers, thereby rendering it incompatible with newer methods like Lion~\cite{DBLP:conf/nips/ChenLHRW0DLHLL23} and Muon~\cite{jordan2024muon} that perform well without this component. The Cautious Optimizer, while more broadly applicable, lacks formal stochastic convergence guarantees. As analyzed in Section \ref{th}, its binary masking mechanism can aggressively discard gradient information. This behavior may slow convergence, especially in scenarios where the signs of the momentum and gradient align infrequently. Furthermore, Cautious Adam exhibits non-convergent behavior, highlighting a critical flaw in its design (see the counterexample in Appendix~\ref{appendix:ce}).

\begin{figure}[htbp]
\vspace{-5pt}
	\centering
\includegraphics[width=0.9\textwidth,height=0.24\textheight]{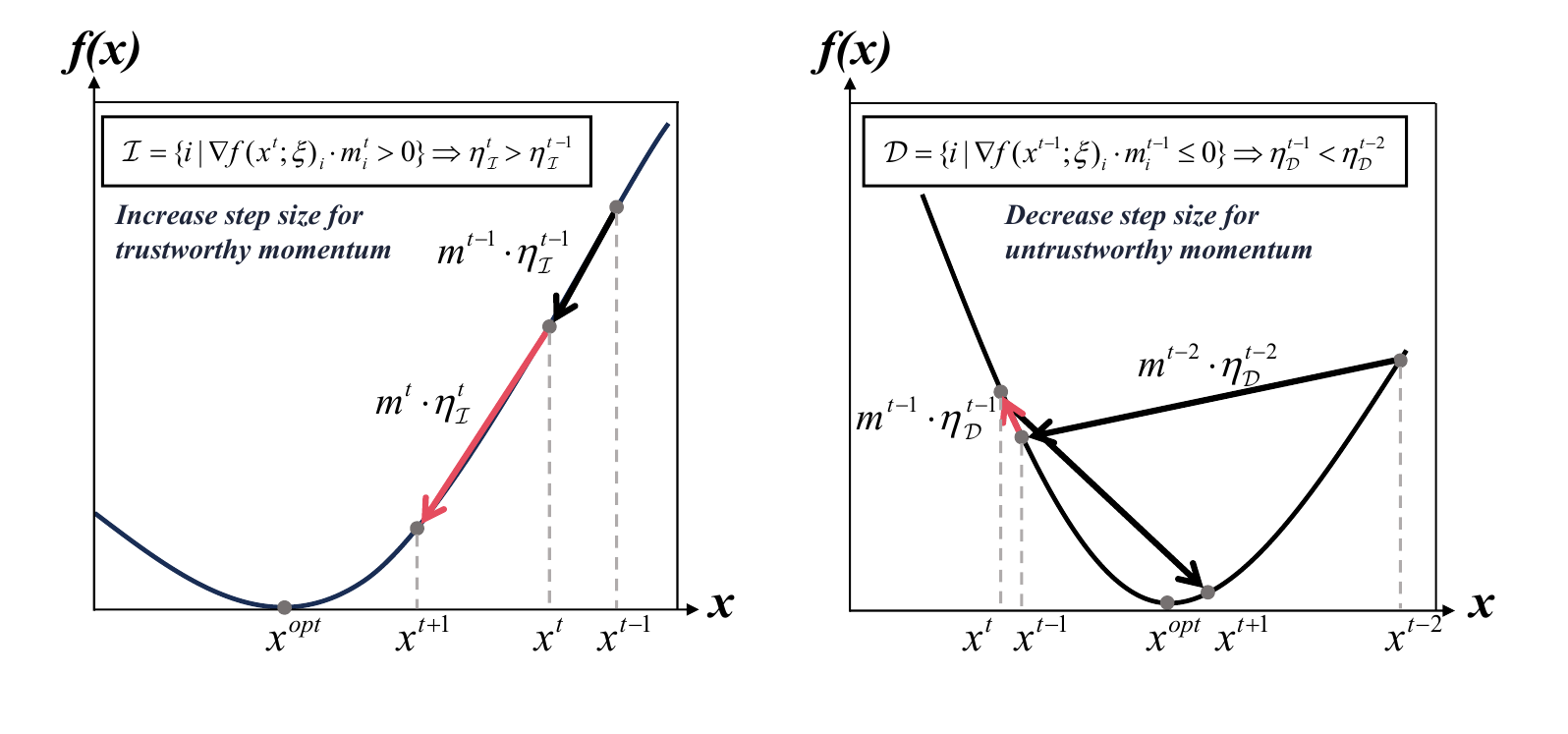}
   \vspace{-8pt}
   \caption{The key idea of MGUP involves adaptively adjusting the learning rate by leveraging the element-wise product of the stochastic gradient and momentum.}
    \label{fig:sketch}
\end{figure}

%%%%%%%%%%%

\vspace{-5pt}
\section{The Proposed Method}
\vspace{-5pt}
This section introduces the \textbf{MGUP} (\textbf{M}omentum-\textbf{G}radient alignment \textbf{U}pdate \textbf{P}olicy) mechanism for solving Problem (\ref{problem}). Our motivation is to address limitations observed in methods such as AdaBelief and Cautious Optimizers. Figure \ref{fig:sketch} provides a conceptual illustration of the \textbf{MGUP} idea. The pseudocode for a specific implementation variant, \textbf{MGUP-AdamW}, is detailed in Algorithm \ref{alg:MGUP-AdamW}. The core steps of \textbf{MGUP} are as follows; see Algorithm \ref{alg:MGUP} for the implementation.

$\blacktriangleright$ \textbf{Step 1} : Compute Alignment Scores. For each parameter $ i $, calculate its alignment score $\mathbf{s}_{t,i} = \mathbf{m}_{t,i} \cdot \mathbf{g}_{t,i}$.% 

$\blacktriangleright$ \textbf{Step 2}: Top $K$ Selection. Sort all parameters based on their alignment scores $\mathbf{s}_{t,i}$ in decreasing order, and identify the index set $\mathcal{I}_{\text{topK}}$ of the top $K$ entries, where $K=\lfloor \tau \cdot d \rfloor$ with $\tau\in(0,1)$.

$\blacktriangleright$ \textbf{Step 3}: Differentiated Update.  Adjust the step size $ \eta_{t,i} $ computed by the original optimizer as follows: \textit{(i)} If parameter $ i \in \mathcal{I}_{\text{topK}} $, its effective step size is set to $ \alpha \cdot \eta_{t,i} $.
\textit{(ii)} If parameter $ i \notin \mathcal{I}_{\text{topK}} $, its effective step size is set to $ \gamma \cdot \eta_{t,i} $. 
Here, $\alpha > 1$ represents the amplification factor, while $\gamma$ denotes the decay factor. In practice, $\alpha$ and $\gamma$ can be set to $1/\tau$ and $\tau$, respectively, where $\tau \in (0, 1)$.

An adjustment based on sign judgment is a concept from prior work (e.g., Cautious Optimizers \cite{Liang2024CautiousOI}). Similarly, we can define the Cautious-MGUP mechanism as:
\begin{equation}
\phi_{t,i} = \begin{cases} 
1/\tau & \text{if } \mathbf{m}_{t,i} \cdot \mathbf{g}_{t,i} > 0 \\ 
\tau & \text{if } \mathbf{m}_{t,i} \cdot \mathbf{g}_{t,i} \le 0 .
\end{cases}
\end{equation}
In contrast, \textbf{MGUP} offers a more flexible and robust adjustment strategy by introducing a top-$K$ selection and sorting mechanism based on the element-wise product between the momentum and gradient.% magnitude.

We clarify the selection criterion associated with the momentum. While intuitively related to momentum-gradient consistency, \textbf{MGUP-AdamW}'s implementation in Algorithm \ref{alg:MGUP-AdamW} can use the product of the final update vector $\mathbf{u}_t$ (typically $\mathbf{m}_t / (\sqrt{\mathbf{v}_t} + \epsilon)$) and the gradient $\mathbf{g}_t$, not just momentum $\mathbf{m}_t$ and gradient $\mathbf{g}_t$. This is because, in specific contexts, especially when training large language models, the difference between the selections based on $\mathbf{u}_{t,i} \cdot \mathbf{g}_{t,i}$ and $\mathbf{m}_{t,i} \cdot \mathbf{g}_{t,i}$ may be negligible. Research \cite{DBLP:conf/iclr/ZhaoMB0K25, DBLP:conf/iclr/ZhouZLWZY19, Wang2020AdaSGDBT, DBLP:conf/iclr/ZhangCLD0K0L025} suggests that within certain model layers, the second moment $\mathbf{v}_t$'s adaptive scaling might be relatively uniform. This implies an approximation where $(\mathbf{m}_1/\sqrt{\mathbf{v}_1}, \dots, \mathbf{m}_{d}/\sqrt{\mathbf{v}_d}) \approx (\mathbf{m}_1/c, \dots, \mathbf{m}_d/c)$ for some constant $c$. Consequently, the sign and relative magnitude ordering from $\mathbf{u}_{t,i} \cdot \mathbf{g}_{t,i}$ would closely mirror that from $\mathbf{m}_{t,i} \cdot \mathbf{g}_{t,i}$. Thus, \textbf{MGUP-AdamW} can be intuitively seen as a selection strategy guided by momentum-gradient alignment. 

\begin{remark}
For optimizers with simpler update structures, such as Lion, Muon, or standard SGD+Momentum, $\mathbf{m}_{t,i} \cdot \mathbf{g}_{t,i}$ can be directly used as the alignment score.
\end{remark}

\begin{remark}
We explain why \textbf{MGUP} is expected to accelerate convergence. \textit{(i)} \textbf{MGUP's mechanism uses a greedy strategy (due to sorting).} Greedy strategies play a key role in accelerating heuristic algorithms. When the stochastic gradient and the update share the same sign, their positive product favors larger step sizes; when their signs differ, the negative product favors smaller ones. Large steps drive acceleration, while small nonzero steps ensure convergence. The necessity of small yet nonzero step sizes is analyzed in Section~\ref{th}, with a counterexample in Appendix~\ref{appendix:ce} illustrating the failure of zero step sizes. \textit{(ii)} \textbf{MGUP increases the average update magnitude.} In \textbf{MGUP}, a fraction $\tau$ of parameters update with an increased learning rate of $({1}/{\tau})\text{lr}$, while the remaining $1-\tau$ use a reduced rate of $\tau\text{lr}$. The resulting average learning rate is $\tau \cdot \text{lr} \cdot (1/\tau) + (1-\tau) \cdot \text{lr} \cdot \tau = \text{lr} \cdot (1+\tau-\tau^2) > \text{lr}$. When individual step sizes are roughly uniform—for example, when Adam’s early updates approximate $\text{sign}(\mathbf{g}_t)$—the overall update magnitude of \textbf{MGUP} increases by a factor of $(1+\tau-\tau^2)$ compared to Adam. This provides an intuitive explanation for its acceleration effect. We recommend $\tau=\tfrac{1}{2}$ as the default, since $\arg\max_{\tau\in(0,1)}(1+\tau-\tau^2)=\tfrac{1}{2}$.

\end{remark}

% TODO Loss function 
\begin{table}[htbp]
\vspace{-15pt}
\centering
\label{tab:algorithms}
\begin{tabular}{p{0.5\textwidth}p{0.45\textwidth}}
\begin{minipage}[t]{0.45\textwidth}
\begin{algorithm}[H]
   \caption{\textbf{MGUP-AdamW}}
   \label{alg:MGUP-AdamW}
\begin{algorithmic}
   \STATE {\bf Input:} Learning rate $\eta > 0$, initial solution $\mathbf{x}_0\in \mathbb{R}^d$, momentum factors $\beta_1, \beta_2 \in [0, 1)$, weight decay coefficient $\lambda$, stability term $\epsilon > 0$, ratio $\tau\in(0,1)$. 

   \STATE Set $\mathbf{m}_0 = 0 $, $\mathbf{v}_0 = 0$. 
   \FOR{$t = 1$ {\bf to} $T$}
      \STATE Compute the stochastic gradient $\mathbf{g}_t = \nabla f(\mathbf{x}_t;\xi_t)$
      \STATE $\mathbf{m}_t = \beta_1 \mathbf{m}_{t-1} + (1 - \beta_1) \mathbf{g}_t$
      \STATE $\mathbf{v}_t = \beta_2 \mathbf{v}_{t-1} + (1 - \beta_2)(\mathbf{g}_t \odot \mathbf{g}_t)$
       \STATE $\mathbf{u}_t = \frac{\mathbf{m}_t}{\sqrt{\mathbf{v}}_t + \epsilon}$, $\eta_t = \eta \frac{\sqrt{1-\beta_2^t}}{1-\beta_1^t}$
      \STATE $\phi_t = \textbf{MGUP}(\mathbf{u}_t \odot \mathbf{g}_t)$
       \STATE $\mathbf{x}_{t} = (1 - \eta_t\lambda  )\mathbf{x}_{t}$
      \STATE $\mathbf{x}_{t+1} = \mathbf{x}_{t} -\eta_t \phi_t \odot \mathbf{u}_t$
   \ENDFOR
\end{algorithmic}
\end{algorithm}
\end{minipage} 
& 
\begin{minipage}[t]{0.45\textwidth}
\begin{algorithm}[H]
   \caption{\textbf{MGUP}}
   \label{alg:MGUP}
    \begin{algorithmic}
    \STATE \textbf{Input:} Alignment score vector $\mathbf{s}_t=\mathbf{u}_t \odot \mathbf{g}_t\in \mathbb{R}^d$, ratio $\tau \in (0,1)$.
    \STATE (\textbf{S1}) Let $\mathcal{I}_{\text{topK}}$ be the index set of the largest  $K$ elements of $\mathbf{s}_t \in \mathbb{R}^d$ with $K = \lfloor \tau \cdot d \rfloor$.
    
    \STATE (\textbf{S2}) Set $\phi_{t,i}=\left\{
  \begin{array}{ll}
    1/\tau, & \hbox{$i \in \mathcal{I}_{\text{topK}}$;} \\
    \tau, & \hbox{else.}
  \end{array}
\right.$
    \RETURN $\phi_t$
\end{algorithmic}
\end{algorithm}
\smallskip
\begin{remark}

The \textbf{MGUP} method can be easily plugged into existing momentum-based optimization algorithms in a plug-and-play manner. Examples include Lion~\cite{DBLP:conf/nips/ChenLHRW0DLHLL23} and Muon~\cite{jordan2024muon} (see Appendix \ref{alg:other} for the pseudocode of \textbf{MGUP-Lion} and \textbf{MGUP-Muon}).

\end{remark}
\end{minipage} \\
% \bottomrule
\end{tabular}
\end{table}

\vspace{-5pt}
\section{Convergence Analysis}
\label{th}
\vspace{-5pt}
In this section, we rigorously establish both the expected convergence and high-probability convergence guarantees for Algorithm \ref{alg:MGUP-AdamW} in the stochastic setting.

For the convergence analysis of Algorithm \ref{alg:MGUP-AdamW}, we make the following assumptions:
\begin{assumption}
\label{ass:1}
The function $f$ is bounded from below. There exists $f^* > -\infty$ such that $f(\mathbf{x}) \geq f^*$, for all $\mathbf{x} \in \mathbb{R}^d$.
\end{assumption}

\begin{assumption}
\label{ass:3}
The function $f$ is $L$-smooth: $\|\nabla f(\mathbf{y}) - \nabla f(\mathbf{x})\| \leq L \|\mathbf{y} - \mathbf{x}\|$.    
\end{assumption}

Assumptions \ref{ass:1} and \ref{ass:3} are standard in the analysis of nonconvex optimization \cite{Zhou2018OnTC,DBLP:conf/iclr/ChenLSH19,DBLP:conf/nips/HuangLH21,Guo2021ANC,DBLP:conf/nips/LiRJ23}.   %DBLP:conf/nips/WangFZ0023,10586270

\label{sectin:4}
% assumption
We first show that, under the following standard assumption, the \textbf{MGUP-AdamW}(without weight decay) can achieve the expected convergence rate of $\mathcal{O}(\log(T)/\sqrt{T})$.

\begin{assumption}
\label{ass:2}

The stochastic gradient is unbiased with bounded variance. That is, there exists $\sigma>0$ such that for all $\mathbf{x} \in \mathbb{ R}^d$, $\mathbb{E}[\nabla f(\mathbf{x};\xi)]=\nabla f(\mathbf{x})$, and $\mathbb{E}[\|\nabla f(\mathbf{x};\xi) -\nabla f(\mathbf{x})\|_2^2]\leq \sigma^2$. Additionally, we assume that $f(\mathbf{x};\xi)$ is $M$-Lipschitz, i.e., $\|\nabla f(\mathbf{x};\xi)\| \leq M$ for all $\mathbf{x}$ and $\xi$.

\end{assumption}

Assumption \ref{ass:2} is very common in the literature \cite{DBLP:conf/nips/LiRJ23,DBLP:conf/nips/WangFZ0023,10586270}. Theorem \ref{th:Econv} states our general non-convex convergence result.

\begin{theorem}
\label{th:Econv}
Let $\beta_{1,t} = 1-t^{-1/2}$, $0 <\beta_{2}\le 1$, and $\eta_t =\eta t^{-1/2}/\rho $. We define the following: $\varepsilon_1 = \frac{\sigma^2}{L}$, $\varepsilon_2 = \frac{1}{\rho}\left(\frac{u_{\min}^2}{2u_{\max}^3} - \frac{5L}{\rho u_{\min}^2}\right)$, and $\varepsilon_3 = \frac{1}{2L}$. Here,  $u_{\min} = \frac{\epsilon}{\eta}$ and $u_{\max} = \frac{M}{\eta\gamma}$ for some constant learning rate  $\eta$ and any $\epsilon>0$. 
Let $\rho > \frac{10L u_{\max}^3}{u_{\min}^4}$ so that $\varepsilon_2 > 0$, and define $\varepsilon_{\min} = \min(\varepsilon_1, \varepsilon_2, \varepsilon_3)$. Under Assumptions \ref{ass:1}, \ref{ass:2}, and \ref{ass:3}, for Algorithm \ref{alg:MGUP-AdamW} (without weight decay), it holds that:
$$
\textstyle\frac{1}{T}\sum_{t=1}^{T}\mathbb{E}\|\nabla f(\mathbf{x}_{t+1})\|_2^2\le \hat{G}.
$$
where $\hat{G} = \frac{3L^2\eta^2 + 3\rho^2 \epsilon^2}{\rho^2 \epsilon^2 T} \left( \frac{f(\mathbf{x}_1) - f(\mathbf{x}^*) + 2\sigma^2 L^{-1} \log(T+1)}{\varepsilon_{\min}} \sqrt{T} - 2(\sqrt{T} - 1) \right)$.
\end{theorem}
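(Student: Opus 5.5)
The plan is a Lyapunov (potential-function) argument in the style of the Adam-family analysis of Guo et al. I will treat Algorithm~\ref{alg:MGUP-AdamW} without weight decay as a preconditioned momentum method
$$\mathbf{x}_{t+1}=\mathbf{x}_t-\eta_t H_t\mathbf{m}_t,\qquad H_t=\mathrm{diag}\!\Big(\frac{\phi_t}{\sqrt{\mathbf{v}_t}+\epsilon}\Big).$$
The only property of the MGUP mask I need is that $\phi_{t,i}\in\{\gamma,1/\tau\}$ is bounded above and below. The top-$K$ selection is allowed to be arbitrary and data dependent.

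\textbf{Step 1: uniform bounds on $H_t$.} I first show that, after the rescaling by $\rho$ built into $\eta_t$, the diagonal entries of the effective preconditioner lie in a fixed interval determined by $u_{\min}=\epsilon/\eta$ and $u_{\max}=M/(\eta\gamma)$.
\begin{itemize}
\item The upper side uses $\sqrt{\mathbf{v}_{t,i}}+\epsilon\ge\epsilon$.
\item The lower side uses $\sqrt{\mathbf{v}_{t,i}}\le M$, which follows from Assumption~\ref{ass:2} since $\mathbf{v}_t$ is a convex combination of squared gradients, together with $\phi_{t,i}\ge\gamma$.
\end{itemize}
This gives $\|H_t\mathbf{m}_t\|\le\|\mathbf{m}_t\|/u_{\min}$ and $\langle \mathbf{a},H_t\mathbf{a}\rangle\ge \|\mathbf{a}\|^2/u_{\max}$, up to the $\rho$ normalization. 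Bias correction only changes constants, and I will absorb it into $\eta_t$ as the statement does.

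\textbf{Step 2: descent inequality.} By $L$-smoothness,
$$f(\mathbf{x}_{t+1})\le f(\mathbf{x}_t)-\eta_t\langle\nabla f(\mathbf{x}_t),H_t\mathbf{m}_t\rangle+\tfrac{L\eta_t^2}{2}\|H_t\mathbf{m}_t\|^2.$$
I will write $\nabla f(\mathbf{x}_t)=\mathbf{m}_t+(\nabla f(\mathbf{x}_t)-\mathbf{m}_t)$ and apply Young's inequality weighted by $H_t$. This yields a negative term of order $-\eta_t\|\mathbf{m}_t\|^2u_{\min}^2/u_{\max}^3$ and a positive term $\eta_t\|\nabla f(\mathbf{x}_t)-\mathbf{m}_t\|^2$, up to constants. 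The mismatch between the bounds on $\|H_t\|$ and $H_t^{-1}$ is what produces the ratio $u_{\min}^2/u_{\max}^3$ in $\varepsilon_2$.

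\textbf{Step 3: momentum error recursion.} For the error $\delta_t=\mathbf{m}_t-\nabla f(\mathbf{x}_t)$ I will use the standard expansion with $1-\beta_{1,t}=t^{-1/2}$:
$$\mathbb{E}\|\delta_{t+1}\|^2\le(1-\beta_{1,t+1})\,\mathbb{E}\|\delta_t\|^2+\frac{2L^2}{1-\beta_{1,t+1}}\,\mathbb{E}\|\mathbf{x}_{t+1}-\mathbf{x}_t\|^2+2(1-\beta_{1,t+1})^2\sigma^2 .$$
Here I use unbiasedness to kill the cross term, the variance bound, and Step 1 to control $\|\mathbf{x}_{t+1}-\mathbf{x}_t\|^2\le\eta_t^2\|\mathbf{m}_t\|^2/u_{\min}^2$. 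I then form the potential
$$\Phi_t=f(\mathbf{x}_t)+\tfrac{1}{2L}\|\delta_t\|^2 .$$
Adding the Step~3 recursion to Step~2, the $\|\delta_t\|^2$ terms cancel because $\eta_t\propto 1-\beta_{1,t}$. The $\|\mathbf{m}_t\|^2$ terms keep a net negative coefficient $\eta_t\varepsilon_2$ once $\rho>10Lu_{\max}^3/u_{\min}^4$; this is where the factor $5L/(\rho u_{\min}^2)$ in $\varepsilon_2$ comes from. The noise leaves $\sum_t \sigma^2 t^{-1}/L\le \sigma^2L^{-1}\log(T+1)$. Telescoping and dividing by $\varepsilon_{\min}$ bounds $\sum_t t^{-1/2}\,\mathbb{E}(\|\mathbf{m}_t\|^2+\|\delta_t\|^2)$ by $(f(\mathbf{x}_1)-f^*+2\sigma^2L^{-1}\log(T+1))/\varepsilon_{\min}$. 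The constants $\varepsilon_1,\varepsilon_3$ come from the weights on the $\sigma^2$ and $\|\delta_t\|^2$ terms.

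\textbf{Step 4: back to the gradient norm.} I will use the three-term split
$$\|\nabla f(\mathbf{x}_{t+1})\|^2\le 3L^2\|\mathbf{x}_{t+1}-\mathbf{x}_t\|^2+3\|\delta_t\|^2+3\|\mathbf{m}_t\|^2 ,$$
with $\|\mathbf{x}_{t+1}-\mathbf{x}_t\|^2\le \eta^2\|\mathbf{m}_t\|^2/(\rho^2\epsilon^2)$. This produces the prefactor $(3L^2\eta^2+3\rho^2\epsilon^2)/(\rho^2\epsilon^2)$. The telescoped bound is weighted by $t^{-1/2}\ge T^{-1/2}$, so multiplying through by $\sqrt{T}$ and dividing by $T$ converts it to an average. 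Tracking the sums $\sum_t t^{-1/2}$ exactly should give the correction $-2(\sqrt T-1)$, since $\sum_{t\le T}t^{-1/2}\ge 2(\sqrt{T+1}-1)$.

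\textbf{Main obstacle.} The delicate point is Step 2. The mask $\phi_t$ depends on $\mathbf{g}_t$, so $H_t$ is not $\mathcal{F}_{t-1}$-measurable, and the cross term $\langle\nabla f(\mathbf{x}_t),H_t\mathbf{m}_t\rangle$ cannot be handled by conditioning. My plan is never to take expectations through $H_t$. I will use only deterministic spectral bounds on $H_t$ and express everything through $\mathbf{m}_t$ and $\delta_t$, whose recursion (Step 3) involves $\mathbf{g}_t$ but not $H_t$. If the weighted Young step does not close with exactly the stated $\varepsilon_2$, I will adjust the Young parameter while keeping the condition $\rho>10Lu_{\max}^3/u_{\min}^4$ as the target.
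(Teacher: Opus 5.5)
Your outline follows the paper's proof. Step~2 is Lemma \ref{lemma:a3}. Step~3 is Lemma \ref{lemma:a4}, with potential $P_t=f(\mathbf{x}_t)-f(\mathbf{x}^*)+\frac{2}{L}\mathbb{E}\|\mathbf{s}_t\|^2$. Step~4 is the three-term split of Appendix \ref{appendix:b}. Your way around the $\mathbf{g}_t$-dependent mask is also exactly the paper's: all dependence on the preconditioner is removed by deterministic bounds, and expectation is used only in the $\delta_t$ recursion, whose cross term vanishes because $\beta_{1,t}\delta_{t-1}+\beta_{1,t}(\nabla f(\mathbf{x}_{t-1})-\nabla f(\mathbf{x}_t))$ is $\mathcal{F}_{t-1}$-measurable.

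Your $H_t$-weighted Young step actually gives a better coefficient, $1/(2\rho u_{\max})$ on $t^{-1/2}\|\mathbf{m}_t\|^2$. The paper's $u_{\min}^2/u_{\max}^3$ is only the loss in (\ref{eq:3}), with $\mathbf{r}_t=-\mathbf{m}_t$. The cost is a coefficient $1/(2\rho u_{\min})$ instead of $1/(2L)$ on $t^{-1/2}\|\delta_t\|^2$, which is harmless once $\rho u_{\min}\ge L$.

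Step~3 does need repair, for three reasons.
\begin{itemize}
\item The factor on $\mathbb{E}\|\delta_t\|^2$ is $\beta_{1,t+1}=1-(t+1)^{-1/2}$, not $1-\beta_{1,t+1}$.
\item The $\|\delta_t\|^2$ terms must leave a surplus rather than cancel, because Step~4 needs a bound on $\sum_t t^{-1/2}\mathbb{E}\|\delta_t\|^2$. With weight $1/(2L)$ the decrement is only $\frac{1}{2L}(t+1)^{-1/2}\|\delta_t\|^2$, so the surplus is strictly below $\varepsilon_3=1/(2L)$. The paper instead uses weight $2/L$ and Lemma \ref{lemma:a1} to handle the shift from $(t+1)^{-1/2}$ to $t^{-1/2}$.
\item The paper uses the sharp recursion $\mathbb{E}\|\mathbf{s}_t\|^2\le\beta_{1,t}\mathbb{E}\|\mathbf{s}_{t-1}\|^2+\frac{L^2}{1-\beta_{1,t}}\mathbb{E}\|\mathbf{x}_t-\mathbf{x}_{t-1}\|^2+(1-\beta_{1,t})^2\sigma^2$. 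Your factors of $2$ would double both the $\sigma^2$ term and the $4L$ part of $5L=L+4L$.
\end{itemize}

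The genuine gap is the subtracted $2(\sqrt T-1)$ in $\hat G$. Nothing in your argument produces it. Telescoping bounds $\sum_t t^{-1/2}\mathbb{E}(\|\mathbf{m}_t\|^2+\|\delta_t\|^2)$ by $\bigl(\mathbb{E}\Phi_1-f^*+2\sigma^2L^{-1}\log(T+1)\bigr)/\varepsilon$. Multiplying by $\sqrt T$ then gives $\hat G$ without the correction. The estimate $\sum_{t\le T}t^{-1/2}\ge 2(\sqrt{T+1}-1)$ helps only if $\sum_t t^{-1/2}$ enters with a minus sign.

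In the paper it enters through the term $\varepsilon_1t^{-1}$ on the left of (\ref{eq:6}). The proof of Lemma \ref{lemma:a4} adds $\varepsilon_1t^{-1}$ to both sides of the Lemma \ref{lemma:a3} bound, then drops it from the right when passing to $P_t-P_{t+1}$. Kept on both sides it cancels, so that step is invalid and you cannot honestly reproduce it. The theorem also replaces $P_1$ by $f(\mathbf{x}_1)-f(\mathbf{x}^*)$, discarding $\frac{2}{L}\mathbb{E}\|\mathbf{m}_1-\nabla f(\mathbf{x}_1)\|^2\le 2\sigma^2/L$; your Step~3 likewise drops $\frac{1}{2L}\mathbb{E}\|\delta_1\|^2$.

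What the argument honestly gives, with the repairs above, is
\[
\frac{1}{T}\sum_{t=1}^{T}\mathbb{E}\|\nabla f(\mathbf{x}_{t+1})\|_2^2\le\frac{3L^2\eta^2+3\rho^2\epsilon^2}{\rho^2\epsilon^2\sqrt{T}}\cdot\frac{f(\mathbf{x}_1)-f^*+\frac{2}{L}\mathbb{E}\|\mathbf{m}_1-\nabla f(\mathbf{x}_1)\|^2+2\sigma^2L^{-1}\log(T+1)}{\min(\varepsilon_2,\varepsilon_3)} .
\]
This has the same $\tilde{\mathcal{O}}(T^{-1/2})$ rate and does not need $\varepsilon_1$. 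However, it exceeds $\hat G$ for $T>1$ whenever $\varepsilon_1\ge\min(\varepsilon_2,\varepsilon_3)$. You should state and prove this bound rather than assert the correction.
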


\begin{remark}
\label{remark1}
Convergence relies on the condition $\rho > \frac{10L u_{\max}^3}{u_{\min}^4}$. Notably, if $\gamma$ can be $0$, $u_{\max}$ approaches infinity, making $\frac{10L u_{\max}^3}{u_{\min}^4}$ unbounded. This renders the condition $\rho > \frac{10L u_{\max}^3}{u_{\min}^4}$ ill-defined, as $\rho$ would need to be infinitely large, which is unattainable and adversely affects convergence.
\end{remark}

\begin{remark}
\label{remark2}
While our analysis assumes global Lipschitz continuity, the algorithm can be implemented using $M_T = \max_{j\in[T]} \|\nabla f(\mathbf{x}_j;\xi)\|$ instead of a global bound M. This approach only requires bounded gradients along the optimization trajectory, typically yields tighter bounds, and remains fully compatible with our theoretical guarantees. Furthermore, setting $\eta_t = \eta \cdot t^{-1/2}/\rho$ instead of $\eta  \frac{\sqrt{1-\beta_2^t}}{1-\beta_1^t} \cdot t^{-1/2}/\rho$ is justified since $\frac{\sqrt{1-\beta_2^t}}{1-\beta_1^t}$ is bounded and can be absorbed into the constant $\eta$ without loss of generality. See Appendix \ref{appendix:a} for details.
\end{remark}

Next, under the assumption of coordinate-wise random noise, we show that the \textbf{MGUP-AdamW}(without weight decay) also achieves a rate of $\mathcal{O}(\text{poly}(\log(T))/\sqrt{T})$ with high probability.
\begin{assumption}
\label{ass:2.2}

The stochastic gradient is unbiased with coordinate-wise bounded variance. That is, there exists $\sigma_i>0$ such that for all $\mathbf{x} \in \mathbb{R}^d$, $\mathbb{E}[\nabla f(\mathbf{x};\xi)]=\nabla f(\mathbf{x})$, and $\mathbb{E}[(\nabla f(\mathbf{x};\xi)_i - \nabla f(\mathbf{x})_i)^2] \leq \sigma_i^2$ for all $i$.% Additionally, we assume that $f(\mathbf{x})$ is M-Lipschitz for all $\mathbf{x}$ that $\| \nabla f(\mathbf{x})\| \leq M$ for all $\mathbf{x}$.

\end{assumption}

Assumption \ref{ass:2.2} is commonly used in the literature \cite{DBLP:journals/tmlr/00690Z25,DBLP:conf/nips/HongL24,li2020high,DBLP:journals/tmlr/DefossezBBU22,Hong2023HighPC,DBLP:conf/nips/WangFZ0023}. Note that the coordinate-wise noise bound in Assumption \ref{ass:2.2} is stronger than the standard bound $\mathbb{E}\|\nabla f(\mathbf{x};\xi) - \nabla f(\mathbf{x})\|_2^2 \leq \sigma^2$, as the latter can be readily derived from the former. This relaxed choice is made to facilitate the application of probabilistic inequalities, thereby achieving improved convergence properties.

\begin{theorem}
\label{th:Highconv}
Let $0 \leq \beta_1 < \beta_2 < 1$, $\beta_2=1-1/T$, $\eta = C_0 \sqrt{1 - \beta_2}$, $\omega = (\sqrt{1+1/\beta_2}+1)\max\{1,\gamma,1/\gamma\}$, $\gamma \in(\frac{2}{\beta},1)$, and $
\beta_3 = \max \left\{\frac{1-\beta_2}{\sqrt{1-\beta_2}},\frac{2-\gamma^2(1+\beta_2)}{\gamma\sqrt{1-\beta_2}},\frac{|\beta_2 - \gamma^2|+1-\gamma^2}{\gamma\sqrt{1-\beta_2}}\right\}$ for some constants $C_0 > 0,\beta > 2$. Under Assumptions \ref{ass:1},\ref{ass:3}, and \ref{ass:2.2}, for Algorithm \ref{alg:MGUP-AdamW}(without weight decay), then for any given $\delta \in (0, 1/2)$, it holds that with probability at least $1-2\delta$,
$$\textstyle\frac{1}{T} \sum_{t=1}^{T} \|\nabla f(\mathbf{x}_t)\|_2^{2} \leq \tilde{\mathcal{O}}(T^{-1/2}).$$
% where $G^2 \sim \mathcal{O}(\text{poly}(\log T))$. 
\end{theorem}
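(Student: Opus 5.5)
We would follow the high-probability template for Adam under coordinate-wise noise (Défossez et al.; Hong \& Lin; Wang et al.), and treat the MGUP mask $\phi_t\in\{\tau,1/\tau\}^d$ as a bounded, $\mathcal{F}_t$-measurable diagonal perturbation of the step size. By Remark~\ref{remark2} we drop bias corrections and write $\mathbf{x}_{t+1}=\mathbf{x}_t-\eta\,\phi_t\odot \mathbf{m}_t/(\sqrt{\mathbf{v}_t}+\epsilon)$. Here the mask entries lie in $[\min\{\gamma,1/\gamma\},\max\{1,\gamma,1/\gamma\}]$, which is why $\omega$ carries the factor $\max\{1,\gamma,1/\gamma\}$. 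The first step is the descent lemma from Assumption~\ref{ass:3}:
\[
f(\mathbf{x}_{t+1})\le f(\mathbf{x}_t)-\eta\Big\langle \nabla f(\mathbf{x}_t),\phi_t\odot\tfrac{\mathbf{m}_t}{\sqrt{\mathbf{v}_t}+\epsilon}\Big\rangle+\tfrac{L\eta^2}{2}\Big\|\phi_t\odot\tfrac{\mathbf{m}_t}{\sqrt{\mathbf{v}_t}+\epsilon}\Big\|^2 .
\]
We will then pass to the auxiliary sequence $\mathbf{y}_t=(\mathbf{x}_t-\beta_1\mathbf{x}_{t-1})/(1-\beta_1)$, or alternatively unroll $\mathbf{m}_t$. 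This reduces the first-order term to one driven by $\mathbf{g}_t$ scaled by a proxy step size $\mathbf{a}_t=\phi_t/\sqrt{\tilde{\mathbf{v}}_t}$, where $\tilde{\mathbf{v}}_t=\beta_2\mathbf{v}_{t-1}+(1-\beta_2)(\nabla f(\mathbf{x}_t)^2+\sigma^2)$ is predictable. The remaining pieces are a correction term from $\mathbf{a}_t$ differing from the true step, plus momentum-mismatch terms.

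Next we decompose the first-order term into three parts.

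\emph{Main descent term.} This is $-\eta\sum_i \phi_{t,i}\,\partial_i f(\mathbf{x}_t)^2/\sqrt{\tilde{\mathbf{v}}_{t,i}}$, which is at least $\gamma$ times its Adam counterpart in magnitude.

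\emph{Martingale term.} This is $\eta\sum_i \phi_{t,i}\,\partial_i f\,(\partial_i f-g_{t,i})/\sqrt{\tilde{\mathbf{v}}_{t,i}}$. We will bound it with a sub-Gaussian/Freedman concentration inequality, using that the increment is bounded relative to the descent term. This holds with probability $1-\delta$.

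\emph{Error term.} This comes from $1/\sqrt{\tilde{\mathbf{v}}_t}-1/\sqrt{\mathbf{v}_t}$ and is controlled via $|\tilde v-v|$ by the standard Défossez-type inequality. It is absorbed into half the main term plus $\sum_t\|\mathbf{g}_t/\sqrt{\mathbf{v}_t}\|^2$-type sums.

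The potential-sum lemma gives $\sum_t(1-\beta_2)g_{t,i}^2/\mathbf{v}_{t,i}\le \log(\mathbf{v}_{T,i}/\mathbf{v}_{0,i})+T\log(1/\beta_2)$. We use it for the second-order term and the momentum terms; the ratio $\beta_1<\beta_2$ controls $\|\mathbf{m}_t/\sqrt{\mathbf{v}_t}\|\le (1-\beta_1)/\sqrt{(1-\beta_2)(1-\beta_1^2/\beta_2)}$. The constant $\beta_3$ in the statement appears when comparing $\phi_t$-scaled consecutive proxies. We would set it to bound terms of the form $|\phi_{t,i}^2\beta_2-\phi_{t-1,i}^2|/\sqrt{1-\beta_2}$, with the condition $\gamma>2/\beta$ ensuring positivity of the surviving descent coefficient.

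Summing over $t$ and telescoping $f$ with Assumption~\ref{ass:1}, using $\eta=C_0\sqrt{1-\beta_2}=C_0/\sqrt{T}$ and $\beta_2=1-1/T$ so that $T\log(1/\beta_2)=\mathcal{O}(1)$, we get with probability $1-\delta$
\[
\sum_t \|\nabla f(\mathbf{x}_t)\|^2/\max_i\sqrt{\tilde{\mathbf{v}}_{t,i}}\le \mathrm{poly}(\log T,\log(1/\delta))\sqrt{T}.
\]

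To convert this into an unweighted bound, we need a high-probability upper bound on $\mathbf{v}_{t,i}$. That is where the second $\delta$ goes: a second concentration step bounding $\max_t\|\mathbf{g}_t-\nabla f(\mathbf{x}_t)\|$ under Assumption~\ref{ass:2.2}. We would then bound $\|\nabla f(\mathbf{x}_t)\|$ along the trajectory by an induction using $L$-smoothness and the bounded step $\eta\omega$ per coordinate. This yields $\sqrt{\tilde{\mathbf{v}}_{t,i}}\le G=\mathrm{poly}\log T$, and hence the $\tilde{\mathcal{O}}(T^{-1/2})$ bound.

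The main obstacle is the martingale/error decomposition. Because of the top-$K$ sort, $\phi_t$ depends on $\mathbf{g}_t$, so $\phi_t$ is not predictable. We will handle this by writing $\phi_t=\phi^{\mathrm{pred}}+(\phi_t-\phi^{\mathrm{pred}})$, with $\phi^{\mathrm{pred}}\equiv\gamma$ on every coordinate, giving a predictable lower envelope. The extra nonnegative part $(\phi_t-\gamma)\ge0$ is selected where $u_{t,i}g_{t,i}$ is large. We therefore hope to show its contribution $-\eta(\phi_{t,i}-\gamma)u_{t,i}g_{t,i}$ is favorable, or else bounded by $\omega$ times the potential-sum terms. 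This worst-case bounding is exactly what inflates the constants by $\max\{1,\gamma,1/\gamma\}$ and $\beta_3$.
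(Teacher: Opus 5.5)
\textbf{Overall.} Your overall architecture matches the paper's: the auxiliary sequence $\mathbf{y}_s=\mathbf{x}_s+\frac{\beta_1}{1-\beta_1}(\mathbf{x}_s-\mathbf{x}_{s-1})$, a predictable proxy denominator, a split into a $\gamma$-weighted descent term, a noise term and a proxy-error term, the potential sums of Lemma~\ref{lemma:3}, and two events of probability $1-\delta$. For the proxy, the paper uses $G_T(s)^2$ rather than your $\nabla f(\mathbf{x}_t)^2+\sigma^2$, so that $\mathbf{a}_s\ge\mathbf{b}_s$ on the good event. However, the proposal has two genuine gaps, described below.

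\textbf{Gap 1: the non-predictable mask.} This is the step you yourself call the main obstacle, and it is not closed. Use the paper's normalization $\phi_s\in\{\gamma,1\}^d$ and write $\phi_s=\gamma\mathbf{1}_d+(\phi_s-\gamma\mathbf{1}_d)$. The leftover is
\[
-\sum_{s}\eta_s\Big\langle \nabla f(\mathbf{x}_s),(\phi_s-\gamma\mathbf{1}_d)\odot\frac{\xi_s}{\mathbf{a}_s}\Big\rangle .
\]
Its conditional mean can be as large as $(1-\gamma)\eta_s|\nabla f(\mathbf{x}_s)_i|\,\mathbb{E}[|\xi_{s,i}|\mid\mathcal{F}_{s-1}]/\mathbf{a}_{s,i}$ per coordinate. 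A $\xi_s$-dependent mask that puts the large step where $\nabla f(\mathbf{x}_s)_i\xi_{s,i}<0$ attains this order. The term carries the same weight $\eta_s/\mathbf{a}_{s,i}$ as the descent term $\gamma\eta_s\nabla f(\mathbf{x}_s)_i^2/\mathbf{a}_{s,i}$. So it dominates once $|\nabla f(\mathbf{x}_s)_i|$ falls below a constant multiple of the noise level, and a worst-case bound leaves an $\mathcal{O}(\|\sigma\|^2)$ floor instead of $\tilde{\mathcal{O}}(T^{-1/2})$.

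Your fallback of charging the term to the potential sums also fails. The proxy-error term is absorbable only because $|1/\mathbf{a}_{s,i}-1/\mathbf{b}_{s,i}|\le G_T(s)\sqrt{1-\beta_2}/(\mathbf{a}_{s,i}\mathbf{b}_{s,i})$ supplies a factor $\sqrt{1-\beta_2}$, which offsets the $(1-\beta_2)^{-1}$ in Lemma~\ref{lemma:3}. Your leftover lacks that factor, so Young's inequality plus Lemma~\ref{lemma:3} bound it only by $\mathcal{O}(\sqrt{T}\log T)$, not $\mathcal{O}(\log T)$.

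Nor is the favorable sign automatic for MGUP. Since $\mathbf{u}_s\odot\mathbf{g}_s=(\beta_1\mathbf{m}_{s-1}\odot\mathbf{g}_s+(1-\beta_1)\mathbf{g}_s^2)/\mathbf{b}_s$, the top-$K$ rule favors noise aligned with $\mathbf{m}_{s-1}$, not with $\nabla f(\mathbf{x}_s)$. These disagree in sign in the oscillating-momentum regime of Appendix~\ref{appendix:ce}. Closing the step needs either an explicit hypothesis that the mask is $\mathcal{F}_{s-1}$-measurable, or a genuine analysis of the top-$K$ rule.

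The paper does not resolve this either. Lemma~\ref{lemma:9} applies Lemma~\ref{lemma:2} to $\hat X_{s,i}=-\eta_s\phi_{s,i}\nabla f(\mathbf{x}_s)_i\xi_{s,i}/\mathbf{a}_{s,i}$ using only $\phi_{s,i}\le 1$, i.e.\ it tacitly treats $\phi_s$ as predictable. You were right to flag the issue, but your proposal inherits the hole rather than filling it.

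\textbf{Gap 2: the polylogarithmic bound on $G$.} Your route to $G=\mathrm{poly}(\log T)$ fails. Inducting along the trajectory with $L$-smoothness and a per-step displacement bound is exactly the paper's Lemma~\ref{lemma:5}. Since $|\mathbf{m}_{s,i}/\mathbf{b}_{s,i}|$ can be of order $(1-\beta_2)^{-1/2}=\sqrt{T}$, the per-coordinate step is $\mathcal{O}(C_0)$, not $\mathcal{O}(\eta\omega)$. Note that $\omega$ bounds $\|\Delta_s\|_\infty$ through the ratio $\phi_{s,i}/\phi_{s-1,i}\in\{1,\gamma,1/\gamma\}$; it does not bound the step. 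So you only get $\|\nabla f(\mathbf{x}_s)\|=\mathcal{O}(s)$, and even an $\mathcal{O}(\eta)$ step would give $\mathcal{O}(\sqrt{T})$. That is fine inside logarithms (Lemma~\ref{lemma:11}), but $G$ enters the final bound polynomially.

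The paper instead closes a self-bounding induction:
\begin{enumerate}
\item Assuming $G_t\le G$, it combines the summed descent inequality for $f(\mathbf{y}_{t+1})$ with Lemma~\ref{lemma:6}, $\|\nabla f(\mathbf{x}_{t+1})\|^2\le 4L(f(\mathbf{y}_{t+1})-f^*)+2C_5$.
\item It drops $4L(2/\beta-\gamma)\sum_s\eta_s\|\nabla f(\mathbf{x}_s)/\sqrt{\mathbf{a}_s}\|^2\le 0$; this is where $\gamma>2/\beta$ is used.
\item It concludes $\|\nabla f(\mathbf{x}_{t+1})\|^2\le G^2$, with $G^2$ defined implicitly through $G_T=D_T\sqrt{\|\sigma\|^2+2G^2}$. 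This is solvable with polylogarithmic $G$ because the right-hand side grows only like $G\cdot\mathrm{poly}(\log T)$.
\end{enumerate}
The same device removes a circularity in your ordering. You sum the descent inequality before bounding $G$, yet its right-hand side, and the concentration parameter $\lambda$, already depend on $G_T$.

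\textbf{Noise assumption.} Your Freedman/sub-Gaussian step, like the paper's Lemmas~\ref{lemma:7} and~\ref{lemma:9}, needs almost-surely bounded or sub-Gaussian coordinate noise. That is stronger than Assumption~\ref{ass:2.2} as stated.
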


\begin{remark}
\label{remark3}

Setting $\gamma > 0$ is crucial for ensuring the stable convergence of the algorithm. The convergence proof relies on surrogate stepsizes (defined in equations (\ref{eq:def3}) and (\ref{def:5})) to manage the complex interplay between stochastic gradients and adaptive stepsizes. The theoretical framework for employing these surrogate stepsizes within the proof is informed by the methodologies presented in \cite{DBLP:conf/nips/HongL24,Ghadimi2014GlobalCO,DBLP:conf/ijcai/YanYLLY18}, as follows:
$$\textstyle\mathbf{y}_{t+1} = \mathbf{y}_t - \eta_t \phi_t \odot \frac{\mathbf{g}_t}{\mathbf{b}_t} + \frac{\beta_1}{1-\beta_1} \left( \frac{\eta_t \mathbf{b}_{t-1} \odot \phi_{t}}{\eta_{t-1} \mathbf{b}_t \odot \phi_{t-1}} - \mathbf{1}_d \right) \odot (\mathbf{x}_t - \mathbf{x}_{t-1}).$$
where the precise definitions of all terms are provided in Appendix \ref{appendix:c}. Notably, if $\gamma$ were set to $0$, the ratio $\frac{\phi_{t,i}}{\phi_{t-1,i}}$ could approach infinity for some component $i$ when $\phi_{t,i} = \alpha$ and $\phi_{t-1,i} = \gamma = 0$. Such occurrences might prevent parameter updates in certain iterations, thereby hindering convergence. Consequently, $\gamma$ is set to a positive value instead of $0$. For a more detailed discussion, please refer to Appendix \ref{appendix:c}.
\end{remark}

\begin{remark}
Theorem \ref{th:Econv} and Theorem \ref{th:Highconv} are independent of the specific mask selection mechanism.
\end{remark}

Combining Remark \ref{remark1} and Remark \ref{remark3}, for Adam variants employing a mask, simply nullifying the update step when momentum and gradient directions misalign (tantamount to setting $\gamma=0$) markedly alters the convergence properties of stochastic optimization.
\vspace{-5pt}
\section{Experiments}
\vspace{-5pt}
In this section, we evaluate the performance of the proposed \textbf{MGUP} optimizers on both pretraining and supervised fine-tuning (SFT) tasks. All experiments are conducted using two NVIDIA V100 (32GB) GPUs and four NVIDIA RTX 4090 (24GB) GPUs. Detailed experimental settings are provided in Appendix \ref{appendix:e}.

$\blacktriangleright$ \textbf{Datasets.} We use the image dataset CIFAR-10, the text dataset Wikitext-103, and the language model fine-tuning benchmarks GLUE and GSM-8K.

$\blacktriangleright$ \textbf{Compared Methods.} We compare \textbf{MGUP-AdamW}, \textbf{MGUP-Lion}, \textbf{MGUP-Muon} with (\textit{i}) AdamW~\cite{DBLP:conf/iclr/LoshchilovH19}, (\textit{ii}) Cautious Optimizers(C-AdamW, C-Lion, C-Muon)~\cite{Liang2024CautiousOI}, (\textit{iii}) Lion~\cite{DBLP:conf/nips/ChenLHRW0DLHLL23}, (\textit{iv}) Muon~\cite{jordan2024muon,Liu2025MuonIS}, as well as other state-of-the-art memory-efficient optimization methods such as (\textit{v}) GaLore~\cite{DBLP:conf/icml/Zhao0CWAT24}, (\textit{vi}) LDAdam~\cite{DBLP:conf/iclr/0007SMA25}, (\textit{vii}) Adam-mini~\cite{DBLP:conf/iclr/ZhangCLD0K0L025} and (\textit{viii}) Adam-8Bit~\cite{DBLP:conf/iclr/DettmersLSZ22}.

Unless stated otherwise, the default setting for \textbf{MGUP}-enhanced optimizers is $\tau = \gamma = 1/\alpha = \frac{1}{2}$.

\vspace{-5pt}
\subsection{Pretraining}
\vspace{-5pt}
$\blacktriangleright$ \textbf{Image MAE Pretraing} We pre-train a straightforward ViT model \cite{DBLP:conf/iclr/DosovitskiyB0WZ21} with the Masked Autoencoder (MAE) framework \cite{He2021MaskedAA} on the CIFAR-10 dataset. For this experiment, we set the learning rate to $1.5 \times 10^{-4}$, the MAE mask rate to 75\%, and train for 200 epochs. We compare \textbf{MGUP-AdamW} with the standard AdamW and C-AdamW optimizers by evaluating their training and validation losses. The results, presented in Figure \ref{fig:vit}, show that \textbf{MGUP-AdamW} consistently achieves lower training and validation loss throughout the training process. In contrast, the performance of C-AdamW gradually falls behind that of AdamW.

\begin{figure}[!htbp]
\vspace{-5pt}
	\centering
	\subfloat[ViT training curve]{\label{Fig:G1}%
	\begin{minipage}[h]{0.45\textwidth}
	\centering
    \includegraphics[width=\textwidth]{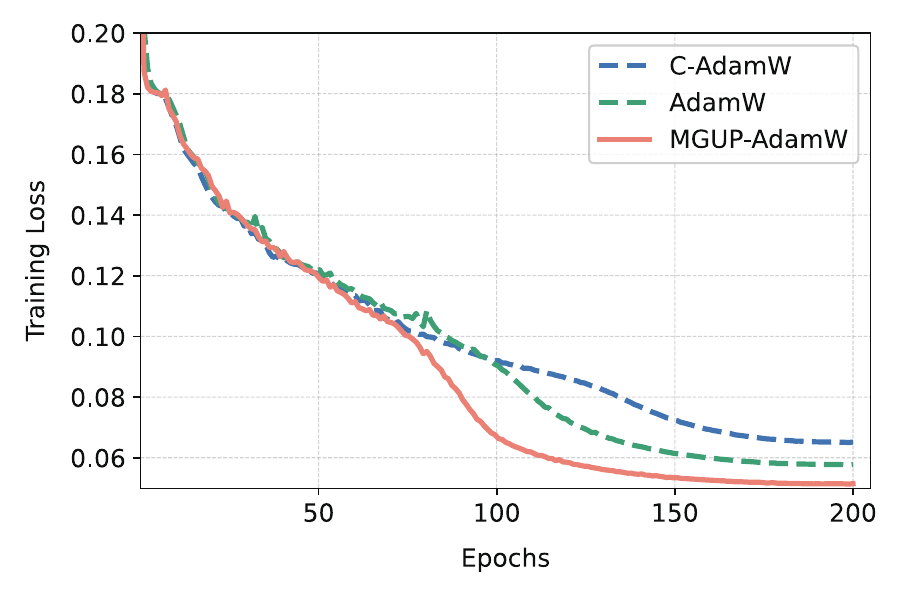}   %这里的图片宽度是相对于子页的
    \end{minipage}
    }
    \quad
    \subfloat[ViT validation curve]{\label{Fig:FG2}%
    \begin{minipage}[h]{0.45\textwidth} %width=0.9\textwidth,height=0.24\textheight
	\centering
    \includegraphics[width=\textwidth]{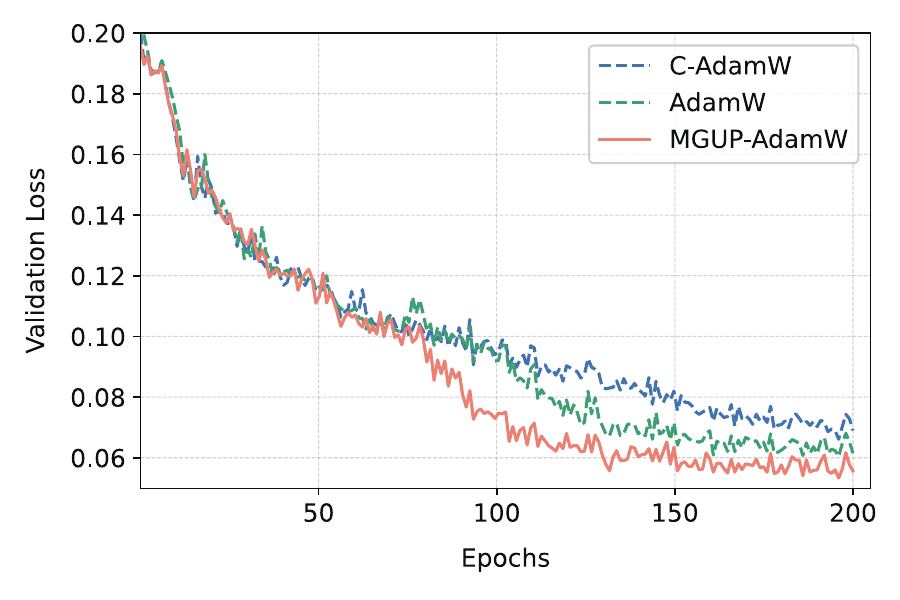}
    \end{minipage}
    }
    \caption{ViT MAE training and validation curves on CIFAR-10}
    \label{fig:vit}
    \vspace{-5pt}
\end{figure}

$\blacktriangleright$ \textbf{Language Modeling} We employ a straightforward LLaMA2-71M model~\cite{Touvron2023Llama2O} and a Qwen2.5-150M model~\cite{Yang2024Qwen25TR}, both of which are pretrained on the WikiText-103 dataset.

\textbf{LLaMA2-71M on WikiText-103.} 
To assess optimizer performance on a smaller language model, we train LLaMA2-71M on WikiText-103, evaluating validation loss. We compare AdamW, Lion, and Muon variants using a learning rate of $\text{3e-4}$, a batch size of 480, and 2000 training steps. As shown in Figure \ref{Fig:G3}, the results highlight several key differences. Among the Adam-type optimizers, \textbf{MGUP-AdamW} achieves a 1.6x speedup over standard AdamW and exhibits superior generalization compared to C-AdamW. For the Lion-type optimizers, \textbf{MGUP-Lion} demonstrates a 2.5x speedup over standard Lion; unlike the unstable C-Lion which shows early loss spikes, \textbf{MGUP-Lion} maintains training stability. With the Muon-type optimizers, \textbf{MGUP-Muon} yields a $\sim$1.2x speedup relative to Muon and delivers better generalization than C-Muon.

While $\tau$ serves as the primary hyperparameter in our approach, it is essential to examine how variations in $\gamma$ influence the performance of \textbf{MGUP-AdamW}. We conduct experiments with $\tau \in \{0.3, 0.5, 0.7\}$ and $\gamma \in \{0, 0.1, 0.5, 0.9\}$ to evaluate this relationship across different hyperparameter configurations. The comparative results are presented in Figure \ref{Fig:G4}. The analysis indicates the following: \textit{(i)} with $\gamma$ fixed, increasing $\tau$ beyond a certain threshold degraded performance; \textit{(ii)} with $\tau$ fixed, a larger $\gamma$ generally improved performance. The findings in \textit{(ii)} precisely corroborate the discussion on the setting of $\gamma$ in Section \ref{th}.

\textbf{Qwen2.5-150M on WikiText-103.} 
We also evaluate optimizers on a larger Qwen2.5-150M model using WikiText-103 (Figure \ref{fig:qwen}). For these experiments, we use a learning rate of \text{1e-3}, a batch size of 160, and 1500 training steps. For Adam-type optimizers, \textbf{MGUP-AdamW} demonstrates a higher speedup than standard AdamW and better generalization than C-AdamW. For Muon-type optimizers, \textbf{MGUP-Muon} achieves a 1.1x speedup over standard Muon and superior generalization compared to C-Muon.

\begin{figure}[!htbp]
	\centering
	\subfloat[LLaMA2 validation curve]{\label{Fig:G3}%
	\begin{minipage}[h]{0.4\textwidth}
	\centering
    \includegraphics[width=\textwidth]{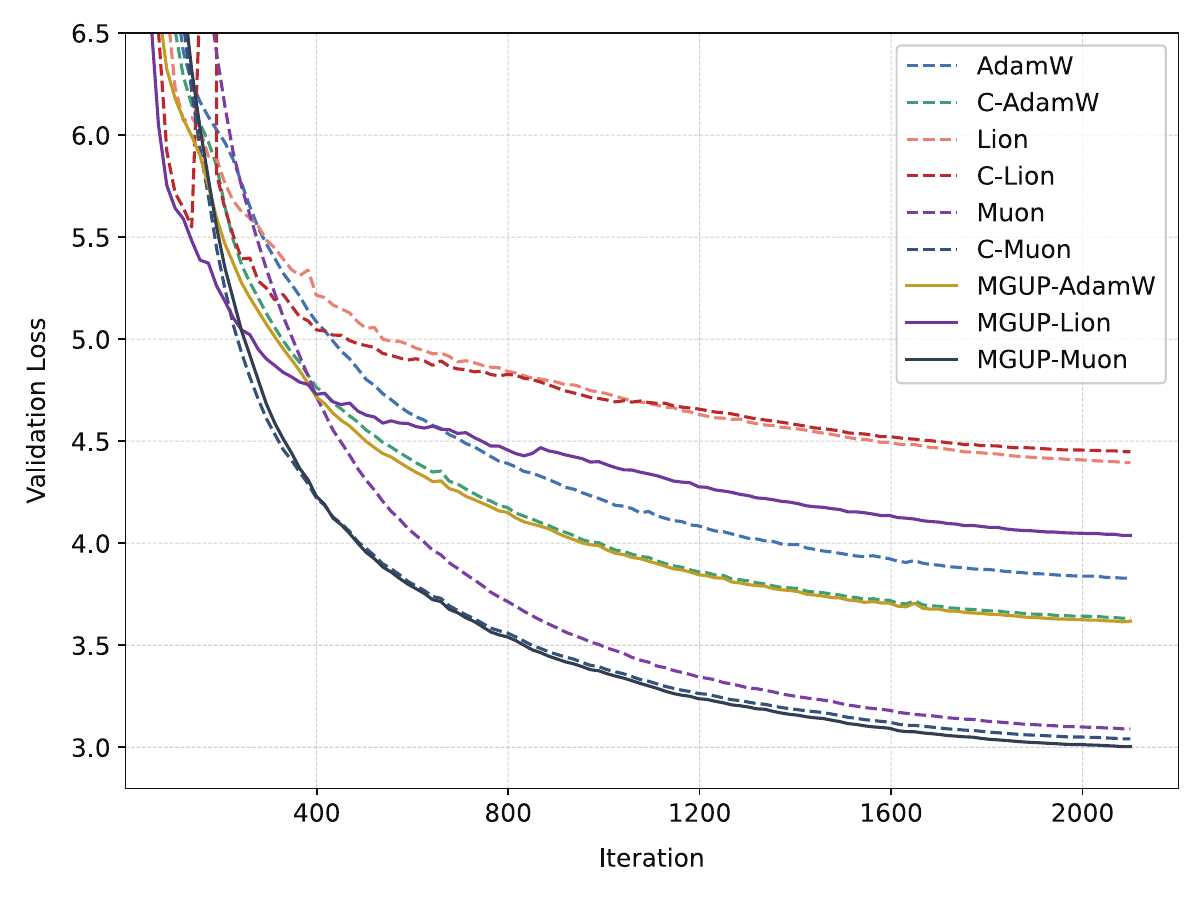}   %这里的图片宽度是相对于子页的
    \end{minipage}
    }
    \quad
    \subfloat[Sensitivity analysis]{\label{Fig:G4}%
    \begin{minipage}[h]{0.3\textwidth}
	\centering
    \includegraphics[width=\textwidth]{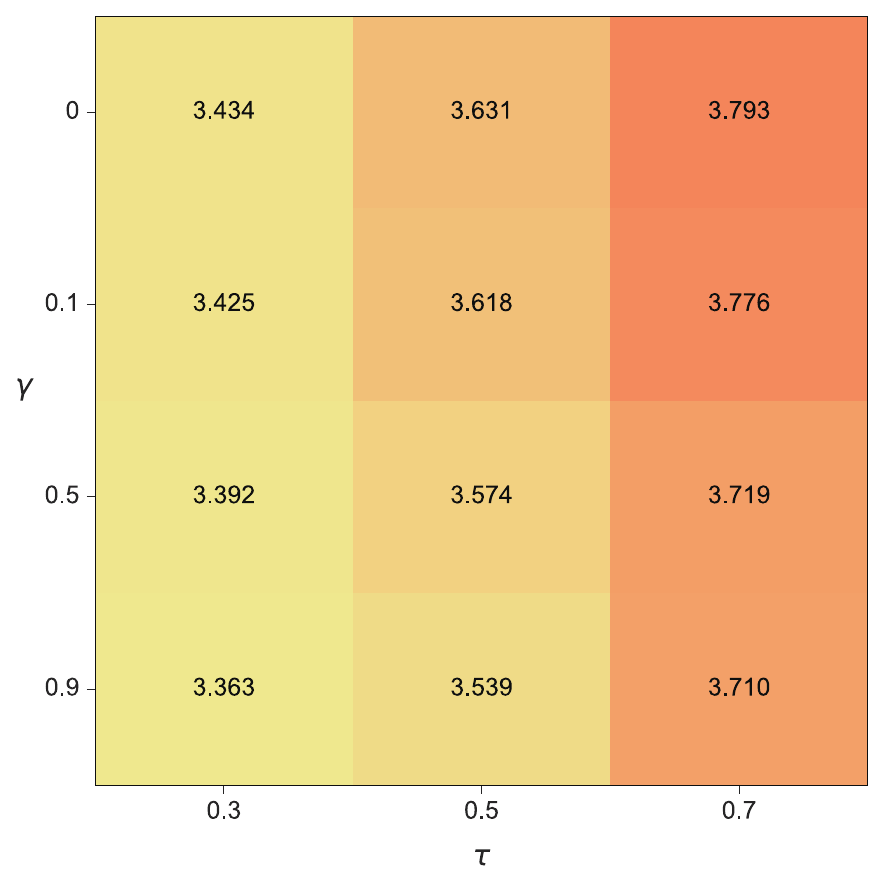}
    \end{minipage}
    }
    \caption{LLaMA2-71M validation curve and MGUP-AdamW sensitivity analysis on WikiText-103}
    \label{fig:llama}
\end{figure}

\begin{figure}[!htbp]
\vspace{-5pt}
	\centering
	\subfloat[Qwen2.5 training curve]{\label{Fig:G5}%
	\begin{minipage}[h]{0.45\textwidth}
	\centering
    \includegraphics[width=\textwidth]{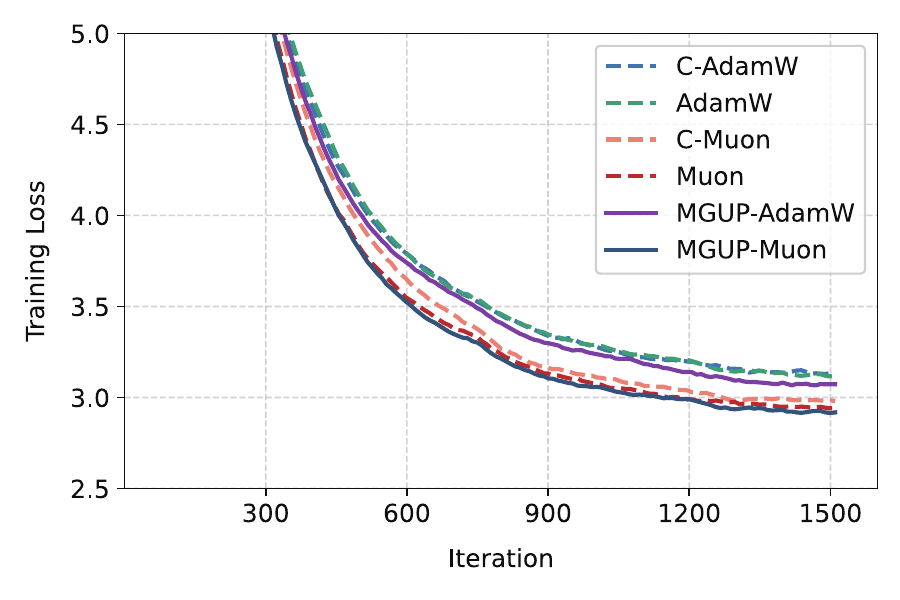}   %这里的图片宽度是相对于子页的
    \end{minipage}
    }
    \quad
    \subfloat[Qwen2.5 validation curve]{\label{Fig:G6}%
    \begin{minipage}[h]{0.45\textwidth}
	\centering
    \includegraphics[width=\textwidth]{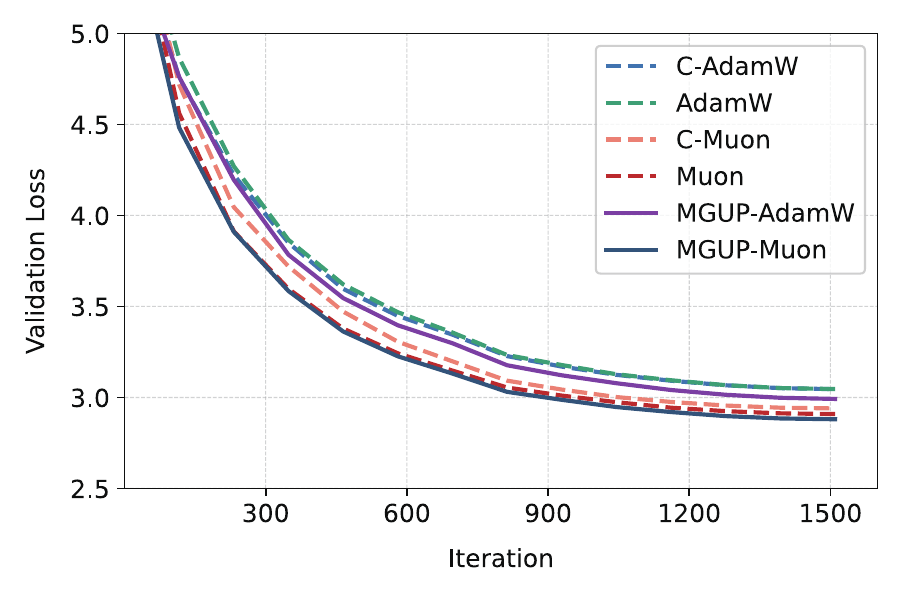}
    \end{minipage}
    }
    \caption{Qwen2.5-150M training and validation curves on WikiText-103}
    \label{fig:qwen}
    \vspace{-5pt}
\end{figure}

\vspace{-5pt}
\subsection{Fine-Tuning}
\vspace{-5pt}

\begin{figure}[!h]
\vspace{-5pt}
	\centering
	\subfloat[AdamW-type]{\label{Fig:G7}%
	\begin{minipage}[h]{0.3\textwidth}
	\centering
    \includegraphics[width=\textwidth]{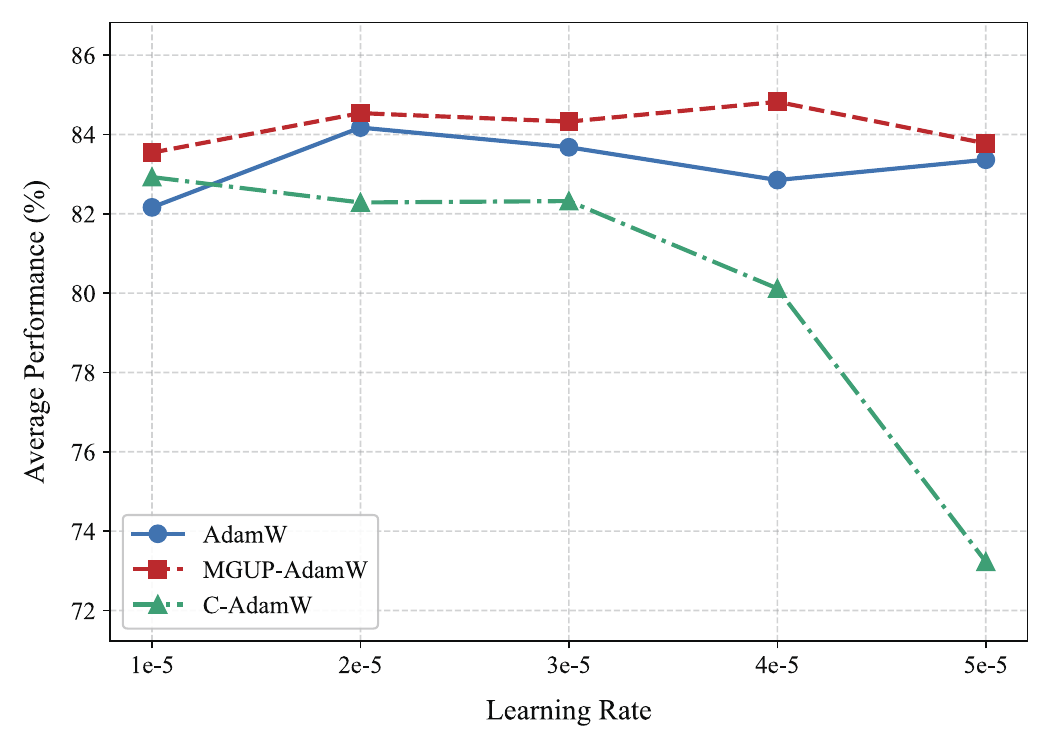}   %这里的图片宽度是相对于子页的
    \end{minipage}
    }
    \quad
    \subfloat[Lion-type ]{\label{Fig:FG8}%
    \begin{minipage}[h]{0.3\textwidth}
	\centering
    \includegraphics[width=\textwidth]{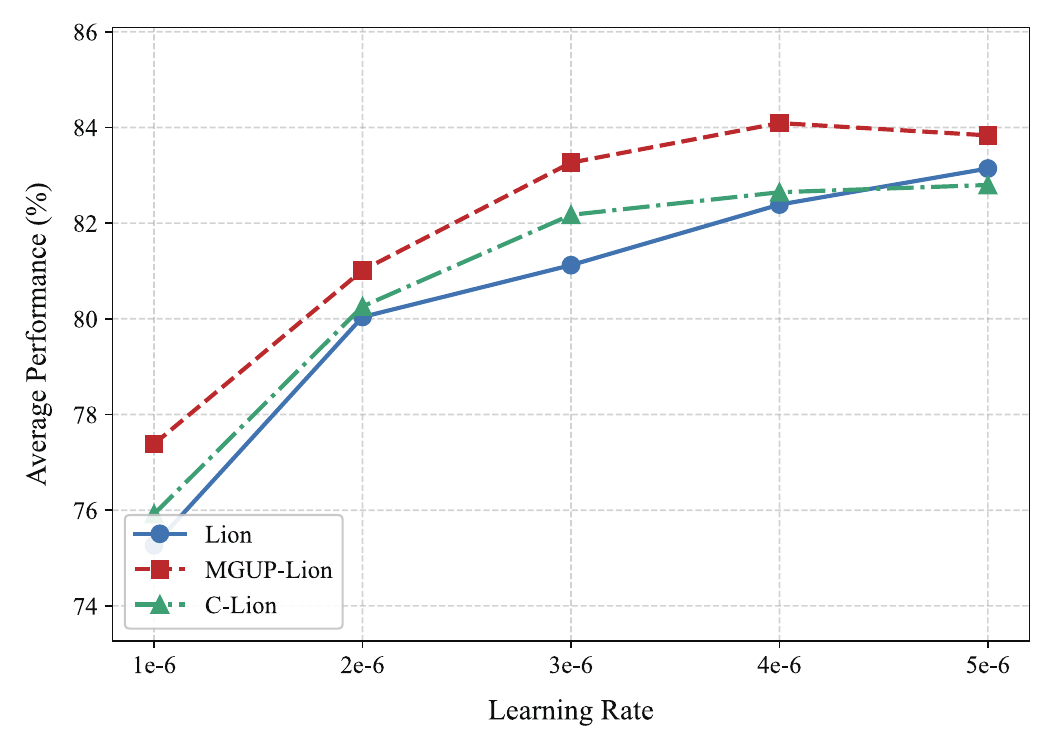}
    \end{minipage}
    }
    \quad
    \subfloat[Muon-type]{\label{Fig:FG9}%
    \begin{minipage}[h]{0.3\textwidth}
	\centering
    \includegraphics[width=\textwidth]{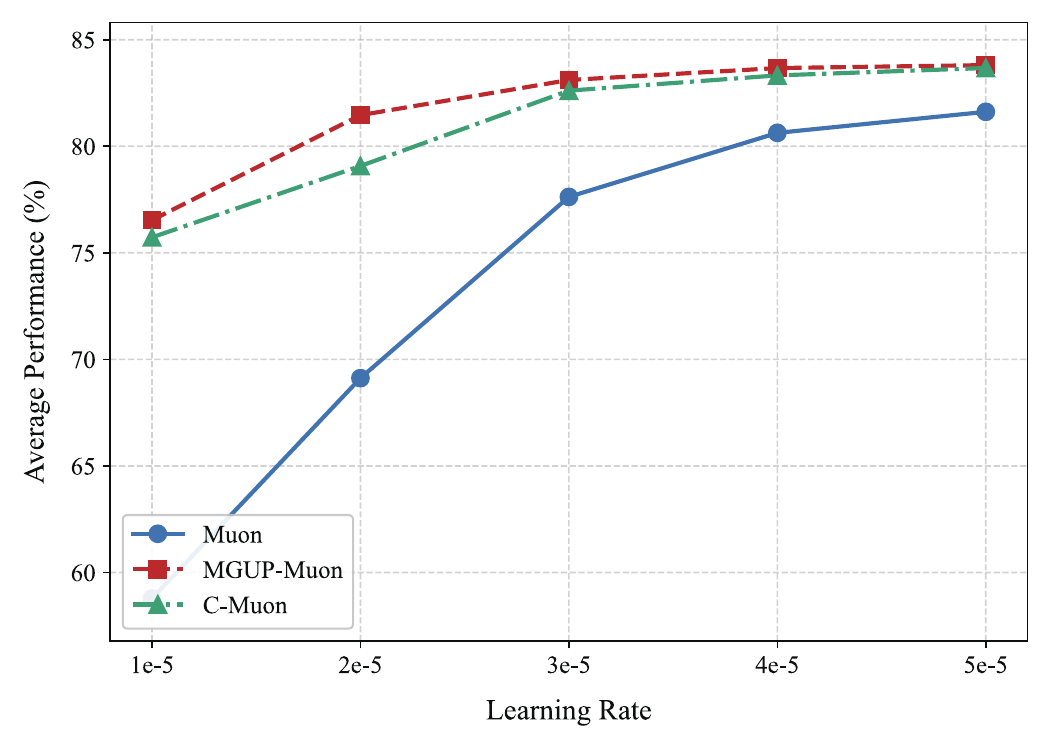}
    \end{minipage}
    }
    \caption{
    %LLaMA2-7B Finetuing curve on GSM-8K;
    Adamw-type, Lion-type,Muon-type optimizers average performance across GLUE tasks}
    \label{fig:finetune}
    \vspace{-5pt}
\end{figure}

We conduct comprehensive experiments on downstream tasks, with particular emphasis on supervised fine-tuning (SFT) scenarios. Our evaluation encompassed two representative tasks: fine-tuning the RoBERTa-base model~\cite{Liu2019RoBERTaAR} on the GLUE benchmark and the LLaMA2-7B model~\cite{Touvron2023Llama2O} on the GSM-8K.

$\blacktriangleright$ \textbf{GLUE Benchmark Evaluation.} To evaluate performance and generalization on diverse Natural Language Understanding (NLU) tasks, we experiment on the GLUE benchmark, which comprises tasks varying in dataset size and complexity. We perform a learning rate search within the range of $\text{1e-5}$ to $\text{5e-5}$ for most optimizers, and within the range of $\text{1e-6}$ to $\text{5e-6}$ for Lion-type optimizers. The best performance for each task is reported in Table \ref{table-glue}. On most tasks, \textbf{MGUP-AdamW} and \textbf{MGUP-Muon} achieve state-of-the-art results. Notably, \textbf{MGUP-AdamW} reaches an average optimal performance of \textbf{85.15} across all GLUE tasks.

Figure \ref{fig:finetune} shows how the average GLUE score changes across the tested learning rates. \textbf{MGUP-AdamW}, \textbf{MGUP-Lion}, and \textbf{MGUP-Muon} consistently outperform their standard counterparts AdamW, Lion, and Muon across this range. Additionally, all \textbf{MGUP}-enhanced optimizers demonstrate greater robustness compared to the cautious variants: C-AdamW, C-Lion, and C-Muon.

$\blacktriangleright$ \textbf{GSM-8K Fine-tuning.} We further evaluate \textbf{MGUP-AdamW} by fine-tuning LLaMA2-7B on the challenging GSM-8K dataset, a critical indicator of fine-tuning effectiveness due to typically low zero-shot accuracy \cite{cobbe2021training}. We conduct a learning rate grid search (from $1\text{e-}5$ to $5\text{e-}5$), consistent with \cite{DBLP:conf/iclr/0007SMA25}. As shown in Table \ref{tab-llama}, \textbf{MGUP-AdamW} achieves lower training loss per epoch and the highest validation accuracy \textbf{34.96\%}, outperforming baseline optimizers.

\begin{table}[!h]
\vspace{-5pt}
\caption{Comparison of best results of fine-tuning RoBERTa-base model on GLUE benchmark.}
\label{table-glue}
\begin{center}
\begin{small}
\begin{tabular}{ccccccccc}
\toprule
\textbf{Method}& \makecell{\textbf{RTE}\\2.5k} & \makecell{\textbf{MRPC}\\3.7k} & \makecell{\textbf{STS-B}\\7k} & 
\makecell{\textbf{CoLA}\\8.5k}  & 
\makecell{\textbf{SST-2}\\67k} & \makecell{\textbf{QNLI}\\105k} &\makecell{\textbf{QQP}\\364k}  & \textbf{Avg.} \\
\midrule
AdamW~\cite{DBLP:conf/iclr/LoshchilovH19} & \underline{72.93} & \textbf{90.44} & \textbf{90.55}  & 60.32 & \underline{94.84} & 92.79 & 91.34 & \underline{84.74} \\
Lion~\cite{DBLP:conf/nips/ChenLHRW0DLHLL23} & 67.15 & 87.50 & 89.39 & 60.57  & \underline{94.84} & 93.00 & 91.32 & 83.39 \\
Muon~\cite{jordan2024muon} & 64.62 & 81.13 & 87.33 & 59.34  & 94.27 & 93.11 & 91.72 & 81.65 \\
Adam-mini~\cite{DBLP:conf/iclr/ZhangCLD0K0L025} & 56.32 & 87.01 & 89.49 & 56.32  & 93.35 &  92.02 & 89.58 &  80.44 \\
GaLore(r=8)~\cite{DBLP:conf/icml/Zhao0CWAT24}& 69.45 & 86.19 & 88.97 & 55.12  & 94.15 & 92.01 & 89.86 & 82.25 \\
LDAdamW(r=8)~\cite{DBLP:conf/iclr/0007SMA25} & 67.58 & 88.32 & 90.03 & 60.60  & 94.49 & 92.82 & 91.23 & 83.58 \\
C-AdamW~\cite{Liang2024CautiousOI} & 71.12 & \underline{89.22} & 90.25 &  57.29 & 93.92 &  92.62 & 91.39 & 83.69 \\
C-Lion~\cite{Liang2024CautiousOI} & 67.87 & 88.73 & 89.58 & 57.78  & 94.50 & 92.81 & 91.41 & 83.23 \\
C-Muon~\cite{Liang2024CautiousOI} & 70.04 & 88.24 & 90.04 &  59.81 & \underline{94.84} & \underline{93.19} & \underline{91.75} & 83.98 \\
\midrule
\rowcolor{customgray}\textbf{MGUP-Lion} & 71.12 & 88.24 & 90.07 & \textbf{61.23}  & 94.27 & 93.04 & 91.33 &  84.18 \\
\rowcolor{customgray}\textbf{MGUP-Muon} & 70.40  & 88.24 & 89.84 & \underline{61.07} & 94.61 & \textbf{93.24} & \textbf{91.78} &  84.17 \\
\rowcolor{customgray}\textbf{MGUP-AdamW}  & \textbf{75.81} & \textbf{90.44} & \underline{90.54} & 59.83 &  \textbf{94.95} & 93.08 & 91.43 & \textbf{85.15}\\
\bottomrule
\end{tabular}
\end{small}
\end{center}
\vspace{-5pt}
\end{table}

\begin{table}[!h]
\vspace{-5pt}
\caption{Fine-tuning results for LLaMA-2 on GSM-8k.}
\label{tab-llama}
\begin{center}
\begin{small}
\resizebox{\textwidth}{!}{%
\begin{tabular}{ll|cccccccc}
\toprule
\multirow{2}{*}{\textbf{Model}} & \multirow{2}{*}{\textbf{Metric}} & \textbf{AdamW} & \textbf{AdamW-8b} & \textbf{LDAdamW} & \textbf{GALore} & \textbf{C-AdamW} & \textbf{MicroAdamW} & \textbf{MGUP-AdamW} \\
& & & & \textbf{($rank=512$)} & \textbf{($rank=512$)} & & \textbf{($m=10$)} & \\
\midrule
\multirow{2}{*}{7B} & Accuracy & 34.53 & 34.42 & 34.88 & 34.62 & 34.68 & 34.58 & \textbf{34.96} \\
& Train loss & 0.064 & 0.069 & 0.073 & 0.070 & 0.081 & 0.057 & \textbf{0.056} \\
\bottomrule
\end{tabular}
}
\end{small}
\end{center}
\end{table}
\vspace{-5pt}
\section{Conclusion}
\vspace{-5pt}
We introduce \textbf{MGUP}, a novel intra-layer parameter selection mechanism based on momentum-gradient alignment, and integrated it into AdamW, Lion, and Muon yields \textbf{MGUP-AdamW}, \textbf{MGUP-Lion}, and \textbf{MGUP-Muon}. Empirically, \textbf{MGUP} Optimizers demonstrate competitive convergence speeds and superior generalization over their base versions across diverse tasks, including large language model training. Theoretically, we establish stochastic convergence guarantees for  \textbf{MGUP-AdamW}(without weight decay) under standard non-convex assumptions, achieving a rate near the known optimum. Limitations include the pre-selection of $\tau$, inviting future work on adaptive methods. Our theoretical analysis also primarily covers \textbf{MGUP-AdamW} (without weight decay). Thus, while empirically effective with optimizers like Lion and Muon, \textbf{MGUP}'s theoretical properties (e.g., the necessity of $\gamma>0$) in these diverse frameworks require further study.

\newpage
\section*{Acknowledgments} This work was supported by NSFC (61772570), and Guangdong Natural Science Funds for Distinguished Young Scholar (2018B030306025).

\bibliographystyle{unsrt}
\bibliography{manuscript}

@inproceedings{DBLP:conf/nips/HongL24,
  author       = {Yusu Hong and
                  Junhong Lin},
  title        = {On Convergence of Adam for Stochastic Optimization under Relaxed Assumptions},
  booktitle    = {Advances in Neural Information Processing Systems (NeurIPS)},
  year         = {2024},
}

@article{li2020high,
  title={A high probability analysis of adaptive sgd with momentum},
  author={Li, Xiaoyu and Orabona, Francesco},
  journal={arXiv preprint arXiv:2007.14294},
  year={2020}
}

@article{DBLP:journals/tmlr/DefossezBBU22,
  author       = {Alexandre D{\'{e}}fossez and
                  L{\'{e}}on Bottou and
                  Francis R. Bach and
                  Nicolas Usunier},
  title        = {A Simple Convergence Proof of Adam and Adagrad},
  journal      = {Transactions on Machine Learning Research},
  year         = {2022},
  url          = {https://openreview.net/forum?id=ZPQhzTSWA7},
}

@article{Hong2023HighPC,
  title={High Probability Convergence of Adam Under Unbounded Gradients and Affine Variance Noise},
  author={Yusu Hong and Junhong Lin},
  journal={ArXiv},
  year={2023},
  volume={abs/2311.02000},
}

@misc{jordan2024muon,
  author       = {Keller Jordan and Yuchen Jin and Vlado Boza and Jiacheng You and
                  Franz Cesista and Laker Newhouse and Jeremy Bernstein},
  title        = {Muon: An optimizer for hidden layers in neural networks},
  year         = {2024},
  url          = {https://kellerjordan.github.io/posts/muon/}
}

@article{Liu2025MuonIS,
  title={Muon is Scalable for LLM Training},
  author={Jingyuan Liu and Jianling Su and Xingcheng Yao and Zhejun Jiang and Guokun Lai and Yulun Du and Yidao Qin and Weixin Xu and Enzhe Lu and Junjie Yan and Yanru Chen and Huabin Zheng and Yibo Liu and Shaowei Liu and Bohong Yin and Weiran He and Han Zhu and Yuzhi Wang and Jianzhou Wang and Meng Dong and Zheng Zhang and Yongsheng Kang and Hao Zhang and Xinran Xu and Yutao Zhang and Yuxin Wu and Xinyu Zhou and Zhilin Yang},
  journal={ArXiv},
  year={2025},
  volume={abs/2502.16982},
  url={https://api.semanticscholar.org/CorpusID:276575212}
}

@article{Yang2024Qwen25TR,
  title={Qwen2.5 Technical Report},
  author={Qwen An Yang and Baosong Yang and Beichen Zhang and Binyuan Hui and Bo Zheng and Bowen Yu and Chengyuan Li and Dayiheng Liu and Fei Huang and Guanting Dong and Haoran Wei and Huan Lin and Jian Yang and Jianhong Tu and Jianwei Zhang and Jianxin Yang and Jiaxin Yang and Jingren Zhou and Junyang Lin and Kai Dang and Keming Lu and Keqin Bao and Kexin Yang and Le Yu and Mei Li and Mingfeng Xue and Pei Zhang and Qin Zhu and Rui Men and Runji Lin and Tianhao Li and Tingyu Xia and Xingzhang Ren and Xuancheng Ren and Yang Fan and Yang Su and Yi-Chao Zhang and Yunyang Wan and Yuqi Liu and Zeyu Cui and Zhenru Zhang and Zihan Qiu and Shanghaoran Quan and Zekun Wang},
  journal={ArXiv},
  year={2024},
  volume={abs/2412.15115},
  url={https://api.semanticscholar.org/CorpusID:274859421}
}

@inproceedings{DBLP:conf/iclr/ZhouZLWZY19,
  author       = {Zhiming Zhou and
                  Qingru Zhang and
                  Guansong Lu and
                  Hongwei Wang and
                  Weinan Zhang and
                  Yong Yu},
  title        = {AdaShift: Decorrelation and Convergence of Adaptive Learning Rate
                  Methods},
  booktitle    = { International Conference on Learning Representations (ICLR)},
  year         = {2019},
  url          = {https://openreview.net/forum?id=HkgTkhRcKQ},
}

@article{zhuang2020adabelief,
  title={Adabelief optimizer: Adapting stepsizes by the belief in observed gradients},
  author={Zhuang, Juntang and Tang, Tommy and Ding, Yifan and Tatikonda, Sekhar C and Dvornek, Nicha and Papademetris, Xenophon and Duncan, James},
  journal={Advances in Neural Information Processing Systems (NeurIPS)},
  volume={33},
  pages={18795--18806},
  year={2020}
}

@inproceedings{Gupta2018ShampooPS,
  title={Shampoo: Preconditioned Stochastic Tensor Optimization},
  author={Vineet Gupta and Tomer Koren and Yoram Singer},
  booktitle={International Conference on Machine Learning (ICML)},
  year={2018},
  url={https://api.semanticscholar.org/CorpusID:3585068}
}

@article{Liang2024CautiousOI,
  title={Cautious Optimizers: Improving Training with One Line of Code},
  author={Kaizhao Liang and Lizhang Chen and Bo Liu and Qiang Liu},
  journal={ArXiv},
  year={2024},
  volume={abs/2411.16085},
  url={https://api.semanticscholar.org/CorpusID:274234738}
}

@inproceedings{DBLP:conf/icml/SongLZ0024,
  author       = {Weixi Song and
                  Zuchao Li and
                  Lefei Zhang and
                  Hai Zhao and
                  Bo Du},
  title        = {Sparse is Enough in Fine-tuning Pre-trained Large Language Models},
  booktitle    = {International Conference on Machine Learning (ICML)},
  year         = {2024},
  url          = {https://openreview.net/forum?id=10hu2D3hAg},
}

@inproceedings{shazeer2018adafactor,
  title={Adafactor: Adaptive learning rates with sublinear memory cost},
  author={Shazeer, Noam and Stern, Mitchell},
  booktitle={International Conference on Machine Learning (ICML)},
  pages={4596--4604},
  year={2018},
  organization={PMLR}
}

@inproceedings{DBLP:conf/iclr/ZhangCLD0K0L025,
  author       = {Yushun Zhang and
                  Congliang Chen and
                  Ziniu Li and
                  Tian Ding and
                  Chenwei Wu and
                  Diederik P. Kingma and
                  Yinyu Ye and
                  Zhi{-}Quan Luo and
                  Ruoyu Sun},
  title        = {Adam-mini: Use Fewer Learning Rates To Gain More},
  booktitle    = {International Conference on Learning Representations (ICLR)},
  year         = {2025},
  url          = {https://openreview.net/forum?id=iBExhaU3Lc},
}

@article{Liu2021AutoFreezeAF,
  title={AutoFreeze: Automatically Freezing Model Blocks to Accelerate Fine-tuning},
  author={Yuhan Liu and Saurabh Agarwal and Shivaram Venkataraman},
  journal={ArXiv},
  year={2021},
  volume={abs/2102.01386},
}

@inproceedings{DBLP:conf/nips/PanLDPZH024,
  author       = {Rui Pan and
                  Xiang Liu and
                  Shizhe Diao and
                  Renjie Pi and
                  Jipeng Zhang and
                  Chi Han and
                  Tong Zhang},
  title        = {{LISA:} Layerwise Importance Sampling for Memory-Efficient Large Language
                  Model Fine-Tuning},
  booktitle    = {Advances in Neural Information Processing Systems (NeurIPS)},
  year         = {2024},
  url          = {http://papers.nips.cc/paper\_files/paper/2024/hash/687163285b8affc8ee933bdca8e75747-Abstract-Conference.html},
  bibsource    = {dblp computer science bibliography, https://dblp.org}
}

@inproceedings{Lv2023FullPF,
  title={Full Parameter Fine-tuning for Large Language Models with Limited Resources},
  author={Kai Lv and Yuqing Yang and Tengxiao Liu and Qinghui Gao and Qipeng Guo and Xipeng Qiu},
  booktitle={Annual Meeting of the Association for Computational Linguistics},
  year={2023},
}

@inproceedings{luo2024badam,
  title={BAdam: A memory efficient full parameter optimization method for large language models},
  author={Luo, Qijun and Yu, Hengxu and Li, Xiao},
  booktitle={Advances in Neural Information Processing Systems (NeurIPS)},
  year={2024}
}

@inproceedings{DBLP:conf/icml/Zhao0CWAT24,
  author       = {Jiawei Zhao and
                  Zhenyu Zhang and
                  Beidi Chen and
                  Zhangyang Wang and
                  Anima Anandkumar and
                  Yuandong Tian},
  title        = {GaLore: Memory-Efficient {LLM} Training by Gradient Low-Rank Projection},
  booktitle    = {International Conference on Machine Learning (ICML)},
  year         = {2024},
  url          = {https://openreview.net/forum?id=hYHsrKDiX7},
}

@inproceedings{DBLP:conf/iclr/0007SMA25,
  author       = {Thomas Robert and
                  Mher Safaryan and
                  Ionut{-}Vlad Modoranu and
                  Dan Alistarh},
  title        = {LDAdam: Adaptive Optimization from Low-Dimensional Gradient Statistics},
  booktitle    = {International Conference on Learning Representations (ICLR)},
  year         = {2025},
  url          = {https://openreview.net/forum?id=Zkp1GuHerF},
  }

@inproceedings{DBLP:conf/iclr/LoshchilovH19,
  author       = {Ilya Loshchilov and
                  Frank Hutter},
  title        = {Decoupled Weight Decay Regularization},
  booktitle    = {International Conference on Learning Representations (ICLR)},
  year         = {2019},
  url          = {https://openreview.net/forum?id=Bkg6RiCqY7},
  bibsource    = {dblp computer science bibliography, https://dblp.org}
}

@inproceedings{Xie2020AdaptiveID,
  title={Adaptive Inertia: Disentangling the Effects of Adaptive Learning Rate and Momentum},
  author={Zeke Xie and Xinrui Wang and Huishuai Zhang and Issei Sato and Masashi Sugiyama},
  booktitle={International Conference on Machine Learning (ICML)},
  year={2020},
  url={https://api.semanticscholar.org/CorpusID:248986834}
}

@article{gur2018gradient,
  title={Gradient descent happens in a tiny subspace},
  author={Gur-Ari, Guy and Roberts, Daniel A and Dyer, Ethan},
  journal={arXiv preprint arXiv:1812.04754},
  year={2018}
}

@inproceedings{DBLP:conf/iclr/LarsenFBG22,
  author       = {Brett W. Larsen and
                  Stanislav Fort and
                  Nic Becker and
                  Surya Ganguli},
  title        = {How many degrees of freedom do we need to train deep networks: a loss
                  landscape perspective},
  booktitle    = {International Conference on Learning Representations (ICLR)},
  year         = {2022},
  url          = {https://openreview.net/forum?id=ChMLTGRjFcU},
  bibsource    = {dblp computer science bibliography, https://dblp.org}
}

@article{Touvron2023Llama2O,
  title={Llama 2: Open Foundation and Fine-Tuned Chat Models},
  author={Hugo Touvron and Louis Martin and Kevin R. Stone and Peter Albert and Amjad Almahairi and Yasmine Babaei and Nikolay Bashlykov and Soumya Batra and Prajjwal Bhargava and Shruti Bhosale and Daniel M. Bikel and Lukas Blecher and Cristian Cant{\'o}n Ferrer and Moya Chen and Guillem Cucurull and David Esiobu and Jude Fernandes and Jeremy Fu and Wenyin Fu and Brian Fuller and Cynthia Gao and Vedanuj Goswami and Naman Goyal and Anthony S. Hartshorn and Saghar Hosseini and Rui Hou and Hakan Inan and Marcin Kardas and Viktor Kerkez and Madian Khabsa and Isabel M. Kloumann and A. V. Korenev and Punit Singh Koura and Marie-Anne Lachaux and Thibaut Lavril and Jenya Lee and Diana Liskovich and Yinghai Lu and Yuning Mao and Xavier Martinet and Todor Mihaylov and Pushkar Mishra and Igor Molybog and Yixin Nie and Andrew Poulton and Jeremy Reizenstein and Rashi Rungta and Kalyan Saladi and Alan Schelten and Ruan Silva and Eric Michael Smith and R. Subramanian and Xia Tan and Binh Tang and Ross Taylor and Adina Williams and Jian Xiang Kuan and Puxin Xu and Zhengxu Yan and Iliyan Zarov and Yuchen Zhang and Angela Fan and Melissa Hall Melanie Kambadur and Sharan Narang and Aur{\'e}lien Rodriguez and Robert Stojnic and Sergey Edunov and Thomas Scialom},
  journal={ArXiv},
  year={2023},
  volume={abs/2307.09288},
  url={https://api.semanticscholar.org/CorpusID:259950998}
}

@article{Liu2019RoBERTaAR,
  title={RoBERTa: A Robustly Optimized BERT Pretraining Approach},
  author={Yinhan Liu and Myle Ott and Naman Goyal and Jingfei Du and Mandar Joshi and Danqi Chen and Omer Levy and Mike Lewis and Luke Zettlemoyer and Veselin Stoyanov},
  journal={ArXiv},
  year={2019},
  volume={abs/1907.11692},
  url={https://api.semanticscholar.org/CorpusID:198953378}
}

@article{cobbe2021training,
  title={Training verifiers to solve math word problems},
  author={Cobbe, Karl and Kosaraju, Vineet and Bavarian, Mohammad and Chen, Mark and Jun, Heewoo and Kaiser, Lukasz and Plappert, Matthias and Tworek, Jerry and Hilton, Jacob and Nakano, Reiichiro and others},
  journal={arXiv preprint arXiv:2110.14168},
  year={2021}
}

@inproceedings{DBLP:conf/nips/WangFZ0023,
  author       = {Bohan Wang and
                  Jingwen Fu and
                  Huishuai Zhang and
                  Nanning Zheng and
                  Wei Chen},
  title        = {Closing the gap between the upper bound and lower bound of Adam's
                  iteration complexity},
  booktitle    = {Advances in Neural Information Processing Systems (NeurIPS)},
  year         = {2023},
  url          = {http://papers.nips.cc/paper\_files/paper/2023/hash/7ac19fdcdf4f311f3e3ef2e7ef4784d7-Abstract-Conference.html},
  bibsource    = {dblp computer science bibliography, https://dblp.org}
}

@inproceedings{DBLP:conf/cpal/ZhangJYLZTW25,
  author       = {Zhenyu Zhang and
                  Ajay Kumar Jaiswal and
                  Lu Yin and
                  Shiwei Liu and
                  Jiawei Zhao and
                  Yuandong Tian and
                  Zhangyang Wang},
  title        = {Q-GaLore: Quantized GaLore with {INT4} Projection and Layer-Adaptive
                  Low-Rank Gradients},
  booktitle    = {Conference on Parsimony and Learning, Stanford University, USA, 24-27
                  March 2025},
  series       = {Proceedings of Machine Learning Research},
  volume       = {280},
  pages        = {1035--1050},
  publisher    = {{PMLR}},
  year         = {2025},
  url          = {https://proceedings.mlr.press/v280/zhang25a.html},
  bibsource    = {dblp computer science bibliography, https://dblp.org}
}

@inproceedings{DBLP:conf/acl/Lv0GLQ24,
  author       = {Kai Lv and
                  Hang Yan and
                  Qipeng Guo and
                  Haijun Lv and
                  Xipeng Qiu},
  title        = {AdaLomo: Low-memory Optimization with Adaptive Learning Rate},
  booktitle    = {Findings of the Association for Computational Linguistics (ACL)},
  pages        = {12486--12502},
  year         = {2024},
}

@ARTICLE{6796673,
  author={Hinton, Geoffrey E. and Osindero, Simon and Teh, Yee-Whye},
  journal={Neural Computation}, 
  title={A Fast Learning Algorithm for Deep Belief Nets}, 
  year={2006},
  volume={18},
  number={7},
  pages={1527-1554},
  doi={10.1162/neco.2006.18.7.1527}}

@inproceedings{DBLP:conf/nips/BengioLPL06,
  author       = {Yoshua Bengio and
                  Pascal Lamblin and
                  Dan Popovici and
                  Hugo Larochelle},
  title        = {Greedy Layer-Wise Training of Deep Networks},
  booktitle    = {Advances in Neural Information Processing Systems (NeurIPS)},
  pages        = {153--160},
  publisher    = {{MIT} Press},
  year         = {2006},
  url          = {https://proceedings.neurips.cc/paper/2006/hash/5da713a690c067105aeb2fae32403405-Abstract.html},
  bibsource    = {dblp computer science bibliography, https://dblp.org}
}

@inproceedings{DBLP:journals/corr/KingmaB14,
  author       = {Diederik P. Kingma and
                  Jimmy Ba},
  title        = {Adam: {A} Method for Stochastic Optimization},
  booktitle    = {International Conference on Learning Representations (ICLR),
                  San Diego, CA, USA, May 7-9, 2015, Conference Track Proceedings},
  year         = {2015},
  url          = {http://arxiv.org/abs/1412.6980},
  bibsource    = {dblp computer science bibliography, https://dblp.org}
}

@inproceedings{DBLP:conf/iclr/LuoXLS19,
  author       = {Liangchen Luo and
                  Yuanhao Xiong and
                  Yan Liu and
                  Xu Sun},
  title        = {Adaptive Gradient Methods with Dynamic Bound of Learning Rate},
  booktitle    = {International Conference on Learning Representations (ICLR)},
  year         = {2019},
  url          = {https://openreview.net/forum?id=Bkg3g2R9FX},
  bibsource    = {dblp computer science bibliography, https://dblp.org}
}

@inproceedings{DBLP:conf/iclr/LiuJHCLG020,
  author       = {Liyuan Liu and
                  Haoming Jiang and
                  Pengcheng He and
                  Weizhu Chen and
                  Xiaodong Liu and
                  Jianfeng Gao and
                  Jiawei Han},
  title        = {On the Variance of the Adaptive Learning Rate and Beyond},
  booktitle    = {International Conference on Learning Representations (ICLR)},
  year         = {2020},
  url          = {https://openreview.net/forum?id=rkgz2aEKDr},
  bibsource    = {dblp computer science bibliography, https://dblp.org}
}

@inproceedings{DBLP:conf/iclr/ReddiKK18,
  author       = {Sashank J. Reddi and
                  Satyen Kale and
                  Sanjiv Kumar},
  title        = {On the Convergence of Adam and Beyond},
  booktitle    = {International Conference on Learning Representations (ICLR)},
  year         = {2018},
  url          = {https://openreview.net/forum?id=ryQu7f-RZ},
  bibsource    = {dblp computer science bibliography, https://dblp.org}
}

@ARTICLE{10586270,
  author={Xie, Xingyu and Zhou, Pan and Li, Huan and Lin, Zhouchen and Yan, Shuicheng},
  journal={IEEE Transactions on Pattern Analysis and Machine Intelligence}, 
  title={Adan: Adaptive Nesterov Momentum Algorithm for Faster Optimizing Deep Models}, 
  year={2024},
  volume={46},
  number={12},
  pages={9508-9520},
  doi={10.1109/TPAMI.2024.3423382}}

@article{dozat2016incorporating,
  title={Incorporating nesterov momentum into adam},
  author={Dozat, Timothy},
  year={2016}
}

@article{hinton2012lecture,
  title={Lecture 6a overview of mini--batch gradient descent},
  author={Hinton, Geoffrey and Srivastava, Nitish and Swersky, Kevin},
  journal={Coursera Lecture slides https://class. coursera. org/neuralnets-2012-001/lecture,[Online},
  year={2012}
}

@article{Nesterov1983AMF,
  title={A method for solving the convex programming problem with convergence rate $O(1/k^2)$},
  author={Yurii Nesterov},
  journal={Proceedings of the USSR Academy of Sciences},
  year={1983},
  volume={269},
  pages={543-547},
  url={https://api.semanticscholar.org/CorpusID:145918791}
}

@inproceedings{DBLP:conf/nips/ZhangCS0L22,
  author       = {Yushun Zhang and
                  Congliang Chen and
                  Naichen Shi and
                  Ruoyu Sun and
                  Zhi{-}Quan Luo},
  title        = {Adam Can Converge Without Any Modification On Update Rules},
  booktitle    = {Advances in Neural Information Processing Systems (NeurIPS)},
  year         = {2022},
  url          = {http://papers.nips.cc/paper\_files/paper/2022/hash/b6260ae5566442da053e5ab5d691067a-Abstract-Conference.html},
  bibsource    = {dblp computer science bibliography, https://dblp.org}
}

@inproceedings{DBLP:conf/nips/ChenLHRW0DLHLL23,
  author       = {Xiangning Chen and
                  Chen Liang and
                  Da Huang and
                  Esteban Real and
                  Kaiyuan Wang and
                  Hieu Pham and
                  Xuanyi Dong and
                  Thang Luong and
                  Cho{-}Jui Hsieh and
                  Yifeng Lu and
                  Quoc V. Le},
  title        = {Symbolic Discovery of Optimization Algorithms},
  booktitle    = {Advances in Neural Information Processing Systems (NeurIPS)},
  year         = {2023},
  url          = {http://papers.nips.cc/paper\_files/paper/2023/hash/9a39b4925e35cf447ccba8757137d84f-Abstract-Conference.html},
  bibsource    = {dblp computer science bibliography, https://dblp.org}
}

@inproceedings{DBLP:conf/iclr/Liu0HL024,
  author       = {Hong Liu and
                  Zhiyuan Li and
                  David Leo Wright Hall and
                  Percy Liang and
                  Tengyu Ma},
  title        = {Sophia: {A} Scalable Stochastic Second-order Optimizer for Language
                  Model Pre-training},
  booktitle    = {International Conference on Learning Representations (ICLR)},
  year         = {2024},
  url          = {https://openreview.net/forum?id=3xHDeA8Noi},
  bibsource    = {dblp computer science bibliography, https://dblp.org}
}

@article{DBLP:journals/ml/LiangHLX25,
  author       = {Yuqing Liang and
                  Meixuan He and
                  Jinlan Liu and
                  Dongpo Xu},
  title        = {Convergence of Adam for non-convex objectives: relaxed hyperparameters
                  and non-ergodic case},
  journal      = {Mach. Learn.},
  volume       = {114},
  number       = {3},
  pages        = {75},
  year         = {2025},
  url          = {https://doi.org/10.1007/s10994-025-06737-w},
  doi          = {10.1007/S10994-025-06737-W},
  bibsource    = {dblp computer science bibliography, https://dblp.org}
}

@article{He2021MaskedAA,
  title={Masked Autoencoders Are Scalable Vision Learners},
  author={Kaiming He and Xinlei Chen and Saining Xie and Yanghao Li and Piotr Doll'ar and Ross B. Girshick},
  journal={2022 IEEE/CVF Conference on Computer Vision and Pattern Recognition (CVPR)},
  year={2021},
  pages={15979-15988},
  url={https://api.semanticscholar.org/CorpusID:243985980}
}

@inproceedings{DBLP:conf/iclr/DosovitskiyB0WZ21,
  author       = {Alexey Dosovitskiy and
                  Lucas Beyer and
                  Alexander Kolesnikov and
                  Dirk Weissenborn and
                  Xiaohua Zhai and
                  Thomas Unterthiner and
                  Mostafa Dehghani and
                  Matthias Minderer and
                  Georg Heigold and
                  Sylvain Gelly and
                  Jakob Uszkoreit and
                  Neil Houlsby},
  title        = {An Image is Worth 16x16 Words: Transformers for Image Recognition
                  at Scale},
  booktitle    = {International Conference on Learning Representations (ICLR)},
  year         = {2021},
  url          = {https://openreview.net/forum?id=YicbFdNTTy},
  bibsource    = {dblp computer science bibliography, https://dblp.org}
}

@inproceedings{DBLP:conf/iclr/DettmersLSZ22,
  author       = {Tim Dettmers and
                  Mike Lewis and
                  Sam Shleifer and
                  Luke Zettlemoyer},
  title        = {8-bit Optimizers via Block-wise Quantization},
  booktitle    = {International Conference on Learning Representations (ICLR)},
  year         = {2022},
  url          = {https://openreview.net/forum?id=shpkpVXzo3h},
  bibsource    = {dblp computer science bibliography, https://dblp.org}
}

@inproceedings{DBLP:conf/iclr/ZhaoMB0K25,
  author       = {Rosie Zhao and
                  Depen Morwani and
                  David Brandfonbrener and
                  Nikhil Vyas and
                  Sham M. Kakade},
  title        = {Deconstructing What Makes a Good Optimizer for Autoregressive Language
                  Models},
  booktitle    = {International Conference on Learning Representations (ICLR)},
  year         = {2025},
  url          = {https://openreview.net/forum?id=zfeso8ceqr},
  bibsource    = {dblp computer science bibliography, https://dblp.org}
}

@article{Wang2020AdaSGDBT,
  title={AdaSGD: Bridging the gap between SGD and Adam},
  author={Jiaxuan Wang and Jenna Wiens},
  journal={ArXiv},
  year={2020},
  volume={abs/2006.16541},
  url={https://api.semanticscholar.org/CorpusID:220266045}
}

@inproceedings{DBLP:conf/nips/LiRJ23,
  author       = {Haochuan Li and
                  Alexander Rakhlin and
                  Ali Jadbabaie},
  title        = {Convergence of Adam Under Relaxed Assumptions},
  booktitle    = {Advances in Neural Information Processing Systems (NeurIPS)},
  year         = {2023},
  url          = {http://papers.nips.cc/paper\_files/paper/2023/hash/a3cc50126338b175e56bb3cad134db0b-Abstract-Conference.html},
  bibsource    = {dblp computer science bibliography, https://dblp.org}
}

@article{Guo2021ANC,
  title={A Novel Convergence Analysis for Algorithms of the Adam Family},
  author={Zhishuai Guo and Yi Xu and Wotao Yin and Rong Jin and Tianbao Yang},
  journal={ArXiv},
  year={2021},
  volume={abs/2112.03459},
  url={https://api.semanticscholar.org/CorpusID:244920672}
}

@inproceedings{DBLP:conf/nips/HuangLH21,
  author       = {Feihu Huang and
                  Junyi Li and
                  Heng Huang},
  title        = {{SUPER-ADAM:} Faster and Universal Framework of Adaptive Gradients},
  booktitle    = {Advances in Neural Information Processing Systems (NeurIPS)},
  pages        = {9074--9085},
  year         = {2021},
  url          = {https://proceedings.neurips.cc/paper/2021/hash/4be5a36cbaca8ab9d2066debfe4e65c1-Abstract.html},
  bibsource    = {dblp computer science bibliography, https://dblp.org}
}

@article{DBLP:journals/tmlr/00690Z25,
  author       = {Qi Zhang and
                  Yi Zhou and
                  Shaofeng Zou},
  title        = {Convergence Guarantees for RMSProp and Adam in Generalized-smooth
                  Non-convex Optimization with Affine Noise Variance},
  journal      = {Trans. Mach. Learn. Res.},
  volume       = {2025},
  year         = {2025},
  url          = {https://openreview.net/forum?id=QIzRdjIWnS},
  bibsource    = {dblp computer science bibliography, https://dblp.org}
}

@article{Fu2023WhenAW,
  title={When and Why Momentum Accelerates SGD: An Empirical Study},
  author={Jingwen Fu and Bohan Wang and Huishuai Zhang and Zhizheng Zhang and Wei Chen and Na Zheng},
  journal={ArXiv},
  year={2023},
  volume={abs/2306.09000},
  url={https://api.semanticscholar.org/CorpusID:259165233}
}

@inproceedings{Jelassi2022TowardsUH,
  title={Towards understanding how momentum improves generalization in deep learning},
  author={Samy Jelassi and Yuanzhi Li},
  booktitle={International Conference on Machine Learning (ICML)},
  year={2022},
  url={https://api.semanticscholar.org/CorpusID:250340984}
}

@article{Chen2019DemonIN,
  title={Demon: Improved Neural Network Training With Momentum Decay},
  author={John Chen and Cameron R. Wolfe and Zhaoqi Li and Anastasios Kyrillidis},
  journal={ICASSP 2022 - 2022 IEEE International Conference on Acoustics, Speech and Signal Processing (ICASSP)},
  year={2019},
  pages={3958-3962},
  url={https://api.semanticscholar.org/CorpusID:235702519}
}

@inproceedings{10.5555/3042817.3043064,
author = {Sutskever, Ilya and Martens, James and Dahl, George and Hinton, Geoffrey},
title = {On the importance of initialization and momentum in deep learning},
year = {2013},
publisher = {JMLR.org},
booktitle = {International Conference on Machine Learning (ICML)},
location = {Atlanta, GA, USA},
series = {ICML'13}
}

@article{duchi2011adaptive,
  title={Adaptive subgradient methods for online learning and stochastic optimization.},
  author={Duchi, John and Hazan, Elad and Singer, Yoram},
  journal={Journal of machine learning research},
  volume={12},
  number={7},
  year={2011}
}

@inproceedings{DBLP:conf/nips/CutkoskyO19,
  author       = {Ashok Cutkosky and
                  Francesco Orabona},
  title        = {Momentum-Based Variance Reduction in Non-Convex {SGD}},
  booktitle    = {Advances in Neural Information Processing Systems (NeurIPS)},
  pages        = {15210--15219},
  year         = {2019},
  url          = {https://proceedings.neurips.cc/paper/2019/hash/b8002139cdde66b87638f7f91d169d96-Abstract.html},
  bibsource    = {dblp computer science bibliography, https://dblp.org}
}

@inproceedings{DBLP:conf/nips/FangLLZ18,
  author       = {Cong Fang and
                  Chris Junchi Li and
                  Zhouchen Lin and
                  Tong Zhang},
  title        = {{SPIDER:} Near-Optimal Non-Convex Optimization via Stochastic Path-Integrated
                  Differential Estimator},
  booktitle    = {Advances in Neural Information Processing Systems (NeurIPS)},
  pages        = {687--697},
  year         = {2018},
  url          = {https://proceedings.neurips.cc/paper/2018/hash/1543843a4723ed2ab08e18053ae6dc5b-Abstract.html},
  bibsource    = {dblp computer science bibliography, https://dblp.org}
}

@article{Yuan2024MARSUT,
  title={MARS: Unleashing the Power of Variance Reduction for Training Large Models},
  author={Huizhuo Yuan and Yifeng Liu and Shuang Wu and Xun Zhou and Quanquan Gu},
  journal={ArXiv},
  year={2024},
  volume={abs/2411.10438},
  url={https://api.semanticscholar.org/CorpusID:274116548}
}

@article{Ghadimi2014GlobalCO,
  title={Global convergence of the Heavy-ball method for convex optimization},
  author={Euhanna Ghadimi and Hamid Reza Feyzmahdavian and Mikael Johansson},
  journal={2015 European Control Conference (ECC)},
  year={2014},
  pages={310-315},
  url={https://api.semanticscholar.org/CorpusID:9829415}
}

@inproceedings{DBLP:conf/ijcai/YanYLLY18,
  author       = {Yan Yan and
                  Tianbao Yang and
                  Zhe Li and
                  Qihang Lin and
                  Yi Yang},
  title        = {A Unified Analysis of Stochastic Momentum Methods for Deep Learning},
  booktitle    = {Proceedings of the Twenty-Seventh International Joint Conference on
                  Artificial Intelligence, {IJCAI} 2018, July 13-19, 2018, Stockholm,
                  Sweden},
  pages        = {2955--2961},
  publisher    = {ijcai.org},
  year         = {2018},
  url          = {https://doi.org/10.24963/ijcai.2018/410},
  doi          = {10.24963/IJCAI.2018/410},
  bibsource    = {dblp computer science bibliography, https://dblp.org}
}

@inproceedings{zou2019sufficient,
  title={A sufficient condition for convergences of adam and rmsprop},
  author={Zou, Fangyu and Shen, Li and Jie, Zequn and Zhang, Weizhong and Liu, Wei},
  booktitle={Proceedings of the IEEE/CVF Conference on computer vision and pattern recognition},
  pages={11127--11135},
  year={2019}
}

@inproceedings{DBLP:conf/iclr/ChenLSH19,
  author       = {Xiangyi Chen and
                  Sijia Liu and
                  Ruoyu Sun and
                  Mingyi Hong},
  title        = {On the Convergence of {A} Class of Adam-Type Algorithms for Non-Convex
                  Optimization},
  booktitle    = {International Conference on Learning Representations (ICLR)},
  year         = {2019},
  url          = {https://openreview.net/forum?id=H1x-x309tm},
  bibsource    = {dblp computer science bibliography, https://dblp.org}
}

@article{Zhou2018OnTC,
  title={On the Convergence of Adaptive Gradient Methods for Nonconvex Optimization},
  author={Dongruo Zhou and Yiqi Tang and Ziyan Yang and Yuan Cao and Quanquan Gu},
  journal={ArXiv},
  year={2018},
  volume={abs/1808.05671},
  url={https://api.semanticscholar.org/CorpusID:52040763}
}

@article{chang2025convergence,
  title={On the Convergence of Muon and Beyond},
  author={Chang, Da and Liu, Yongxiang and Yuan, Ganzhao},
  journal={arXiv preprint arXiv:2509.15816},
  year={2025}
}

@article{chang2026muoneq,
  title={Muoneq: Balancing before orthogonalization with lightweight equilibration},
  author={Chang, Da and Shi, Qiankun and Zhang, Lvgang and Li, Yu and Zhang, Ruijie and Lu, Yao and Liu, Yongxiang and Yuan, Ganzhao},
  journal={arXiv preprint arXiv:2603.28254},
  year={2026}
}

%%%%%%%%%%%%%%%%%%%%%%%%%%%%%%%%%%%%%%%%%%%%%%%%%%%%%%%%%%%%

\newpage
\appendix
\section*{\LARGE Appendix}
% Our proof refers to the following literature \cite{hong2024convergence,defossez2020simple,li2020high,Hong2023HighPC}.

% First we define some notation, then we present lemmas useful in the proof, and finally we present a formal proof of the convergence result.

The appendices are structured as follows:
\begin{itemize}
    \item Appendix \ref{appendix:ce} gives a counterexample showing that Cautious Adam may diverge.
    \item Appendix \ref{appendix:rw} summarizes additional related work.
    \item Appendix \ref{appendix:a} provides the definitions and lemmas related to Theorem \ref{th:Econv}.
    \item  Appendix \ref{appendix:b} offers the formal proof of Theorem \ref{th:Econv}.
    \item Appendix \ref{appendix:c} presents the definitions and lemmas related to Theorem \ref{th:Highconv}.
    \item Appendix \ref{appendix:d} includes the formal proof of Theorem \ref{th:Highconv}.
    \item Appendix \ref{appendix:e} supplies additional details regarding the experimental setup.
    \item Appendix \ref{alg:other} contains the pseudocode for other MGUP-type algorithms.
    \item Appendix \ref{appendix:more_results} presents more experimental results.
\end{itemize}

\section{Motivating Counterexample: The Necessity of $\gamma > 0$}
\label{appendix:ce}
To intuitively demonstrate the necessity of a non-zero decayed step size ($\gamma > 0$) for misaligned updates, we present a counterexample where optimizers that nullify updates (i.e., $\gamma = 0$), such as Cautious Adam (C-Adam)~\cite{Liang2024CautiousOI}, fail to converge. We adapt a classic construction from~\cite{DBLP:conf/nips/ZhangCS0L22}.

Consider the one-dimensional objective function $f(x) = \sum_{i=0}^{n-1} f_{i}(x)$, where the stochastic components are defined as:
$$
f_i(x) = \begin{cases}
nx, & x \ge -1 \\
\frac{n}{2}(x+2)^2 - \frac{3n}{2}, & x < -1
\end{cases} \quad \text{for } i=0.
$$
$$
f_i(x) = \begin{cases}
-x, & x \ge -1 \\
-\frac{1}{2}(x+2)^2 + \frac{3}{2}, & x < -1
\end{cases} \quad \text{for } i > 0.
$$
The full objective is $f(x) = \sum_{i=0}^{n-1}f_i(x)$, which simplifies to:
$$
f(x) = \begin{cases}
x, & x \ge -1 \\
\frac{1}{2}(x+2)^2 - \frac{3}{2}, & x < -1.
\end{cases}
$$

We analyze the behavior of C-Adam and MGUP-Adam on a counterexample with its global minimum at $x^* = -2$, starting from an initial point $x_0 = -0.5$. In this environment, the optimizer encounters frequent, small negative gradients $g_t = -1$ and rare, large positive gradients $g_t = n$.

$\blacktriangleright$ \textbf{Analysis of C-Adam's Failure.} The stochastic nature of the gradients induces a "pulse-decay" dynamic in the momentum term $m_t$. A rare positive gradient pulse pushes $m_t$ to a high value, after which the frequent negative gradients cause it to decay. This leads to persistent oscillations of the momentum around zero, a behavior empirically confirmed in Figure~\ref{Fig:B2}.

This momentum instability is detrimental to C-Adam ($\gamma=0$). When $m_t > 0$, the frequent, correctly-signed gradients $g_t=-1$ are misaligned with the momentum $m_t g_t < 0$, causing the optimizer to skip the update. When $m_t < 0$, these same gradients are aligned $m_t g_t > 0$, but they produce an incorrect update, pushing the parameter $x$ away from the optimum $x^* = -2$. Figure~\ref{Fig:B3} provides clear evidence for this dysfunction: C-Adam's updates are either null ($\Delta x_t=0$) or strictly positive ($\Delta x_t > 0$), moving in the wrong direction. As a result, the iterates not only stagnate but actively diverge from the minimum, as illustrated by the trajectory in Figure~\ref{Fig:B1}.

$\blacktriangleright$ \textbf{MGUP's Advantage.} In stark contrast, \textbf{MGUP-Adam} leverages its safeguarding mechanism ($\gamma > 0$) to overcome this failure mode. The critical scenario is when $m_t > 0$ and the gradient is $g_t = -1$. Instead of inaction, MGUP performs a small, corrective update of size $\gamma \eta_t$ in the proper negative direction. This ensures a persistent, albeit small, push towards the optimum. The scatter plot in Figure~\ref{Fig:B3} demonstrates that \textbf{MGUP-Adam} consistently performs updates in the correct direction ($\Delta x_t < 0$). These small but steady corrective steps enable the optimizer to escape the challenging region and successfully converge to the true minimum at $x^*=-2$, as shown by its trajectory in Figure~\ref{Fig:B1}.

\begin{figure}[!htbp]
\centering
\subfloat[Parameter Trajectory]{\label{Fig:B1}%
	\begin{minipage}[t]{0.3\textwidth}
	\centering
    \includegraphics[width=\textwidth]{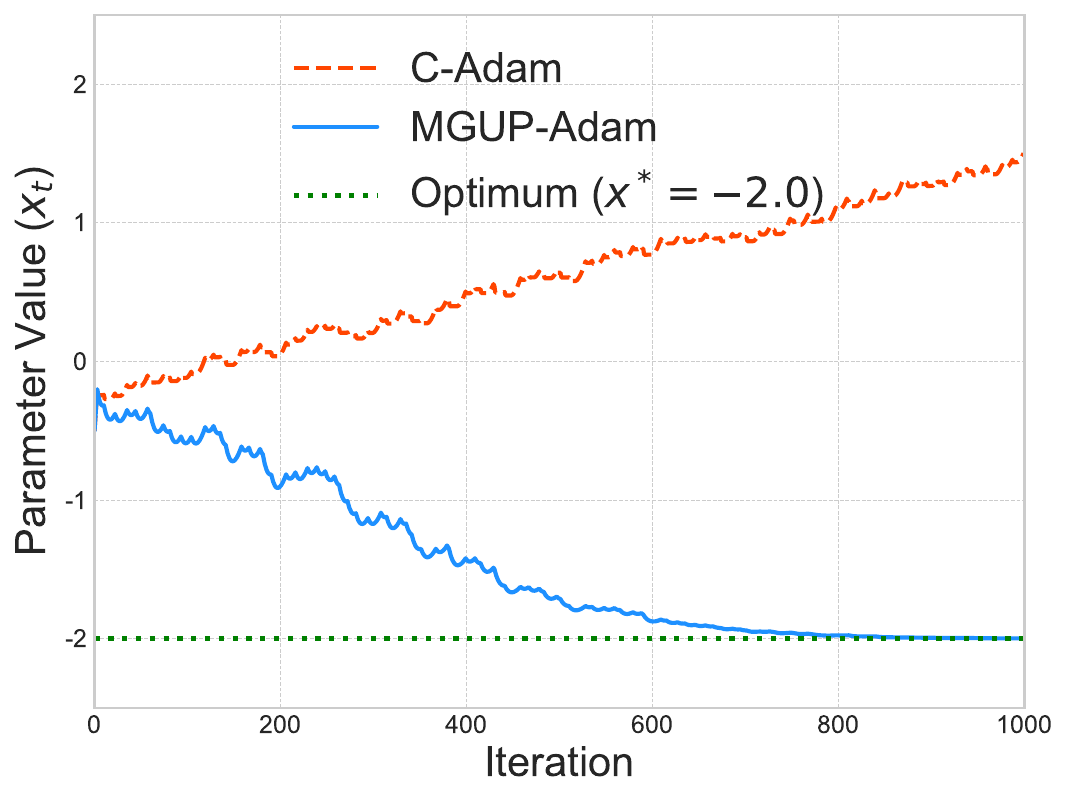}
    \end{minipage}
    }
    \hfill
    \subfloat[Momentum Dynamics of C-Adam]{\label{Fig:B2}%
    \begin{minipage}[t]{0.3\textwidth}
	\centering
    \includegraphics[width=\textwidth]{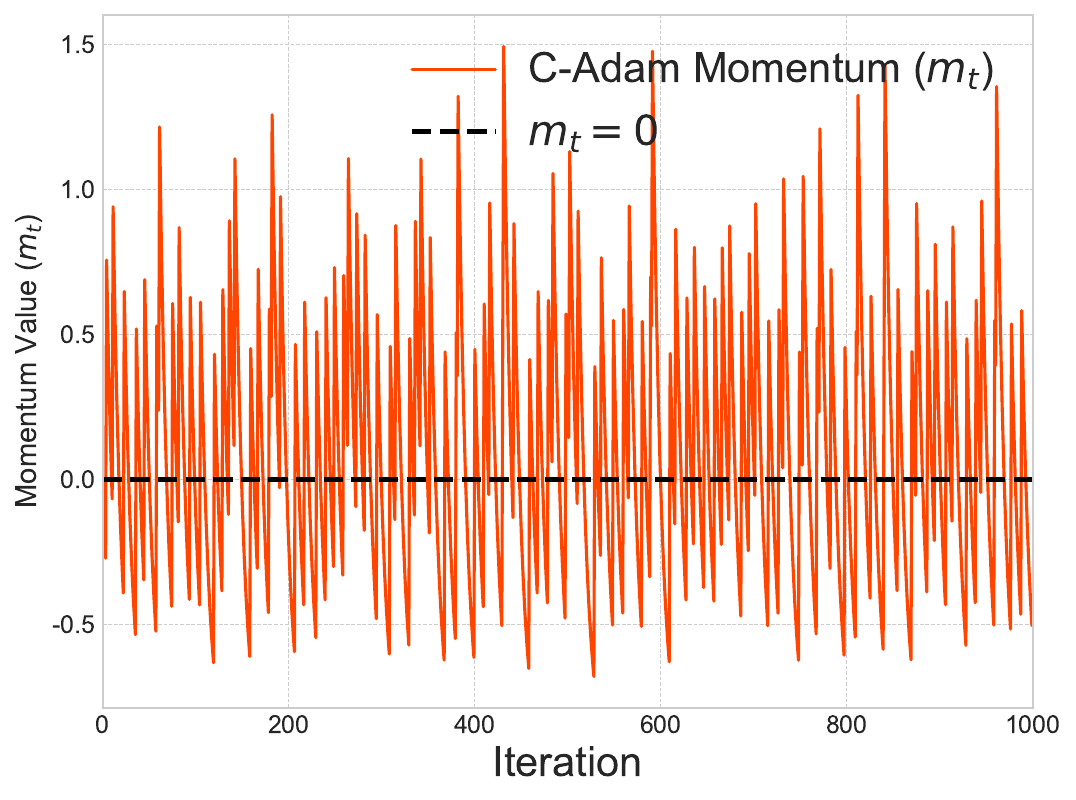}
    \end{minipage}
    }
    \hfill
    \subfloat[Analysis of Update Steps]{\label{Fig:B3}%
	\begin{minipage}[t]{0.3\textwidth}
	\centering
    \includegraphics[width=\textwidth]{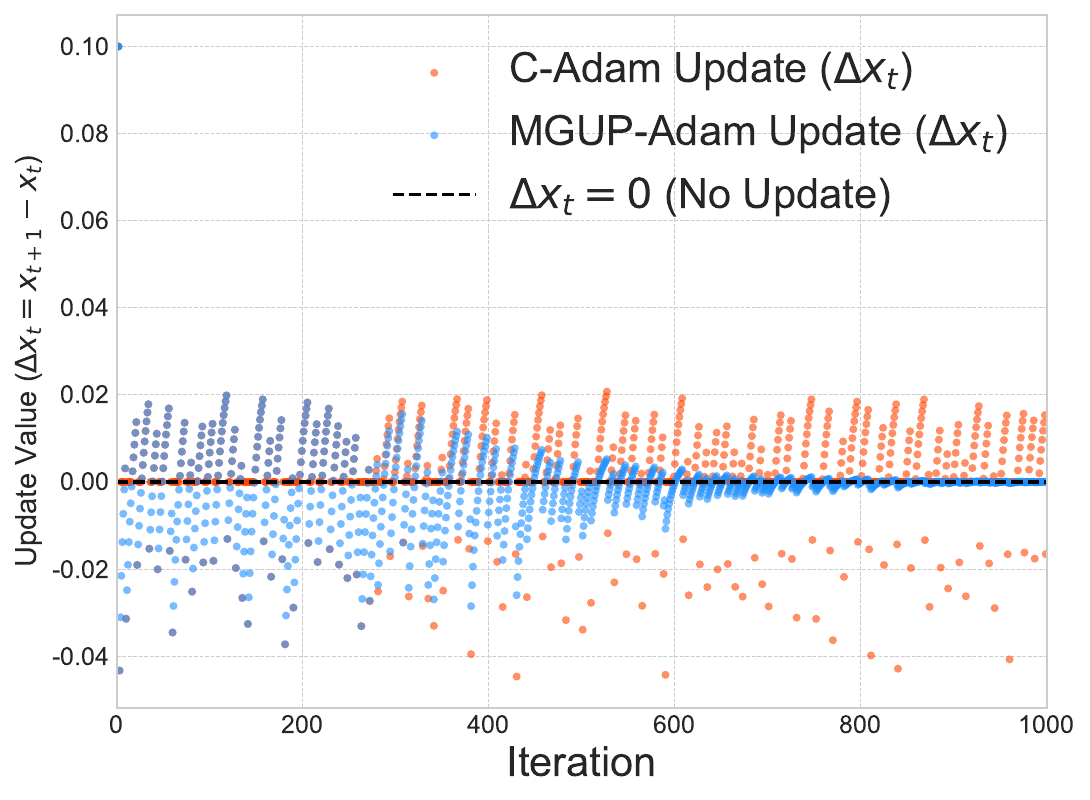}
    \end{minipage}
    }
    \caption{Analysis of C-Adam's failure and \textbf{MGUP-Adam}'s success in the counterexample. (a) \textbf{MGUP-Adam} converges to the optimum $x^*=-2$ while C-Adam diverges. (b) The momentum in C-Adam oscillates around zero, leading to unstable update decisions. (c) A scatter plot of update steps shows C-Adam either skips updates ($\Delta x_t=0$) or updates in the wrong direction ($\Delta x_t>0$), whereas \textbf{MGUP-Adam} consistently updates in the correct direction ($\Delta x_t<0$).}
    \label{fig:example}
\end{figure}
\section{More Related Work}
\label{appendix:rw}
\subsection{Efficient Training and Parameter Update Strategies} % Title remains appropriate

Large Language Model (LLM) training often exhibits gradient updates with inherent low-rank characteristics~\cite{gur2018gradient,DBLP:conf/iclr/LarsenFBG22}. This observation has spurred the development of methods aiming to enhance training efficiency and reduce memory footprint. Techniques like Adafactor significantly cut memory needs by applying low-rank decomposition to Adam's second-order moments~\cite{shazeer2018adafactor}, while Adam-mini achieves further optimization using Transformer-specific Hessian-based storage strategies~\cite{DBLP:conf/iclr/ZhangCLD0K0L025}. More directly exploiting gradient structure, GaLore enhances memory and computational efficiency through low-rank projection of gradient matrices~\cite{DBLP:conf/icml/Zhao0CWAT24}, and LDAdam complements this by optimizing within low-dimensional gradient subspaces~\cite{DBLP:conf/iclr/0007SMA25}. Furthermore, Q-Galore integrates quantization with low-rank projections, boosting efficiency, particularly in resource-constrained scenarios~\cite{DBLP:conf/cpal/ZhangJYLZTW25}.

While the aforementioned methods focus on compressing or projecting the update information, another significant avenue for efficiency involves selectively updating only a subset of model parameters. This strategy operates on the principle that not all parameters contribute equally to learning at every stage. Such selective approaches have been particularly explored for accelerating the fine-tuning or adaptation phase of LLMs, although their applicability to large-scale pre-training is less established compared to full parameter updates. % Added Transition and Context about Fine-tuning Focus

For instance, SIFT~\cite{DBLP:conf/icml/SongLZ0024} achieves efficient adaptation through gradient-based sparse parameter updates, leveraging the low intrinsic dimensionality and sparse gradient characteristics observed in LLMs during fine-tuning. % Added SIFT here, emphasizing sparsity and adaptation
Applying the selective principle at a coarser granularity, layer-wise and block-wise strategies have also proven effective, primarily in fine-tuning contexts. Building upon early unsupervised pre-training concepts~\cite{DBLP:conf/nips/BengioLPL06,6796673}, the LOMO method enables efficient gradient calculation and grouped updates, allowing for full-parameter fine-tuning with reduced memory~\cite{Lv2023FullPF}, later enhanced by AdaLOMO with adaptive learning rates~\cite{DBLP:conf/acl/Lv0GLQ24}. The LISA method introduces an innovative layer selection strategy based on parameter norms to further optimize the update process during fine-tuning~\cite{DBLP:conf/nips/PanLDPZH024}. Similarly, BAdam implemented a block coordinate optimization framework, selecting parameter blocks for Adam updates, thereby reducing memory and computation specifically for adaptation tasks~\cite{luo2024badam}. % Mentioned fine-tuning context for this group

These diverse approaches highlight prominent pathways towards more efficient training and adaptation: leveraging low-rank approximations, or strategically selecting which parameters or parameter groups receive updates, with many selective methods currently specialized for post-pre-training stages. % Revised Conclusion to reflect the fine-tuning focus

\subsection{Evolution of Adaptive Optimization Methods}
First-order optimization methods play a critical role in deep learning. Building on early foundational work, the Momentum method accelerates the optimization process by accumulating historical gradients~\cite{Nesterov1983AMF}. Subsequently, the RMSprop algorithm introduced the concept of adaptive learning rates, which enables distinct update steps for different parameters~\cite{hinton2012lecture}. The Adam algorithm merges the benefits of Momentum and RMSprop, adaptively adjusting both first and second moments, and is widely used for its effective adaptive moment estimation~\cite{DBLP:journals/corr/KingmaB14}.

Building upon Adam, researchers have proposed several enhanced variants. The AMSGrad method~\cite{DBLP:conf/iclr/ReddiKK18} aims to strengthen optimization stability by utilizing the maximum of historical second-order moments, while the NAdam approach~\cite{dozat2016incorporating} incorporates Nesterov momentum to enhance performance. To address challenges with weight decay and learning rate adjustment in Adam, the AdamW method~\cite{DBLP:conf/iclr/LoshchilovH19} enhances L2 regularization through decoupled weight decay, and the AdaBound method~\cite{DBLP:conf/iclr/LuoXLS19} introduces learning rate bounds to prevent excessively large or small update steps. The RAdam method~\cite{DBLP:conf/iclr/LiuJHCLG020} is designed to enhance convergence stability by rectifying variance estimation during the early stages of training. Finally, the AdaBelief algorithm~\cite{zhuang2020adabelief} refines the computation of second-order moments by employing the exponential moving average of gradient deviations from their mean, a design choice intended to improve generalization performance.

Recent advances have further expanded the capabilities of adaptive optimization methods. Xie et al.~\cite{Xie2020AdaptiveID} presents the Adai framework from the perspective of dynamical systems, accelerating training and improving minima selection by decoupling the effects of adaptive learning rates and momentum. In~\cite{10586270}, the Adan method is introduced. This method incorporates a novel Nesterov momentum estimation approach designed to accelerate convergence without incurring additional gradient computation overhead. More recently, the C-AdamW method is proposed in~\cite{Liang2024CautiousOI}, which employs a masking strategy aimed at enhancing optimization efficiency.

Beyond Adam variants, other optimizers also demonstrate considerable advantages. The Sophia method, detailed in~\cite{DBLP:conf/iclr/Liu0HL024}, enhances Adam's second-moment estimation through efficient diagonal Hessian approximation combined with coordinate-wise clipping. This approach has demonstrated superior performance in language model pretraining. In contrast, the Lion optimizer, presented in~\cite{DBLP:conf/nips/ChenLHRW0DLHLL23}, is designed to optimize memory efficiency and computational speed. It achieves this by tracking momentum exclusively and employing a sign-based operator to standardize update magnitudes. Notably, the Muon method, proposed in~\cite{jordan2024muon} and originating from the framework of Shampoo~\cite{Gupta2018ShampooPS}, incorporates Newton-Schulz-iterated orthogonalization of gradient momentum. This technique is intended to enhance convergence dynamics through adaptation to parameter curvature.  Recent works have further investigated the theoretical and practical aspects of Muon-based optimization~\cite{chang2025convergence,chang2026muoneq}.

\subsection{A Brief Review on the Convergence of Adam}
The convergence theory for the Adam optimizer has evolved from early uncertainty to rigorous proof. Initially, Reddi et al.~\cite{DBLP:conf/iclr/ReddiKK18} revealed its risk of non-convergence with a convex counterexample. Early work to address this established conditional guarantees; for instance, Chen et al.~\cite{DBLP:conf/iclr/ChenLSH19} provided the first convergence rate in non-convex settings, while Zou et al.~\cite{zou2019sufficient} identified necessary hyperparameter coupling conditions to ensure stability. A key turning point was the work of Zhang et al.~\cite{DBLP:conf/nips/ZhangCS0L22}, who first proved that the unmodified Adam algorithm is convergent, attributing prior failures to a mismatch between hyperparameters and the specific problem rather than an inherent algorithmic flaw. Building on this, He et al.~\cite{DBLP:journals/ml/LiangHLX25} strengthened the guarantee from ergodic to the more practical last-iterate convergence. Finally, Wang et al.~\cite{DBLP:conf/nips/WangFZ0023} resolved the debate by proving that Adam achieves an optimal iteration complexity of $\mathcal{O}(\epsilon^{-4})$, matching the theoretical lower bound and providing a firm theoretical foundation for its excellent empirical performance.

\section{Expectation Convergence Lemmas}
\label{appendix:a}
\subsection{Definition}
Recall the form of Algorithm \ref{alg:MGUP-AdamW}. Let $\mathbf{g}_t$ denote the stochastic gradient. Let's consider $\beta_{1,t} = 1-t^{-1/2}$, $\eta_t = \eta t^{-1/2}/\rho$. Therefore, we can rewrite the formal definition of the algorithm,

\begin{equation}
\label{eq:def1}
\begin{aligned}
    \mathbf{m}_t &= \beta_{1,t}\mathbf{m}_{t-1} + (1-\beta_{1,t}) \mathbf{g}_t,\\
    \mathbf{v}_t &= \beta_{2} \mathbf{v}_{t-1} + (1-\beta_{2}) \mathbf{g}_t^2,\\
    \mathbf{v}'_t &= \rho \sqrt{t} \max (\epsilon,\sqrt{\mathbf{v}_t}),\rho>0,\\
    \eta_t &= \eta,\\
    \mathbf{h}_t &= \frac{\mathbf{v}'_t}{\eta_t \phi_t},\\
    \mathbf{x}_{t+1} & = \mathbf{x}_t - \frac{\mathbf{m}_t}{\mathbf{h}_t}.
\end{aligned}
\end{equation}

Here, we incorporate $\eta_t = \eta t^{-1/2}/\rho$ into $\mathbf{v}'_{t}$ and set $\eta_t = \eta$. Without loss of generality, for $\phi_t \in \{\alpha, \gamma\}$, we set $\alpha = 1$ and $\gamma \in (0, 1)$. Then, we define that

\begin{equation}
\label{eq:def2}
\begin{aligned}
    % M_t & = \max_{j\in[t]} \|\nabla f(\mathbf{x}_j;\xi)\|^2 \\
    u_{\min} &= \frac{\epsilon}{\eta},u_{\max} = \frac{M}{\eta\gamma},\kappa = \frac{u_{\max}}{u_{\min}},\\
    \mathbf{r}_t &= \mathbf{h}_t \odot (\mathbf{x}_{t+1}-\mathbf{x}_{t}),\\
    \mathbf{s}_t &=  \mathbf{m}_t - \nabla f(\mathbf{x}_t).\\
\end{aligned}
\end{equation}

\subsection{Lemma \ref{lemma:a1}}

\begin{lemma}
\label{lemma:a1}
Suppose that $\{E_i,A_i\}$ are two nonnegative sequences. Assume $E_{t+1}\le (1-(t+1)^{-1/2})E_{t} + A_{t+1}$ , and $\delta\ge 1/2$. Then we have:
$$
t^{-1/2}E_t \le 2^{\delta}(E_t-E_{t+1}+A_{t+1}).
$$
\end{lemma}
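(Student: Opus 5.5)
The plan is to rearrange the assumed recursion so that the factor $(t+1)^{-1/2}E_t$ appears alone on the left. I will then pay a constant to replace $(t+1)^{-1/2}$ by $t^{-1/2}$, and check that this constant is at most $2^{\delta}$ when $\delta\ge 1/2$. Throughout, $t\ge 1$ is an integer, as in the algorithm's indexing.

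\textbf{Step 1.} Expand the hypothesis $E_{t+1}\le (1-(t+1)^{-1/2})E_t + A_{t+1}$ and move terms across to obtain
\begin{equation*}
(t+1)^{-1/2}E_t \le E_t - E_{t+1} + A_{t+1}.
\end{equation*}
Since $E_t\ge 0$, the left-hand side is nonnegative. Hence the quantity $Q_t := E_t - E_{t+1} + A_{t+1}$ is also nonnegative, which will let us multiply it by any constant at least $1$ without reversing the inequality.

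\textbf{Step 2.} Compare $t^{-1/2}$ with $(t+1)^{-1/2}$. For $t\ge 1$ we have $(t+1)/t \le 2$, so
\begin{equation*}
t^{-1/2} = \sqrt{\tfrac{t+1}{t}}\,(t+1)^{-1/2} \le \sqrt{2}\,(t+1)^{-1/2}.
\end{equation*}
Multiplying by $E_t\ge 0$ and combining with Step 1 gives $t^{-1/2}E_t \le \sqrt{2}\,Q_t$.

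\textbf{Step 3.} Since $\delta\ge 1/2$, we have $2^{\delta}\ge 2^{1/2}$. Because $Q_t\ge 0$ by Step 1, it follows that $\sqrt{2}\,Q_t\le 2^{\delta}Q_t$, which is exactly the claimed inequality.

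There is no real obstacle here. The only points needing care are that $Q_t\ge 0$, which is necessary for enlarging $\sqrt2$ to $2^\delta$, and the restriction $t\ge 1$ in Step 2; both are immediate from the nonnegativity of the sequences and the algorithm's indexing.
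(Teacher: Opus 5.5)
Your proof is correct and follows essentially the same route as the paper: rearrange the recursion, use $\sqrt{(t+1)/t}\le\sqrt{2}$ for $t\ge 1$, and absorb $\sqrt{2}$ into $2^{\delta}$ using $\delta\ge 1/2$. The paper writes this as a single chain, $t^{-1/2}E_t-2^{\delta}(E_t-E_{t+1}+A_{t+1})\le E_t(t+1)^{-1/2}\bigl(\sqrt{2}-2^{\delta}\bigr)\le 0$, so it never needs $Q_t\ge 0$ separately, but the ingredients are the same.
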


\begin{proof}
We derive:
$$
\begin{aligned}
 & t^{-1/2}E_{t}-c\left(E_{t}-E_{t+1}+A_{t+1}\right) \\
 & \stackrel{(\bullet)}\leq t^{-1/2}E_{t}-c\left(E_{t}+A_{t+1}\right)+c\cdot\left(E_{t}-(t+1)^{-1/2}E_{t}+A_{t+1}\right) \\
 & = E_t\left(t^{-1/2}-c(t+1)^{-1/2}\right) \\
 & = E_t\cdot(t+1)^{-1/2}\cdot\left(\left(\frac{t}{t+1}\right)^{-1/2}-c\right) \\
 & \stackrel{(\circ)}\leq E_{t}\cdot(t+1)^{-1/2}\cdot(2^{1/2}-c) \\
 & \stackrel{(\star)}\le 0.
\end{aligned}
$$
where $(\bullet)$ follows from $E_{t+1}\le (1-(t+1)^{-1/2}) E_t + A_{t+1}$; $(\circ)$ is due to $(\frac{t}{t+1})^{-1/2}\le 2^{1/2}$; $(\star)$ is due to our choice $c = 2^{\delta}$.
\end{proof}

\subsection{Lemma \ref{lemma:a2}}
\begin{lemma}
\label{lemma:a2}
    We have the following results for all $t \ge 1$, $\rho \sqrt{t} u_{\min} \le \min(\mathbf{h}_t) \le \rho \sqrt{t} u_{\max}$. 
\end{lemma}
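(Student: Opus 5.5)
The plan is to prove the stronger coordinatewise statement that every entry satisfies $\rho\sqrt{t}\,u_{\min}\le \mathbf{h}_{t,i}\le \rho\sqrt{t}\,u_{\max}$ for all $i\in[d]$. The claim about $\min(\mathbf{h}_t)$ then follows at once: the lower bound holds for every entry, hence for the minimum, and the minimum is at most any single entry, hence at most the upper bound.

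By the definitions in (\ref{eq:def1}) with $\eta_t=\eta$, each coordinate is
$$\mathbf{h}_{t,i}=\frac{\rho\sqrt{t}\,\max(\epsilon,\sqrt{\mathbf{v}_{t,i}})}{\eta\,\phi_{t,i}},$$
with $\phi_{t,i}\in\{1,\gamma\}$ and $\gamma\in(0,1)$, since we normalized $\alpha=1$. I would bound the numerator and the factor $1/\phi_{t,i}$ separately.

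\textbf{Lower bound.} Use $\max(\epsilon,\sqrt{\mathbf{v}_{t,i}})\ge\epsilon$ and $\phi_{t,i}\le 1$, so that $1/\phi_{t,i}\ge 1$. This gives $\mathbf{h}_{t,i}\ge \rho\sqrt{t}\,\epsilon/\eta=\rho\sqrt{t}\,u_{\min}$.

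\textbf{Upper bound.} This step needs a bound on $\mathbf{v}_t$.
\begin{itemize}
\item Unrolling the recursion with $\mathbf{v}_0=0$ gives $\mathbf{v}_{t,i}=(1-\beta_2)\sum_{j=1}^t\beta_2^{t-j}\mathbf{g}_{j,i}^2$.
\item The weights sum to $1-\beta_2^t\le 1$, and Assumption~\ref{ass:2} gives $|\mathbf{g}_{j,i}|\le\|\mathbf{g}_j\|\le M$. Hence $\mathbf{v}_{t,i}\le M^2$, so $\sqrt{\mathbf{v}_{t,i}}\le M$.
\item We also have $1/\phi_{t,i}\le 1/\gamma$.
\item Together these give $\mathbf{h}_{t,i}\le \rho\sqrt{t}\,\max(\epsilon,M)/(\eta\gamma)$.
\end{itemize}

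The only point requiring care is converting $\max(\epsilon,M)$ into $M$, so that the bound reads $\rho\sqrt{t}\,u_{\max}$ with $u_{\max}=M/(\eta\gamma)$. I would handle this by noting that $\epsilon$ is a small stability constant, which we may take to satisfy $\epsilon\le M$ (otherwise replace $M$ by $\max(\epsilon,M)$ throughout without changing any later argument). Under $\epsilon\le M$ we have $\max(\epsilon,M)=M$. The same fact gives $u_{\min}\le u_{\max}$, since $\epsilon\le M\le M/\gamma$, so the interval in the statement is nonempty and $\kappa\ge 1$.

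Nothing else is delicate; the lemma is bookkeeping. Its role is to show that the masking factor $\phi_t$ enters only through $1/\gamma$ in $u_{\max}$, which is exactly the dependence exploited in Remark~\ref{remark1}.
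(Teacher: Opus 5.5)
Your proposal is correct and follows essentially the same route as the paper: both prove the coordinatewise bound $\rho\sqrt{t}\,u_{\min}\le\mathbf{h}_{t,i}\le\rho\sqrt{t}\,u_{\max}$ from $\mathbf{v}_{t,i}\in[0,M^2]$ (unrolled EMA with weights summing to at most $1$) together with $\phi_{t,i}\in\{\gamma,1\}$. You also make explicit the condition $\epsilon\le M$ (without loss of generality, by enlarging $M$), which the paper uses tacitly when it writes $\mathbf{v}'_{t,i}\in[\rho\sqrt{t}\,\epsilon,\rho\sqrt{t}\,M]$.
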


\begin{proof}

We obtain:
$$
\begin{aligned}
\mathbf{v}_{t,i}  &= (1-\beta_2)\sum_{j=1}^t \beta_2^{t-j} \mathbf{g}_{j,i}^2 \\  
&\le (1-\beta_2)(\max_{j\in[t]} \mathbf{g}_{j,i}^2) \sum_{j=1}^t \beta_2^{t-j}\\
& \stackrel{(\circ)}\le \max_{j\in[t]} |\mathbf{g}_{j}|^2 \\
& \stackrel{(\star)}\le  M^2,
%\le \epsilon + M_T
\end{aligned}
$$
where $(\circ)$ is due to $\sum_{i=1}^t \beta_2^{t-i} = \frac{1-\beta_2^t}{1-\beta_2}\le \frac{1}{1-\beta_2}$; $(\star)$ is due to we assume that $f(\mathbf{x};\xi)$ is M-Lipschitz for all $\mathbf{x}$. Additionally, we note that
$$
\mathbf{v}_{t,i} \ge 0.
$$
Thus, we conclude:
$$
\mathbf{v}_{t,i}\in [0,M^2].
$$
This implies:
$$
\begin{aligned}
\mathbf{v}'_{t,i}\in [\rho \sqrt{t}\epsilon ,\rho \sqrt{t} M].
\end{aligned}
$$
Next, according to the definition of $u_{\min}$ and $u_{\max}$:
$$
u_{\min} = \frac{\epsilon}{\eta}, u_{\max} = \frac{M}{\eta\gamma}.
$$
Therefore, we have:
$$
\begin{aligned}
    \mathbf{h}_{t,i} \in \left[\frac{\rho\sqrt{t} \epsilon}{\eta},\frac{\rho\sqrt{t}M}{\eta\gamma}\right] = \left[\rho \sqrt{t} u_{\min}, \rho \sqrt{t} u_{\max}\right].
\end{aligned}
$$
\end{proof}

\subsection{Lemma \ref{lemma:a3}}
\begin{lemma}
\label{lemma:a3}
Let $u_{\min},u_{\max},\mathbf{r}_t,\mathbf{s}_t$ be given in (\ref{eq:def2}). We have the following inequality:
$$
\mathbb{E}[f(\mathbf{x}_{t+1})]\le  f(\mathbf{x}_t) - \left(\frac{1}{2\kappa^2 \rho u_{\max}} - \frac{L}{\rho^2u_{\min} }\right)\frac{\mathbb{E}\|\mathbf{r}_t\|_2^2}{\sqrt{t}} + \frac{ \mathbb{E}\|\mathbf{s}_t\|_2^2}{2\sqrt{t} L}.
$$
\end{lemma}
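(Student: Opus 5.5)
Start from the descent lemma given by $L$-smoothness (Assumption \ref{ass:3}):
$$f(\mathbf{x}_{t+1})\le f(\mathbf{x}_t)+\langle\nabla f(\mathbf{x}_t),\mathbf{x}_{t+1}-\mathbf{x}_t\rangle+\tfrac{L}{2}\|\mathbf{x}_{t+1}-\mathbf{x}_t\|_2^2 .$$
Rewrite everything through $\mathbf{r}_t=\mathbf{h}_t\odot(\mathbf{x}_{t+1}-\mathbf{x}_t)$. By the update rule in (\ref{eq:def1}), $\mathbf{r}_t=-\mathbf{m}_t$, and $\mathbf{x}_{t+1}-\mathbf{x}_t=\mathbf{r}_t/\mathbf{h}_t$. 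Then substitute $\nabla f(\mathbf{x}_t)=\mathbf{m}_t-\mathbf{s}_t=-\mathbf{r}_t-\mathbf{s}_t$ in the inner product. This gives
$$\langle\nabla f(\mathbf{x}_t),\mathbf{x}_{t+1}-\mathbf{x}_t\rangle=-\Big\langle \mathbf{r}_t,\frac{\mathbf{r}_t}{\mathbf{h}_t}\Big\rangle-\Big\langle\mathbf{s}_t,\frac{\mathbf{r}_t}{\mathbf{h}_t}\Big\rangle .$$
The first term is the descent term. The second is a cross term between the momentum error and the step.

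Next, bound each piece coordinatewise using Lemma \ref{lemma:a2}, which gives $\rho\sqrt t\,u_{\min}\le \mathbf{h}_{t,i}\le\rho\sqrt t\,u_{\max}$ for every coordinate $i$.
\begin{itemize}
\item The descent term is bounded by $-\|\mathbf{r}_t\|^2/(\rho\sqrt t\,u_{\max})$.
\item The quadratic smoothness term is bounded by $\tfrac{L}{2}\|\mathbf{r}_t\|^2/(\rho^2 t\,u_{\min}^2)$. Since $t\ge1$, this is at most $\tfrac{L}{2\rho^2 u_{\min}^2\sqrt t}\|\mathbf{r}_t\|^2$.
\item For the cross term, apply Young's inequality with a weight: $|\langle \mathbf{s}_t,\mathbf{r}_t/\mathbf{h}_t\rangle|\le \frac{c}{2}\|\mathbf{r}_t/\mathbf{h}_t\|^2+\frac{1}{2c}\|\mathbf{s}_t\|^2$. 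Choose $c=L\sqrt t$, so the $\mathbf{s}_t$ piece becomes exactly $\|\mathbf{s}_t\|^2/(2L\sqrt t)$ as in the statement. The remaining $\mathbf{r}_t$ piece is $\frac{L\sqrt t}{2}\|\mathbf{r}_t\|^2/(\rho^2 t\,u_{\min}^2)=\frac{L}{2\rho^2u_{\min}^2\sqrt t}\|\mathbf{r}_t\|^2$.
\end{itemize}
Adding the two $L$-terms gives $\frac{L}{\rho^2u_{\min}^2\sqrt t}\|\mathbf{r}_t\|^2$.

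The main obstacle is matching the stated constants, $\frac{1}{2\kappa^2\rho u_{\max}}$ for the descent coefficient and $\frac{L}{\rho^2 u_{\min}}$ for the smoothness coefficient. My direct bound gives the stronger descent coefficient $\frac{1}{\rho u_{\max}}$. Since $\kappa\ge1$, we have $\frac{1}{2\kappa^2}\le1$, so the statement's coefficient is weaker and follows from mine. I expect the authors instead split the cross term via $\|\mathbf{r}_t/\mathbf{h}_t\|^2\ge \|\mathbf{r}_t\|^2/(\rho^2 t u_{\max}^2)$ and $\langle\cdot\rangle$ ratios, which produces $\kappa^{-2}$.

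For the $L$-coefficient, the normalization of $u_{\min}$ versus $u_{\min}^2$ matters. If the units differ, I will redo the Young weight as $c=L\sqrt t\,u_{\min}$-scaled. Alternatively, I will absorb the mismatch using the fact that the final theorem only needs some positive coefficient; $\varepsilon_2$ in Theorem \ref{th:Econv} indeed uses $u_{\min}^2$, suggesting the lemma's $u_{\min}$ is a typo-level variant. In either case, any weaker coefficient in the statement is implied by my tighter one.

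Finally, take conditional expectation given $\mathbf{x}_t$ (then total expectation). Note that $\mathbf{h}_t$ and $\mathbf{m}_t$ depend on $\xi_t$, so the bounds above are applied pathwise before expectation; no unbiasedness is needed here, since the bound is deterministic before averaging. This yields the claimed inequality.
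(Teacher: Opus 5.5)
Your proof is correct and is essentially the paper's argument. Both proofs use the descent lemma, the split $\nabla f(\mathbf{x}_t)=\mathbf{m}_t-\mathbf{s}_t$ with $\mathbf{m}_t=-\mathbf{r}_t$, Young's inequality on the cross term with weight $\sqrt t$ (the paper's choice $\beta=\sqrt t$), and the coordinatewise bounds of Lemma~\ref{lemma:a2}. The resulting $L$-coefficient $L/(\rho^2u_{\min}^2\sqrt t)$ is exactly the one in the paper's last line~(\ref{eq:4}). Your descent coefficient $1/(\rho u_{\max})$ is sharper than the paper's for two reasons:
\begin{itemize}
\item you keep $\langle\mathbf{m}_t,\mathbf{x}_{t+1}-\mathbf{x}_t\rangle=-\|\mathbf{x}_{t+1}-\mathbf{x}_t\|_{\mathbf{h}_t}^2$ in full and bound it by $-\|\mathbf{r}_t\|_2^2/\max(\mathbf{h}_t)$;
\item the paper keeps only half of that term and then uses the looser estimate~(\ref{eq:3}).
\end{itemize}
That is where the factor $\frac{1}{2\kappa^2}$ comes from, not from the cross term as you guessed.

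One claim in your final paragraph is wrong, though. The unsquared coefficient $L/(\rho^2u_{\min})$ in the displayed statement is \emph{not} implied by your bound ``in either case.'' It is smaller than $L/(\rho^2u_{\min}^2)$ exactly when $u_{\min}<1$. Re-weighting Young's inequality cannot fix this, because the smoothness term $\frac{L}{2}\|\mathbf{x}_{t+1}-\mathbf{x}_t\|_2^2$ alone already contributes $\frac{L}{2\rho^2 t u_{\min}^2}\|\mathbf{r}_t\|_2^2$.

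In fact the literal statement is false in general. At $t=1$ we have $\beta_{1,1}=0$, so in a deterministic problem $\mathbf{m}_1=\nabla f(\mathbf{x}_1)$ and $\mathbf{s}_1=\mathbf{0}$. Take $f$ quadratic with curvature $L$ (near the iterates) along a single moving coordinate. Suppose also $\mathbf{h}_{1,i}=\rho u_{\min}$ there, which happens for a top-$K$ coordinate with $\sqrt{\mathbf{v}_{1,i}}\le\epsilon$. Then
\[
f(\mathbf{x}_2)-f(\mathbf{x}_1)=\Bigl(-\frac{1}{\rho u_{\min}}+\frac{L}{2\rho^2u_{\min}^2}\Bigr)\|\mathbf{r}_1\|_2^2,
\]
which exceeds the claimed bound whenever $u_{\min}<L/(2(L+\rho))$.

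So read $u_{\min}$ in the statement as a typo for $u_{\min}^2$. This matches what the paper's proof derives and what $\varepsilon_2$ in Lemma~\ref{lemma:a4} and Theorem~\ref{th:Econv} uses. Prove that version as you did, and drop the ``redo the Young weight'' fallback.
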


\begin{proof}
First, we have the following inequalities:
\begin{equation}
\label{eq:3}
\begin{aligned}
    \|\mathbf{x}_{t+1}-\mathbf{x}_t\|_{\mathbf{h}_t}^2 & = \|\mathbf{x}_{t+1}-\mathbf{x}_t\|_{\mathbf{h}_t}^2 \cdot \frac{\max(\mathbf{h}_t)^2}{\min(\mathbf{h}_t)^2} \cdot \frac{1}{\kappa^2} \\
    &\ge \|\mathbf{x}_{t+1}-\mathbf{x}_t\|_2^2 \cdot \min (\mathbf{h}_t)\cdot\frac{\max(\mathbf{h}_t)^2}{\min(\mathbf{h}_t)^2}  \cdot \frac{1}{\kappa^2} \\
    &= \|\mathbf{x}_{t+1}-\mathbf{x}_t\|_2^2 \cdot\frac{\max(\mathbf{h}_t)^2}{\min(\mathbf{h}_t)}  \cdot \frac{1}{\kappa^2}\\
    &\ge \frac{1}{\kappa^2\min(\mathbf{h}_t)} \|\mathbf{h}_t \odot (\mathbf{x}_{t+1}-\mathbf{x}_t)\|_2^2 \\
    &= \frac{1}{\kappa^2\min(\mathbf{h}_t)} \|\mathbf{r}_t\|_2^2.
\end{aligned}
\end{equation}
Applying the descent Lemma to the algorithm, we have
$$
\begin{aligned}
f(\mathbf{x}_{t+1}) &\le f(\mathbf{x}_t) + \langle \nabla  f(\mathbf{x}_t) ,\mathbf{x}_{t+1}-\mathbf{x}_t \rangle + \frac{L}{2}\|\mathbf{x}_{t+1}-\mathbf{x}_t\|_2^2 \\ 
& = f(\mathbf{x}_t) + \langle \mathbf{m}_t,\mathbf{x}_{t+1}-\mathbf{x}_t\rangle - \langle \mathbf{m}_t-\nabla f(\mathbf{x}_t),\mathbf{x}_{t+1}-\mathbf{x}_t \rangle + \frac{L}{2}\|\mathbf{x}_{t+1}-\mathbf{x}_t\|_2^2\\
&\stackrel{(\bullet)}\le f(\mathbf{x}_t) - \frac{1}{2}\|\mathbf{x}_{t+1}-\mathbf{x}_t\|_{\mathbf{h}_t}^2 - \langle \mathbf{m}_t-\nabla f(\mathbf{x}_t),\mathbf{x}_{t+1}-\mathbf{x}_t \rangle + \frac{L}{2}\|\mathbf{x}_{t+1}-\mathbf{x}_t\|_2^2 \\
&\stackrel{(\circ)}\le f(\mathbf{x}_t) - \frac{1}{2}\|\mathbf{x}_{t+1}-\mathbf{x}_t\|_{\mathbf{h}_t}^2 +\frac{1}{2\beta L}\|\mathbf{m}_t-\nabla f(\mathbf{x}_t)\|_2^2 +\frac{(\beta+1) L}{2}\|\mathbf{x}_{t+1}-\mathbf{x}_t \|_2^2\\
&\le  f(\mathbf{x}_t) - \frac{1}{2}\|\mathbf{x}_{t+1}-\mathbf{x}_t\|_{\mathbf{h}_t}^2 +\frac{1}{2\beta L}\|\mathbf{m}_t-\nabla f(\mathbf{x}_t)\|_2^2+\frac{(\beta +1)L}{2\min(\mathbf{h}_t)^2}\|\mathbf{h}_t \odot (\mathbf{x}_{t+1}-\mathbf{x}_t) \|_2^2 \\
& \stackrel{(\star)}\le f(\mathbf{x}_t) - \frac{1}{2\kappa^2 \min(\mathbf{h}_t)}\|\mathbf{r}_t\|_2^2+\frac{(\beta+1)L}{2\min(\mathbf{h}_t)^2}\|\mathbf{r}_t\|_2^2+\frac{1}{2\beta L}\|\mathbf{s}_t\|_2^2\\
& \stackrel{(\ast)}\le f(\mathbf{x}_t) - \left(\frac{1}{2\kappa^2 \rho \sqrt{t} u_{\max}} - \frac{(\beta + 1)L}{2\rho^2 t u_{\min}^2 }\right)\|\mathbf{r}_t\|_2^2 + \frac{\|\mathbf{s}_t\|_2^2}{2\beta L},
\end{aligned}
$$
where $(\bullet)$ follows from the equality $\langle \mathbf{m}_t , \mathbf{x}_{t+1} - \mathbf{x}_t \rangle+ \|\mathbf{x}_{t+1} - \mathbf{x}_t\|_{\mathbf{h}_t}^2 = 0$; 
$(\circ)$ is due to Young's inequality; 
$(\star)$ results from Inequality (\ref{eq:3}); 
and $(\ast)$ is derived from Lemma \ref{lemma:a2}.

Then, by setting $\beta = \sqrt{t}$ and taking the expectation of both sides, we obtain:
\begin{equation}
\label{eq:4}
\begin{aligned}
    \mathbb{E}[f(\mathbf{x}_{t+1})]&\le f(\mathbf{x}_t) - \left(\frac{1}{2\kappa^2 \rho \sqrt{t} u_{\max}} - \frac{(\sqrt{t}+ 1)L}{2\rho^2 t u_{\min}^2 }\right)\mathbb{E}\|\mathbf{r}_t\|_2^2 + \frac{1}{2\sqrt{t} L} \mathbb{E}\|\mathbf{s}_t\|_2^2\\
    &\le f(\mathbf{x}_t) - \left(\frac{1}{2\kappa^2 \rho \sqrt{t} u_{\max}} - \frac{L}{\rho^2 \sqrt{t} u_{\min}^2 }\right)\mathbb{E}\|\mathbf{r}_t\|_2^2 + \frac{1}{2\sqrt{t} L} \mathbb{E}\|\mathbf{s}_t\|_2^2\\
    & =  f(\mathbf{x}_t) - \left(\frac{1}{2\kappa^2 \rho u_{\max}} - \frac{L}{\rho^2u_{\min}^2 }\right)\frac{\mathbb{E}\|\mathbf{r}_t\|_2^2}{\sqrt{t}} + \frac{ \mathbb{E}\|\mathbf{s}_t\|_2^2}{2\sqrt{t} L}.
\end{aligned} 
\end{equation}
\end{proof}

\subsection{Lemma \ref{lemma:a4}}
\begin{lemma}
\label{lemma:a4}
Let $\mathbf{r}_t,\mathbf{s}_t$ be given in (\ref{eq:def2}). We define
$S_t =\mathbb{E}\|\mathbf{s}_t\|$, $R_t = \mathbb{E}\|\mathbf{r}_t\|$, $P_t = f(\mathbf{x}_t) - f(\mathbf{x}^*) + \frac{2}{L}S_t^2$, $\varepsilon_1 = \frac{\sigma^2}{ L}$, $\varepsilon_{2}=\frac{1}{\rho}\left(\frac{1}{2\kappa^{2} u_{\max}} - \frac{5L}{\rho u_{\min}^2}\right)$, and $\varepsilon_3 = \frac{1}{2L}$. Assume that $\rho$ is sufficiently large such that $\varepsilon_{2}> 0$. Let $\varepsilon_{\min} = \min (\varepsilon_1,\varepsilon_2,\varepsilon_3)$. The following result holds:
$$
\sum_{t=1}^T R_t^2 + S_t^2 \le \frac{P_1 + 2\sigma^2 L^{-1} \log(T+1) }{\varepsilon_{\min}}\cdot \sqrt{T} - 2(\sqrt{T}-1).
$$
\end{lemma}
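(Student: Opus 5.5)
We will prove Lemma~\ref{lemma:a4} with a Lyapunov-type potential argument. Lemma~\ref{lemma:a3} gives a descent inequality for $f$, and a momentum recursion controls $S_t$. The two are combined in $P_t$, which is then telescoped.

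\textbf{Step 1: error recursion for $\mathbf{s}_t$.} Since $\mathbf{m}_{t+1}=\beta_{1,t+1}\mathbf{m}_t+(1-\beta_{1,t+1})\mathbf{g}_{t+1}$, we can write
\[
\mathbf{s}_{t+1}=\beta_{1,t+1}\mathbf{s}_t+\beta_{1,t+1}\bigl(\nabla f(\mathbf{x}_t)-\nabla f(\mathbf{x}_{t+1})\bigr)+(1-\beta_{1,t+1})\bigl(\mathbf{g}_{t+1}-\nabla f(\mathbf{x}_{t+1})\bigr).
\]
The last term has zero conditional mean given the past, so its cross terms vanish in expectation. Next apply $\|a+b\|^2\le(1+c)\|a\|^2+(1+1/c)\|b\|^2$ with $c\approx(t+1)^{-1/2}$, together with $L$-smoothness and the bound $\|\mathbf{x}_{t+1}-\mathbf{x}_t\|\le\|\mathbf{r}_t\|/\min(\mathbf{h}_t)\le\|\mathbf{r}_t\|/(\rho\sqrt t\,u_{\min})$ from Lemma~\ref{lemma:a2}. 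This should give a recursion of the form
\[
S_{t+1}^2\le\bigl(1-(t+1)^{-1/2}\bigr)S_t^2+A_{t+1},\qquad A_{t+1}\lesssim \frac{L^2}{\rho^2u_{\min}^2}\,\frac{R_t^2}{t^{1/2}}+\frac{\sigma^2}{t+1}.
\]

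\textbf{Step 2: convert via Lemma~\ref{lemma:a1}.} Lemma~\ref{lemma:a1} with $\delta=1/2$ (constant $\sqrt2\le 2$) gives
\[
t^{-1/2}S_t^2\le 2\bigl(S_t^2-S_{t+1}^2+A_{t+1}\bigr).
\]
Multiplying by $\tfrac{1}{L}$ and combining with Lemma~\ref{lemma:a3} (after taking total expectation, and writing $\mathbb{E}\|\mathbf{s}_t\|^2\le S_t^2$ in the paper's notation) yields
\[
P_{t+1}\le P_t-\Bigl(\frac{1}{2\kappa^2\rho u_{\max}}-\frac{L}{\rho^2u_{\min}^2}-\frac{cL}{\rho^2u_{\min}^2}\Bigr)\frac{R_t^2}{\sqrt t}-\frac{1}{2L}\frac{S_t^2}{\sqrt t}+\frac{2\sigma^2}{L(t+1)}.
\]
Here $\tfrac{2}{L}A_{t+1}$ absorbs into the $R_t^2$ coefficient, producing the $5L/(\rho u_{\min}^2)$ inside $\varepsilon_2$. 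The $S_t^2$ term uses $-\tfrac{1}{L}t^{-1/2}S_t^2+\tfrac{1}{2L}t^{-1/2}S_t^2$, giving the coefficient $\varepsilon_3=\tfrac{1}{2L}$. The noise terms then involve $\sigma^2/L=\varepsilon_1$.

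\textbf{Step 3: telescope.} Lower-bound both coefficients by $\varepsilon_{\min}$ to get
\[
\varepsilon_{\min}\,t^{-1/2}(R_t^2+S_t^2)\le P_t-P_{t+1}+\frac{2\sigma^2}{L(t+1)}.
\]
Summing over $t=1,\dots,T$, using $P_{T+1}\ge0$ and $\sum_{t}\tfrac{1}{t+1}\le\log(T+1)$, gives
\[
\sum_{t=1}^T t^{-1/2}(R_t^2+S_t^2)\le \frac{P_1+2\sigma^2L^{-1}\log(T+1)}{\varepsilon_{\min}}.
\]
To pass from the weighted sum to the unweighted one, use $t^{-1/2}\ge T^{-1/2}$ and multiply by $\sqrt T$. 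The extra $-2(\sqrt T-1)$ term would come from keeping a slack term: for example, the $\varepsilon_1$ role might provide an additional $+t^{-1/2}$-type decrement, with $\sum_t t^{-1/2}\ge 2(\sqrt{T+1}-1)$. I expect to adjust this bookkeeping to match the stated form, and in the worst case to drop the term.

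\textbf{Main obstacle.} The delicate step is Step 1. The constants from Young's inequality and the factor $\beta_{1,t+1}^2\le(1-(t+1)^{-1/2})$ must combine so that the drift coefficient is exactly $1-(t+1)^{-1/2}$, as Lemma~\ref{lemma:a1} requires. At the same time, the $R_t^2$ contribution must stay at order $L^2/(\rho^2u_{\min}^2)\cdot t^{-1/2}$, so that it is dominated by the descent term once $\rho$ is large, i.e.\ once $\varepsilon_2>0$. I would first try $c=\tfrac12(t+1)^{-1/2}$ and use $(1-a)(1+a/2)\le1-a/2$. I will rescale the potential weight $\tfrac{2}{L}$ if needed.
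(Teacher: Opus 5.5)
Your Steps 1--3 are the paper's own argument: the same splitting of $\mathbf{s}_{t+1}$, Lemma~\ref{lemma:a1} with constant $2$, the potential $P_t$ with weight $\frac{2}{L}$, and the constants $5L/(\rho u_{\min}^2)$ and $\varepsilon_3=\frac{1}{2L}$. What they actually prove is
\[
\sum_{t=1}^T (R_t^2+S_t^2)\le \frac{P_1+2\sigma^2L^{-1}\log(T+1)}{\varepsilon_{\min}}\,\sqrt{T},
\]
that is, the statement \emph{without} the $-2(\sqrt T-1)$ term. That last term is the gap, and the bookkeeping you hope to adjust does not exist.

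The paper produces the term by exactly the mechanism you guess. It adds $\varepsilon_1 t^{-1}$ to both sides of the combined inequality, and then writes $\varepsilon_1 t^{-1}+\varepsilon_2 t^{-1/2}R_t^2+\varepsilon_3 t^{-1/2}S_t^2\le P_t-P_{t+1}+\frac{2\sigma^2}{L(t+1)}$. In doing so it keeps $\varepsilon_1 t^{-1}$ on the left but drops its copy on the right. The two copies simply cancel, so there is no genuine $t^{-1}$ decrement. Consequently the summed inequality $\varepsilon_{\min}\sum_t t^{-1/2}(t^{-1/2}+R_t^2+S_t^2)\le P_1+2\sigma^2L^{-1}\log(T+1)$ is unjustified, and so is the $-\sum_t t^{-1/2}\le -2(\sqrt T-1)$ extracted from it.

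To reach the stated form you would need a real extra saving of order $\varepsilon_{\min}$ in the telescoped inequality, roughly $2\varepsilon_{\min}(1-T^{-1/2})$ after your $t^{-1/2}\ge T^{-1/2}$ step, from somewhere else in the chain. The $\varepsilon_1$ device supplies none. The stated bound may well hold, given how much slack the chain wastes, but neither your proposal nor the paper's proof establishes it. So ``dropping the term'' means proving a strictly weaker lemma; say so explicitly rather than reproducing the paper's step. The weaker bound still yields the $\tilde{\mathcal{O}}(T^{-1/2})$ rate of Theorem~\ref{th:Econv}; only the constant $\hat G$ loses its $-2(\sqrt T-1)$.

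Two smaller points. First, your ``main obstacle'' disappears with the paper's choice of Young parameter, $c=a:=1-\beta_{1,t+1}=(t+1)^{-1/2}$ exactly. Then $(1+a)\beta_{1,t+1}^2=(1-a^2)(1-a)\le 1-a$ and $(1+1/a)\beta_{1,t+1}^2\le 1/a$. This is precisely the hypothesis of Lemma~\ref{lemma:a1}, and it gives $A_{t+1}\le \frac{\sigma^2}{t+1}+\sqrt{2}\,\frac{L^2}{\rho^2u_{\min}^2}\frac{R_t^2}{\sqrt t}$. Your fallback $c=a/2$ only yields drift $1-a/2$, which does not meet that hypothesis and would change the constants.

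Second, the argument only works if $S_t^2$ and $R_t^2$ are read as $\mathbb{E}\|\mathbf{s}_t\|^2$ and $\mathbb{E}\|\mathbf{r}_t\|^2$, and $P_t$ uses $\mathbb{E}f(\mathbf{x}_t)$. With the literal definition $S_t=\mathbb{E}\|\mathbf{s}_t\|$, Jensen gives $S_t^2\le \mathbb{E}\|\mathbf{s}_t\|^2$, the reverse of the inequality you wrote. The paper makes the same silent identification.
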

\begin{proof}
First, we derive the following equalities:
$$
\begin{aligned}
    \mathbf{s}_t &= \mathbf{m}_t-\nabla f(\mathbf{x}_t)\\    
    &=\beta_{1,t} \mathbf{m}_{t-1} + (1-\beta_{1,t}) \mathbf{g}_t - \nabla f(\mathbf{x}_t)\\
    &= \beta_{1,t}(\mathbf{m}_{t-1} - \nabla f(\mathbf{x}_{t-1})) + (1-\beta_{1,t}) \mathbf{g}_t + \beta_{1,t} \nabla f(\mathbf{x}_{t-1}) - \nabla f(\mathbf{x}_t)\\
    & = \underbrace{\beta_{1,t} \mathbf{s}_{t-1}  + \beta_{1,t}(\nabla f(\mathbf{x}_{t-1}) - \nabla f(\mathbf{x}_t))}_{\mathbf{z}_t}+ (1-\beta_{1,t})(\mathbf{g}_t - \nabla f(\mathbf{x}_t)).
\end{aligned}
$$
Then, we have:
$$
\begin{aligned}
   \mathbb{E}\|\mathbf{z}_t\|_2^2 &= \|\beta_{1,t} \mathbf{s}_{t-1}  + \beta_{1,t}(\nabla f(\mathbf{x}_{t-1}) - \nabla f(\mathbf{x}_t))\|_2^2\\ 
   &\stackrel{(\bullet)}\le  (1+\beta)\beta_{1,t}^2\|\mathbf{s}_{t-1}\|_2^2 + \left(1+\frac{1}{\beta}\right)\beta_{1,t}^2\|\nabla f(\mathbf{x}_{t-1}) - \nabla f(\mathbf{x}_t))\|_2^2\\
   &\stackrel{(\circ)}\le (2-\beta_{1,t})\beta_{1,t}^2\|\mathbf{s}_{t-1}\|_2^2 +\left(1+\frac{1}{1-\beta_{1,t}}\right)\beta_{1,t}^2\|\nabla f(\mathbf{x}_{t-1}) - \nabla f(\mathbf{x}_t))\|_2^2\\
   &\stackrel{(\star)}\le (2\beta_{1,t}-\beta_{1,t}^2)\beta_{1,t}\|\mathbf{s}_{t-1}\|_2^2 +\frac{2\beta_{1,t} - \beta_{1,t}^2}{1-\beta_{1,t}}\beta_{1,t}L^2\|\mathbf{x}_{t-1} - \mathbf{x}_t\|_2^2\\
   & \stackrel{(\ast)}= \beta_{1,t}\|\mathbf{s}_{t-1}\|_2^2 +\frac{L^2}{1-\beta_{1,t}}\|\mathbf{x}_{t-1} - \mathbf{x}_t\|_2^2,
\end{aligned}
$$
where 
$(\bullet)$ is due to Young's inequality for any $\beta>0$; 
$(\circ)$ follows from setting $\beta = 1 - \beta_{1,t}$; 
$(\star)$ is due to Assumption \ref{ass:3}; 
and $(\ast)$ results from the fact that $2\beta_{1,t} - \beta_{1,t}^2 - 1 = -(\beta_{1,t} -1)^2\le 0$, which implies $2\beta_{1,t} - \beta_{1,t}^2 \le 1$.

Therefore, we have:
$$
\begin{aligned}
    \mathbb{E}\|\mathbf{s}_t\|_2^2 &\stackrel{(\circ)}= \mathbb{E}\|(1-\beta_{1,t})(\mathbf{g}_t - \nabla f(\mathbf{x}_t))\|_2^2 + \mathbb{E}\|\mathbf{z}_t\|_2^2\\
    &\le \sigma^2(1-\beta_{1,t})^2 + \beta_{1,t}\|\mathbf{s}_{t-1}\|_2^2 +\frac{L^2}{1-\beta_{1,t}}\|\mathbf{x}_{t-1} - \mathbf{x}_t\|_2^2\\
    &\le \sigma^2(1-\beta_{1,t})^2 + \beta_{1,t}\|\mathbf{s}_{t-1}\|_2^2+\frac{L^2}{(1-\beta_{1,t})\min(\mathbf{h}_{t-1})^2}\mathbb{E}\|\mathbf{h}_{t-1}\odot (\mathbf{x}_{t}-\mathbf{x}_{t-1})\|_2^2,
\end{aligned}
$$
where $(\circ)$ is due to Assumption \ref{ass:2}  $\mathbb{E}[\nabla f(x;\xi)] = \nabla f(x)$.
Then,we define
$$
\begin{aligned}
A_t &= \sigma^2(1-\beta_{1,t})^2 +\frac{L^2}{(1-\beta_{1,t})\min(\mathbf{h}_{t-1})^2}R_{t-1}^2.
\end{aligned}
$$
Therefore, we have
$$
\begin{aligned}
    S_t^2 \le \beta_{1,t} S_{t-1}^2+A_t.
\end{aligned}
$$
Then, using Lemma \ref{lemma:a1} and $\beta_{1,t} = 1- t^{-1/2}$, we obtain:
\begin{equation}
\label{eq:5}
\begin{aligned}
t^{-1/2}S_{t}^{2} & \leq 2(S_{t}^{2}-S_{t+1}^{2}+A_{t+1}) \\
 & =2(S_{t}^{2}-S_{t+1}^{2})+2\sigma^{2}(t+1)^{-1}+\frac{2L^{2}}{(t+1)^{-1/2}\min(\mathbf{h}_{t})^{2}} R_{t}^{2} \\
 & \stackrel{(\circ)}\leq 2(S_{t}^{2}-S_{t+1}^{2})+\frac{2\sigma^{2}}{t+1}+2\left(\frac{t+1}{t}\right)^{1/2}\frac{L^{2}R_{t}^{2}}{\rho^{2}u_{\min}^{2}\sqrt{t}} \\
 & \stackrel{(\star)}\leq 2(S_{t}^{2}-S_{t+1}^{2})+\frac{2\sigma^{2}}{t+1}+\frac{4L^{2}R_{t}^{2}}{\rho^{2}u_{\min}^{2}\sqrt{t}},
\end{aligned}
\end{equation}
where $(\circ)$ is due to Lemma \ref{lemma:a2};$(\star)$ relies on $(\frac{t+1}{t})^{1/2} \le 2^{1/2} \le 2$.

Then, from Lemma \ref{lemma:a3}, it follows that: 
$$
\begin{aligned}
    \mathbb{E}[f(\mathbf{x}_{t+1})]-f(\mathbf{x}_t) &\le   - \left(\frac{1}{2\kappa^2 \rho u_{\max}} - \frac{L}{\rho^2u_{\min}^2 }\right)\frac{\mathbb{E}\|\mathbf{r}_t\|_2^2}{\sqrt{t}} + \frac{ \mathbb{E}\|\mathbf{s}_t\|_2^2}{2\sqrt{t} L}\\
    &= - \left(\frac{1}{2\kappa^2 \rho u_{\max}} - \frac{L}{\rho^2u_{\min}^2 }\right)\frac{R_t^2}{\sqrt{t}} + \frac{ S_t^2}{2\sqrt{t} L}.
\end{aligned}
$$

Adding both sides by $\varepsilon_1 t^{-1}+\frac{t^{-1/2}}{2L}S_{t}^2$ yields:
$$
\begin{aligned}
  & \mathbb{E}[f(\mathbf{x}_{t+1})]-f(\mathbf{x}_{t})+\varepsilon_{1}t^{-1}+\frac{t^{-1/2}}{2L}S_{t}^{2} \\
 & \leq  \left(-\frac{1}{2\kappa^{2}\rho u_{\max}}+\frac{L}{\rho^{2} u_{\min}^{2}}\right)\frac{R_{t}^{2}}{\sqrt{t}}+\varepsilon_{1}t^{-1}+\frac{t^{-1/2}}{L}S_{t}^{2} \\
 & \stackrel{(\circ)}\le\left(-\frac{1}{2\kappa^{2}\rho u_{\max}}+\frac{L}{\rho^{2}u_{\min}^{2}}\right)\frac{R_{t}^{2}}{\sqrt{t}}+\varepsilon_{1}t^{-1}+\frac{2}{L}(S_{t}^{2}-S_{t+1}^{2})+\frac{2\sigma^{2}}{L(t+1)}+\frac{4LR_{t}^{2}}{\rho^{2}u_{\min}^{2}\sqrt{t}} \\
 & =-\underbrace{\left(\frac{1}{2\kappa^{2}\rho u_{\max}}-\frac{5L}{\rho^{2}u_{\min}^{2}}\right)}_{\triangleq\varepsilon_{2}}\frac{R_{t}^{2}}{\sqrt{t}}+\varepsilon_{1}t^{-1}+\frac{2}{L}(S_{t}^{2}-S_{t+1}^{2})+\frac{2\sigma^{2}}{L(t+1)},
\end{aligned}
$$
where $(\circ)$ is due to Inequality (\ref{eq:5}).

Using the definition of $P_t,\varepsilon_1,\varepsilon_2,\varepsilon_3$, we further derive:
$$
\varepsilon_1 {t}^{-1} + \varepsilon_2 t^{-1/2} R_t^2 + \varepsilon_3 t^{-1/2} S_t^2 \le P_t - P_{t+1} + \frac{2\sigma^2}{L}(t+1)^{-1}.
$$
This leads to
\begin{equation}
\label{eq:6}
\varepsilon_{\min} t^{-1/2}(t^{-1/2}+R_t^2+S_t^2)\le P_t-P_{t+1}  + \frac{2\sigma^2}{L}(t+1)^{-1} . 
\end{equation}

Summing Inequality (\ref{eq:6}) over $t$ from $1$ to $T$ yields:
$$
\begin{aligned}
 & \varepsilon_{\min}\sum_{t=1}^{T}t^{-1/2}\cdot(t^{-1/2}+R_{t}^{2}+S_{t}^{2}) \\
 & \leq \sum_{t=1}^{T}(P_{t}-P_{t+1} +2\sigma^{2}L^{-1}(t+1)^{-1}) \\
 & \stackrel{(\circ)}\leq P_{1}+2\sigma^{2}L^{-1}\cdot\log(T+1),
\end{aligned}
$$
where $(\circ)$ is due to $\sum_{t=1}^T \frac{1}{t+1} \le \log(T+1)$.

This further leads to
$$
\begin{aligned}
\sum_{t=1}^T R_t^2+ S_t^2 &\le \frac{P_{1}+2\sigma^{2}L^{-1}\cdot\log(T+1)}{\varepsilon_{\min}}\cdot \sqrt{T} - \sum_{t=1}^T t^{-1/2} \\
&\stackrel{(\circ)}\le \frac{P_{1}+2\sigma^{2}L^{-1}\cdot\log(T+1)}{\varepsilon_{\min}}\cdot \sqrt{T} - 2(\sqrt{T}-1)
= \mathcal{O}(\log(T)\sqrt{T}),
\end{aligned}
$$    
where $(\circ)$ is due to $\sum_{t=1}^T t^{-1/2} \ge 2(\sqrt{T}-1)$.

\end{proof}

\section{Proofs of Theorem \ref{th:Econv}}
\label{appendix:b}
\begin{proof}
First, we have the following inequalities:
$$
\begin{aligned}
&\sum_{t=1}^T\|\nabla f(\mathbf{x}_{t+1})\|_2^2
\\& =
\sum_{t=1}^T\|\nabla f(\mathbf{x}_{t+1})-\mathbf{m}_t-\mathbf{h}_t\odot(\mathbf{x}_{t+1}-\mathbf{x}_t)\|_2^2
 \\
 & =
\sum_{t=1}^T\|\nabla f(\mathbf{x}_{t+1})-\nabla f(\mathbf{x}_t)+\nabla f(\mathbf{x}_t)-\mathbf{m}_t-\mathbf{h}_t\odot(\mathbf{x}_{t+1}-\mathbf{x}_{t})\|_2^2\\
 & =\sum_{t=1}^T\|\nabla f(\mathbf{x}_{t+1})-\nabla f(\mathbf{x}_t)-\mathbf{s}_t-\mathbf{h}_t\odot(\mathbf{x}_{t+1}-\mathbf{x}_t)\|_2^2\\
 & = \sum_{t=1}^T [ \|\nabla f(\mathbf{x}_{t+1})-\nabla f(\mathbf{x}_t)\|_2^2 + \|\mathbf{s}_t\|_2^2 + \|\mathbf{h}_t\odot(\mathbf{x}_{t+1}-\mathbf{x}_t)\|_2^2 +2\langle \mathbf{s}_t,\mathbf{h}_t\odot(\mathbf{x}_{t+1}-\mathbf{x}_t) \rangle
 \\
 &  \quad - 2\langle \nabla f(\mathbf{x}_{t+1})-\nabla f(\mathbf{x}_t), \mathbf{s}_t \rangle  - 2\langle \nabla f(\mathbf{x}_{t+1})-\nabla f(\mathbf{x}_t) , \mathbf{h}_t\odot(\mathbf{x}_{t+1}-\mathbf{x}_t) \rangle ]\\
 & \stackrel{(\bullet)}\le \sum_{t=1}^T3(\|\nabla f(\mathbf{x}_{t+1})-\nabla f(\mathbf{x}_t)\|_2^2 +\|\mathbf{s}_t\|_2^2+\|\mathbf{r}_t\|_2^2)\\
 & \stackrel{(\circ)}\le \sum_{t=1}^T 3L^2\|\mathbf{x}_{t+1}-\mathbf{x}_t\|_2^2 +3\|\mathbf{s}_t\|_2^2+3\|\mathbf{r}_t\|_2^2\\
 & \le \sum_{t=1}^T\frac{3L^2}{(\min(\mathbf{h}_t))^2}\|\mathbf{r}_{t}\|_2^2 +3\|\mathbf{s}_t\|_2^2+3\|\mathbf{r}_t\|_2^2,
\end{aligned}
$$
where $(\bullet)$ is due to Young's inequality; $(\circ)$ is due to  Assumption \ref{ass:3}.

Let's take the expectation of both sides,
$$
\begin{aligned}\sum_{t=1}^T\mathbb{E}\|\nabla f(\mathbf{x}_{t+1})\|_2^2 &\le \sum_{t=1}^T\frac{3L^2}{(\min(\mathbf{h}_t))^2}R_t^2 +3S_t^2+3R_t^2 \\
& \stackrel{(\bullet)}\le \sum_{t=1}^T\frac{3L^2}{\rho^2tu_{\min}^2}R_t^2+3(R_t^2+S_t^2)\\
& = \sum_{t=1}^T \frac{3L^2}{\rho^2 t u_{\min}^2}R_t^2+3(R_t^2+S_t^2)\\
&\stackrel{(\circ)}\le \frac{3L^2\eta^2}{\rho^2\epsilon^2}\sum_{t=1}^TR_t^2+3\sum_{t=1}^T(R_t^2+S_t^2)\\
& \le  \left(\frac{3L^2\eta^2}{\rho^2\epsilon^2}+3\right)\left(\sum_{t=1}^T R_t^2 + S_t^2\right)
\\
&\stackrel{(\star)} \le  \frac{3L^2\eta^2+3\rho^2\epsilon^2}{\rho^2\epsilon^2}\left(\frac{f(\mathbf{x}_1)-f(\mathbf{x}^*)+2\sigma^{2}L^{-1}\log(T+1)}{\varepsilon_{\min}} \sqrt{T} - 2(\sqrt{T}-1)\right) \\
&=\mathcal{O}(\log(T)\sqrt{T}) = \tilde{\mathcal{O}}(\sqrt{T}),
\end{aligned}
$$
where $(\bullet)$ results from Lemma \ref{lemma:a2}; $(\circ)$ is due to  $\frac{1}{t}\le 1$; $(\star)$ follows from Lemma \ref{lemma:a4}. Thus, we have 
$$
\min_{t=1,\cdots,T}\mathbb{E}\|\nabla f(\mathbf{x}_{t+1})\|_2^2\le\frac{1}{T}\sum_{t=1}^T\mathbb{E}\|\nabla f(\mathbf{x}_{t+1})\|_2^2 \le \frac{\tilde{\mathcal{O}}(T^{1/2})}{T} =\tilde{\mathcal{O}}(T^{-1/2}).
$$
\end{proof}

\section{Probability Convergence Lemmas}
\label{appendix:c}
This proof refers to the following literature \cite{DBLP:conf/nips/HongL24,DBLP:journals/tmlr/DefossezBBU22,li2020high,Hong2023HighPC}.

\subsection{Definition}
Let $\mathbf{g}_s$ denote the stochastic gradient. We define the noise as $\xi_s = \mathbf{g}_s - \nabla f(\mathbf{x}_s)$ and the coordinate-wise noise as $\xi_{s,i} = \mathbf{g}_{s,i} - \nabla f(\mathbf{x}_s)_i$. Furthermore, we define two auxiliary sequences $\{\mathbf{p}_s\}_{s\ge1}$ and $\{\mathbf{y}_s\}_{s\ge1}$.

\begin{equation}
\label{eq:def3}
\begin{aligned}
\mathbf{p}_1 = \mathbf{0}_d,\mathbf{y}_1=\mathbf{x}_1,\mathbf{p}_s = \frac{\beta_1}{1-\beta_1}(\mathbf{x}_s-\mathbf{x}_{s-1}),\\
\mathbf{y}_s = \mathbf{p}_s+ \mathbf{x}_s ,\forall s \ge 2 ,\\
\xi_s = \mathbf{g}_s - \nabla f(\mathbf{x}_s),\xi_{s,i} = \mathbf{g}_{s,i} - \nabla f(\mathbf{x}_s)_i.
\end{aligned} 
\end{equation}

Then by the definition of Assumption \ref{ass:2.2} , we continue to define useful notations:
\begin{equation}
\label{eq:def4}
\begin{aligned}
 &D_{T}=\sqrt{\log\left(\frac{eT}{\delta}\right)},\quad G_{s}=\operatorname*{max}_{j\in[s]}\|\nabla f(\mathbf{x}_j)\|, \\
 & G_{T}(s)=D_{T}\sqrt{\|\sigma\|^{2}+2G_{s}^{2}},\quad G_{T}=D_{T}\sqrt{\|\sigma\|^{2}+2G^{2}}, \\
 & \hat{\mathbf{m}}_s = \frac{\mathbf{m}_s}{1-\beta_1^s}, \hat{\mathbf{v}}_s = \frac{\mathbf{v}_s}{1-\beta_2^s} ,\\
 & \mathbf{b}_{s}=\sqrt{\mathbf{v}_s}+\epsilon = \sqrt{\beta_{2}{\mathbf{v}}_{s-1}+(1-\beta_{2}){\mathbf{g}}_{s}^{2}}+{\epsilon}, \\
 & \mathbf{a}_{s}=\sqrt{\tilde{\mathbf{v}}_s}+\epsilon=\sqrt{\beta_{2}{\mathbf{v}}_{s-1}+(1-\beta_{2})\left(G_{T}(s)\mathbf{1}_{d}\right)^{2}}+{\epsilon}.
\end{aligned}
\end{equation}

It is noted that $y_t$ can also be expressed in the following form:
\begin{equation}
\label{def:5}
   \mathbf{y}_{s+1} = \mathbf{y}_s - \eta_s \phi_s \odot \frac{\mathbf{g}_s}{\mathbf{b}_s} + \frac{\beta_1}{1-\beta_1} \left( \frac{\eta_s \mathbf{b}_{s-1} \odot \phi_{s}}{\eta_{s-1} \mathbf{b}_s \odot \phi_{s-1}} - \mathbf{1}_d \right) \odot (\mathbf{x}_s - \mathbf{x}_{s-1}). 
\end{equation}
Without loss of generality, for $\phi_s \in \{\alpha, \gamma\}$, we set $\alpha = 1$ and $\gamma \in (0, 1)$. 

Then, we define $\Delta_s =\left( \frac{\eta_s b_{s-1} \odot \phi_{s}}{\eta_{s-1} b_s \odot \phi_{s-1}} - \mathbf{1}_d \right)$.

%%%%%%%%%%%%%%%%%%%%%%%%%%%%%%%%%%%%

\subsection{Lemma \ref{lemma:1}}
\begin{lemma}
\label{lemma:1}
Suppose that $\{\alpha_s\}_{s\ge1}$ is a real number sequence. Given $0\le \beta_1 < \beta_2 \le 1$,$\epsilon>0$, we define $c_s = \sum_{j=1}^s \beta_1^{s-j} \alpha_j,d_s = \frac{1}{1-\beta_1^s}\sum_{j=1}^s \beta_1^{s-j}\alpha_j$ and $e_s = \sum_{j=1}^s \beta_2^{s-j} \alpha_j^2$,then
$$
\begin{aligned}
\sum_{s=1}^t\frac{c_s^2}{\epsilon+e_s}\leq\frac{1}{(1-\beta_1)(1-\beta_1/\beta_2)}\left(\log\left(1+\frac{e_t}{\epsilon}\right)-t\log\beta_2\right),\quad\forall t\geq1,\\
\sum_{s=1}^t\frac{d_s^2}{\epsilon+e_s}\leq\frac{1}{(1-\beta_1)^2(1-\beta_1/\beta_2)}\left(\log\left(1+\frac{e_t}{\epsilon}\right)-t\log\beta_2\right),\quad\forall t\geq1.
\end{aligned}
$$
\end{lemma}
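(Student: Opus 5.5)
This is the standard lemma from Défossez et al.\ and Hong--Lin, and I would follow their argument. The plan has three steps: a Cauchy--Schwarz bound on $c_s^2$ in terms of weighted $\alpha_j^2$, an exchange of the order of summation, and a telescoping logarithmic bound of the form $\sum_j \alpha_j^2/(\epsilon+e_j)$.

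\textbf{Step 1 (Cauchy--Schwarz).} Write $\beta_1^{s-j} = (\beta_1/\sqrt{\beta_2})^{s-j}\,\beta_2^{(s-j)/2}$. Cauchy--Schwarz then gives
$$c_s^2 \le \Bigl(\sum_{j=1}^s (\beta_1/\beta_2)^{s-j}\Bigr)\Bigl(\sum_{j=1}^s \beta_1^{s-j}\beta_2^{s-j}\alpha_j^2 \cdot \beta_2^{-(s-j)}\cdots\Bigr).$$
It is cleaner to split instead as $\beta_1^{s-j} = \sqrt{\beta_1^{s-j}}\cdot\sqrt{\beta_1^{s-j}}$. This yields
$$c_s^2 \le \frac{1}{1-\beta_1}\sum_{j=1}^s \beta_1^{s-j}\alpha_j^2.$$
Next use $e_s \ge \beta_2^{s-j} e_j$ for $j\le s$, which holds since all terms are nonnegative. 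This gives $\epsilon + e_s \ge \beta_2^{s-j}(\epsilon+e_j)$, and hence
$$\frac{c_s^2}{\epsilon+e_s}\le \frac{1}{1-\beta_1}\sum_{j=1}^s (\beta_1/\beta_2)^{s-j}\,\frac{\alpha_j^2}{\epsilon+e_j}.$$

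\textbf{Step 2 (exchange of summation).} Summing over $s=1,\dots,t$ and swapping the order of summation, the geometric series in $\beta_1/\beta_2<1$ is at most $1/(1-\beta_1/\beta_2)$. So
$$\sum_{s=1}^t \frac{c_s^2}{\epsilon+e_s} \le \frac{1}{(1-\beta_1)(1-\beta_1/\beta_2)}\sum_{j=1}^t \frac{\alpha_j^2}{\epsilon+e_j}.$$

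\textbf{Step 3 (logarithmic bound).} This is the key step. Since $e_j = \beta_2 e_{j-1} + \alpha_j^2$, we have
$$\frac{\alpha_j^2}{\epsilon+e_j} = 1 - \frac{\epsilon+\beta_2 e_{j-1}}{\epsilon+e_j} \le -\log\frac{\epsilon+\beta_2 e_{j-1}}{\epsilon+e_j},$$
using $1-x\le -\log x$. Then $\epsilon+\beta_2 e_{j-1} \ge \beta_2(\epsilon+e_{j-1})$, with $e_0=0$ and using $\beta_2\le 1$. The $j$-th term is therefore at most $\log(\epsilon+e_j) - \log(\epsilon+e_{j-1}) - \log\beta_2$. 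Telescoping from $j=1$ to $t$ gives
$$\log\Bigl(1+\frac{e_t}{\epsilon}\Bigr) - t\log\beta_2,$$
which proves the first inequality.

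\textbf{Second inequality.} Observe that $d_s = c_s/(1-\beta_1^s)$, so an extra factor $(1-\beta_1^s)^{-2}$ appears, and it is not uniformly bounded by $(1-\beta_1)^{-1}$ in general. I would avoid comparing with $c_s$ and instead redo Step 1 directly. Cauchy--Schwarz with weights $\beta_1^{s-j}$ gives
$$d_s^2 \le \frac{1}{(1-\beta_1^s)^2}\Bigl(\sum_{j=1}^s\beta_1^{s-j}\Bigr)\sum_{j=1}^s\beta_1^{s-j}\alpha_j^2 = \frac{1}{(1-\beta_1)(1-\beta_1^s)}\sum_{j=1}^s\beta_1^{s-j}\alpha_j^2.$$
Then $1-\beta_1^s\ge 1-\beta_1$ for $s\ge1$ yields the factor $(1-\beta_1)^{-2}$, and Steps 2--3 carry over verbatim.

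The only delicate points are the direction of the inequality $\epsilon+e_s\ge\beta_2^{s-j}(\epsilon+e_j)$ and the telescoping at $j=1$. Both are routine, so I expect no real obstacle; the main care is in the logarithmic telescoping of Step 3.
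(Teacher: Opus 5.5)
Your proof is correct and is essentially the argument the paper defers to (Hong--Lin, Lemma A.2, which builds on D\'efossez et al.'s Lemmas 5.2--5.3). It uses Cauchy--Schwarz with the split $\beta_1^{s-j}=\sqrt{\beta_1^{s-j}}\sqrt{\beta_1^{s-j}}$, the comparison $\epsilon+e_s\ge\beta_2^{s-j}(\epsilon+e_j)$, an exchange of summation, and the telescoping bound via $1-x\le-\log x$, with $1-\beta_1^s\ge 1-\beta_1$ handling the bias-corrected $d_s$. The only cleanup needed is to delete the garbled first split in Step 1, which you abandon and never use.
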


\begin{proof}
See the proof of [\cite{DBLP:conf/nips/HongL24}, Lemma A.2].
\end{proof}

\subsection{Lemma \ref{lemma:2}}
% 马氏差分序列证明
\begin{lemma}
\label{lemma:2}
Suppose $\{Z_s\}_{s\in [T]}$ is a martingale difference sequence with respect to $\zeta_1,\cdots,\zeta_T$. Assume that for each $s\in [T]$,$\sigma_s$ is a random variable only dependent by $\zeta_1,\cdots,\zeta_T$ and satisfies that
$$
\mathbb{E}\left[\exp(Z_s^2/\sigma_s^2)\mid\zeta_1,\cdots,\zeta_{s-1}\right]\leq\mathrm{e},
$$
then for any $\lambda >0$, and for any $\delta \in (0,1)$,it holds that
$$
\mathbb{P}\left(\sum_{s=1}^TZ_s>\frac{1}{\lambda}\log\left(\frac{1}{\delta}\right)+\frac{3}{4}\lambda\sum_{s=1}^T\sigma_s^2\right)\leq\delta.
$$
\end{lemma}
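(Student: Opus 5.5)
The plan is the standard exponential-moment (Chernoff) argument for martingale differences with sub-Gaussian-type conditional tails. Let $\mathcal{F}_{s} = \sigma(\zeta_1,\dots,\zeta_{s})$. I read the hypothesis as saying that $\sigma_s$ is $\mathcal{F}_{s-1}$-measurable, i.e.\ predictable. The statement says "dependent on $\zeta_1,\cdots,\zeta_T$", but predictability is what the argument needs; I would state this reading explicitly. Set
\[
Y_k=\exp\Big(\lambda\sum_{s=1}^k Z_s-\tfrac34\lambda^2\sum_{s=1}^k\sigma_s^2\Big).
\]
The goal is to show $\mathbb{E}[Y_T]\le 1$. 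Markov's inequality then gives
\[
\mathbb{P}\Big(\lambda\sum_s Z_s-\tfrac34\lambda^2\sum_s\sigma_s^2>\log(1/\delta)\Big)\le\delta,
\]
and dividing the event by $\lambda$ yields exactly the claimed inequality.

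\textbf{Key step.} The core is the one-step conditional bound
\[
\mathbb{E}\big[\exp(\lambda Z_s)\mid\mathcal{F}_{s-1}\big]\le\exp\big(\tfrac34\lambda^2\sigma_s^2\big),
\]
which uses $\mathbb{E}[Z_s\mid\mathcal{F}_{s-1}]=0$ and $\mathbb{E}[\exp(Z_s^2/\sigma_s^2)\mid\mathcal{F}_{s-1}]\le e$. I would prove it in two regimes, with $\sigma_s$ treated as a constant under the conditioning.

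\emph{Small $\lambda$, say $\lambda\sigma_s\le 2/3$.} Use the elementary inequality $e^x\le x+e^{x^2}$ for all real $x$. Taking conditional expectations, the linear term vanishes, leaving
\[
\mathbb{E}\big[e^{\lambda^2Z_s^2}\big]=\mathbb{E}\big[(e^{Z_s^2/\sigma_s^2})^{\lambda^2\sigma_s^2}\big].
\]
Jensen's inequality (the exponent $\lambda^2\sigma_s^2$ is at most $1$) bounds this by $e^{\lambda^2\sigma_s^2}$. This exponent exceeds the target $\tfrac34\lambda^2\sigma_s^2$, so the crude route fails here. I expect to need the sharper inequality $e^x\le x+e^{\frac34 x^2}$ or a similar constant.

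\emph{Large $\lambda$.} Use $\lambda x\le \tfrac{3}{8}\lambda^2\sigma_s^2+\tfrac{2}{3}x^2/\sigma_s^2$. Then Jensen gives
\[
\mathbb{E}\big[e^{\lambda Z_s}\big]\le e^{\frac38\lambda^2\sigma_s^2}\,e^{2/3}.
\]
Since $e^{2/3}\le e^{\frac38\lambda^2\sigma_s^2}$ once $\lambda^2\sigma_s^2\ge 16/9$, this regime closes.

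\textbf{Main obstacle.} Tuning the constants so that the two regimes jointly cover all $\lambda\sigma_s$ with the constant $\tfrac34$ is the delicate part. I would follow the known proof of Li and Orabona (the lemma is Lemma 1 of \cite{li2020high}), splitting at $\lambda\sigma_s=1$ (or a comparable threshold) and verifying the scalar inequality $e^x\le x+e^{9x^2/16}$, which holds for all $x$ by a calculus check on $h(x)=e^{9x^2/16}+x-e^x$. Granting that scalar inequality, the small-$\lambda$ case gives $\exp(\tfrac{9}{16}\lambda^2\sigma_s^2)\le\exp(\tfrac34\lambda^2\sigma_s^2)$.

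\textbf{Conclusion.} With the one-step bound in hand, predictability of $\sigma_s$ gives
\[
\mathbb{E}[Y_k\mid\mathcal{F}_{k-1}]=Y_{k-1}\,\mathbb{E}\big[e^{\lambda Z_k}\mid\mathcal{F}_{k-1}\big]\,e^{-\frac34\lambda^2\sigma_k^2}\le Y_{k-1}.
\]
So $Y_k$ is a supermartingale with $Y_0=1$. The tower property gives $\mathbb{E}[Y_T]\le1$, and Markov's inequality finishes the proof as described at the start.
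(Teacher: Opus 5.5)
Your proposal is correct and follows the same route as the proof the paper cites (Li--Orabona, Lemma~1). That route has two parts: first the one-step bound $\mathbb{E}[e^{\lambda Z_s}\mid\mathcal{F}_{s-1}]\le e^{\frac34\lambda^2\sigma_s^2}$, obtained from $e^x\le x+e^{9x^2/16}$ with conditional Jensen for small $\lambda\sigma_s$ and from Young's inequality for large $\lambda\sigma_s$; then the supermartingale $Y_k$ and Markov's inequality. Your reading that $\sigma_s$ is $\mathcal{F}_{s-1}$-measurable is the intended one, and it is necessary: if $\sigma_s=|Z_s|$ were allowed, the hypothesis would hold automatically for every mean-zero $Z_s$, and the bound fails for a skewed two-point $Z_1$.

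The one thing to fix is the split point, which must be $\lambda\sigma_s=4/3$, not $1$. Jensen in the small regime needs $\tfrac{9}{16}\lambda^2\sigma_s^2\le 1$, i.e.\ $\lambda\sigma_s\le 4/3$. Your large-regime bound $e^{\frac38\lambda^2\sigma_s^2+\frac23}\le e^{\frac34\lambda^2\sigma_s^2}$ holds exactly when $\lambda\sigma_s\ge 4/3$. So the two cases meet at $4/3$ with no gap, whereas a split at $1$ would apply the large-regime bound on $1\le\lambda\sigma_s<4/3$, where it fails.
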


\begin{proof}
    See the proof of [\cite{li2020high}, Lemma 1].
\end{proof}
%%%%%%%%%%%%%%%%%%%%%%%%%%%%%%%%%%%

\subsection{Lemma \ref{lemma:3}}
% m,a,b相关的不等式bound
\begin{lemma}
\label{lemma:3}
Let $\mathbf{g}_s,\mathbf{m}_s,\hat{\mathbf{m}}_s$ be given in Algorithm \ref{alg:MGUP-AdamW} and $\mathbf{b}_s$, be defined in (\ref{eq:def4}). If $0\le \beta_1<\beta_2<1$ and $\mathcal{F}_i(t)=1+\frac{1}{\epsilon^2}\sum_{s=1}^t\mathbf{g}_{s,i}^2$ then $\forall t\ge1$,we have,

$$
\begin{aligned}
 & \sum_{s=1}^{t}\left\|\frac{\mathbf{g}_s}{\mathbf{b}_s}\right\|^{2}\leq\frac{1}{1-\beta_{2}}\sum_{i=1}^{d}\log\left(\frac{\mathcal{F}_{i}(t)}{\beta_{2}^{t}}\right), \\
 & \sum_{s=1}^{t}\left\|\frac{\mathbf{m}_s}{\mathbf{b}_s}\right\|^{2}\leq\frac{1-\beta_{1}}{(1-\beta_{2})(1-\beta_{1}/\beta_{2})}\sum_{i=1}^{d}\log\left(\frac{\mathcal{F}_{i}(t)}{\beta_{2}^{t}}\right), \\
 & \sum_{s=1}^{t}\left\|\frac{\mathbf{m}_s}{\mathbf{b}_{s+1}}\right\|^{2}\leq\frac{1-\beta_{1}}{\beta_{2}(1-\beta_{2})(1-\beta_{1}/\beta_{2})}\sum_{i=1}^{d}\log\left(\frac{\mathcal{F}_{i}(t)}{\beta_{2}^{t}}\right), \\
 & \sum_{s=1}^{t}\left\|\frac{\hat{\mathbf{m}}_{s}}{\mathbf{b}_s}\right\|^{2}\leq\frac{1}{(1-\beta_{2})(1-\beta_{1}/\beta_{2})}\sum_{i=1}^{d}\log\left(\frac{\mathcal{F}_{i}(t)}{\beta_{2}^{t}}\right).
\end{aligned}
$$

\end{lemma}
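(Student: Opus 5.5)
The plan is to reduce all four bounds, coordinate by coordinate, to Lemma~\ref{lemma:1}, applied with $\alpha_j = \mathbf{g}_{j,i}$ and with the regularizer $\epsilon$ of that lemma replaced by a suitable rescaling of $\epsilon^2$. Fix a coordinate $i$. Since $\mathbf{v}_{s,i} = (1-\beta_2)\sum_{j=1}^s \beta_2^{s-j}\mathbf{g}_{j,i}^2 = (1-\beta_2)e_s$ and $\mathbf{b}_{s,i} = \sqrt{\mathbf{v}_{s,i}}+\epsilon$, the elementary inequality $(\sqrt{a}+\epsilon)^2 \ge a+\epsilon^2$ gives
\[
\mathbf{b}_{s,i}^2 \ge (1-\beta_2)\Big(\tfrac{\epsilon^2}{1-\beta_2} + e_s\Big).
\]
Every squared ratio is therefore bounded by $\frac{1}{1-\beta_2}$ times a term of the form $\frac{(\cdot)^2}{\epsilon' + e_s}$ with $\epsilon' = \epsilon^2/(1-\beta_2)$.

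For the first bound I will not invoke Lemma~\ref{lemma:1}. Instead I use the standard telescoping argument $\frac{\alpha_s^2}{\epsilon'+e_s} \le \log\frac{\epsilon'+e_s}{\epsilon'+\beta_2 e_{s-1}}$, which follows from $1-x\le -\log x$ with $x = \frac{\epsilon'+e_s-\alpha_s^2}{\epsilon'+e_s}$. Since $\epsilon' + \beta_2 e_{s-1} \ge \beta_2(\epsilon' + e_{s-1})$, summing over $s$ yields
\[
\sum_{s=1}^t \frac{\alpha_s^2}{\epsilon'+e_s} \le \log\Big(1+\frac{e_t}{\epsilon'}\Big) - t\log\beta_2 .
\]
Finally $e_t/\epsilon' = (1-\beta_2)\sum_{j\le t}\beta_2^{t-j}\mathbf{g}_{j,i}^2/\epsilon^2 \le \sum_{j\le t}\mathbf{g}_{j,i}^2/\epsilon^2$, so the logarithmic term is at most $\log(\mathcal{F}_i(t)/\beta_2^t)$. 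Summing over $i$ gives the first bound.

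For the second bound, note $\mathbf{m}_{s,i} = (1-\beta_1)c_s$. Then $\frac{\mathbf{m}_{s,i}^2}{\mathbf{b}_{s,i}^2} \le \frac{(1-\beta_1)^2}{1-\beta_2}\cdot\frac{c_s^2}{\epsilon'+e_s}$, and Lemma~\ref{lemma:1} supplies a factor $\frac{1}{(1-\beta_1)(1-\beta_1/\beta_2)}$. The product is exactly the stated constant $\frac{1-\beta_1}{(1-\beta_2)(1-\beta_1/\beta_2)}$, with the same logarithmic comparison as above. The fourth bound follows the same way, using $\hat{\mathbf{m}}_{s,i} = (1-\beta_1)d_s$ and the second inequality of Lemma~\ref{lemma:1}: the factors $(1-\beta_1)^2$ cancel, leaving $\frac{1}{(1-\beta_2)(1-\beta_1/\beta_2)}$.

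The third bound is the only one needing extra care, because the denominator involves $\mathbf{b}_{s+1}$. I use $\mathbf{v}_{s+1,i} \ge \beta_2\mathbf{v}_{s,i}$, hence $\mathbf{b}_{s+1,i} \ge \sqrt{\beta_2}\,\mathbf{b}_{s,i}$ since $\epsilon \ge \sqrt{\beta_2}\,\epsilon$. This gives $\|\mathbf{m}_s/\mathbf{b}_{s+1}\|^2 \le \beta_2^{-1}\|\mathbf{m}_s/\mathbf{b}_s\|^2$, and the second bound then yields the extra $1/\beta_2$. Note that $\mathbf{b}_{t+1}$ is well defined for the last term.

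The main point to verify is the constant bookkeeping when converting from $\epsilon'$ to $\mathcal{F}_i(t)$, in particular that the $(1-\beta_2)$ factor inside the logarithm only helps. Everything else is a direct invocation of Lemma~\ref{lemma:1}, i.e.\ [\cite{DBLP:conf/nips/HongL24}, Lemma A.2]. Note also that $\phi_s$ does not appear, so the MGUP mask plays no role here.
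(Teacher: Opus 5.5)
Your proposal is correct and follows essentially the same route as the paper. Both arguments lower-bound $\mathbf{b}_{s,i}^2$ coordinatewise by $(1-\beta_2)$ times a regularized $e_s$, apply Lemma~\ref{lemma:1} with $\alpha_j=\mathbf{g}_{j,i}$, and relax the logarithm to $\log(\mathcal{F}_i(t)/\beta_2^t)$. Your deviations are minor and all valid: the regularizer $\epsilon^2/(1-\beta_2)$ instead of $\epsilon^2$, hand telescoping for the first bound instead of citing Lemma~\ref{lemma:1} with $\beta_1=0$, and deriving the third bound from the second via $\mathbf{b}_{s+1}\ge\sqrt{\beta_2}\,\mathbf{b}_s$, where the paper re-applies Lemma~\ref{lemma:1} with regularizer $\epsilon^2/\beta_2$.
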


\begin{proof}
First, we have:
$$
\epsilon^2 \geq \epsilon^2 (1 - \beta_2^s) \geq \epsilon^2 (1 - \beta_2).
$$

Next, the following inequalities and equality hold:
$$
\mathbf{b}_{s,i}^{2} \geq \mathbf{v}_{s,i}^{2} + \epsilon^{2} \geq (1 - \beta_{2}) \left( \sum_{j=1}^{s} \beta_{2}^{s-j} \mathbf{g}_{j,i}^{2} + \epsilon^{2} \right), \quad \mathbf{m}_{s,i} = (1 - \beta_{1}) \sum_{j=1}^{s} \beta_{1}^{s-j} \mathbf{g}_{j,i}.
$$
For the first expression, it follows that:
$$
\begin{aligned}
\sum_{s=1}^{t} \frac{\mathbf{g}_{s,i}^{2}}{\mathbf{b}_{s,i}^{2}} & \leq \frac{1}{1 - \beta_2} \sum_{s=1}^t \frac{\mathbf{g}_{s,i}^2}{\epsilon^2 + \sum_{j=1}^s \beta_2^{s-j} \mathbf{g}_{j,i}^2}.\\
&\stackrel{(\circ)}\leq \frac{1}{1 - \beta_{2}} \left[ \log \left( 1 + \frac{1}{\epsilon^{2}} \sum_{s=1}^{t} \beta_{2}^{t-s} \mathbf{g}_{s,i}^{2} \right) - t \log \beta_{2} \right] \\
&\leq \frac{1}{1 - \beta_{2}} \log \left( \frac{\mathcal{F}_{i}(t)}{\beta_{2}^{t}} \right),
\end{aligned}
$$
where $(\circ)$ is due to using Lemma \ref{lemma:1}.

For the second expression, we have:
$$
\begin{aligned}
\sum_{s=1}^{t} \frac{\mathbf{m}_{s,i}^{2}}{\mathbf{b}_{s,i}^{2}} &\leq \frac{(1 - \beta_1)^2}{1 - \beta_2} \cdot \sum_{s=1}^t \frac{ \left( \sum_{j=1}^s \beta_1^{s-j} \mathbf{g}_{j,i} \right)^2 }{ \epsilon^2 + \sum_{j=1}^s \beta_2^{s-j} \mathbf{g}_{j,i}^2 }
\\ & \stackrel{(\circ)}\leq \frac{(1 - \beta_{1})^{2}}{1 - \beta_{2}} \cdot \frac{1}{(1 - \beta_{1})(1 - \beta_{1}/\beta_{2})} \left[ \log \left( 1 + \frac{1}{\epsilon^{2}} \sum_{s=1}^{t} \beta_{2}^{t-s} \mathbf{g}_{s,i}^{2} \right) - t \log \beta_{2} \right] \\
& = \frac{1 - \beta_{1}}{(1 - \beta_{2})(1 - \beta_{1}/\beta_{2})} \log \left( \frac{\mathcal{F}_{i}(t)}{\beta_{2}^{t}} \right),
\end{aligned}
$$
where $(\circ)$ is due to using Lemma \ref{lemma:1} and setting $\beta_2<1$.

For the third inequality, we derive:
$$
\begin{aligned}
\sum_{s=1}^{t} \frac{\mathbf{m}_{s,i}^{2}}{\mathbf{b}_{s+1,i}^{2}} & \leq \sum_{s=1}^t \frac{ \left[ (1 - \beta_1) \sum_{j=1}^s \beta_1^{s-j} \mathbf{g}_{j,i} \right]^2 }{ \epsilon^2 (1 - \beta_2) + (1 - \beta_2) \sum_{j=1}^{s+1} \beta_2^{s+1-j} \mathbf{g}_{j,i}^2 } \\
& = \sum_{s=1}^t \frac{ (1 - \beta_1)^2 \left( \sum_{j=1}^s \beta_1^{s-j} \mathbf{g}_{j,i} \right)^2 }{ \epsilon^2 (1 - \beta_2) + (1 - \beta_2) \beta_2 \sum_{j=1}^s \beta_2^{s-j} \mathbf{g}_{j,i}^2 }\\
& = \frac{(1 - \beta_1)^2}{(1 - \beta_2) \beta_2} \cdot \sum_{s=1}^t \frac{ \left( \sum_{j=1}^s \beta_1^{s-j} \mathbf{g}_{j,i} \right)^2 }{ \frac{\epsilon^2}{\beta_2} + \sum_{j=1}^s \beta_2^{s-j} \mathbf{g}_{j,i}^2 } \\
& \stackrel{(\circ)}\leq \frac{(1 - \beta_1)^2}{(1 - \beta_2) \beta_2} \cdot \frac{1}{(1 - \beta_1)(1 - \beta_1 / \beta_2)} \left[ \log \left( 1 + \frac{\beta_2}{\epsilon^2} \sum_{s=1}^t \beta_2^{t-s} \mathbf{g}_{s,i}^2 \right) - t \log \beta_2 \right] \\
& \leq \frac{1 - \beta_{1}}{\beta_{2} (1 - \beta_{2}) (1 - \beta_{1} / \beta_{2})} \log \left( \frac{\mathcal{F}_{i}(t)}{\beta_{2}^{t}} \right),
\end{aligned}
$$
where $(\circ)$ is due to using Lemma \ref{lemma:1} and setting $\beta_2<1$.

For the fourth inequality, we derive:
$$
\begin{aligned}
\sum_{s=1}^t\frac{\hat{\mathbf{m}}_{s,i}^2}{\mathbf{b}_{s,i}^2}&\leq\frac{(1-\beta_1)^2}{1-\beta_2}\cdot\sum_{s=1}^t\frac{\left(\frac{1}{1-\beta_1^s}\sum_{j=1}^s\beta_1^{s-j}\mathbf{g}_{j,i}\right)^2}{\epsilon^2+\sum_{j=1}^s\beta_2^{s-j}\mathbf{g}_{j,i}^2}\\
& \leq\frac{(1-\beta_1)^2}{1-\beta_2}\cdot\frac{1}{(1-\beta_1)^2(1-\beta_1/\beta_2)}\left[\log\left(1+\frac{1}{\epsilon^2}\sum_{s=1}^t\beta_2^{t-s}\mathbf{g}_{s,i}^2\right)-t\log\beta_2\right]\\
&\leq\frac{1}{(1-\beta_{2})(1-\beta_{1}/\beta_{2})}\operatorname{log}\left(\frac{\mathcal{F}_{i}(t)}{\beta_{2}^{t}}\right),
\end{aligned}
$$
where $(\circ)$ is due to using Lemma \ref{lemma:1} and setting $\beta_2\le1$.

\end{proof}

\subsection{Lemma \ref{lemma:4}}
% \Delta bound
\begin{lemma}
\label{lemma:4}
Let $\eta_s,\eta_{s-1},\gamma, \mathbf\mathbf{b}_s,\mathbf{b}_{s-1}$ be given in Algorithm \ref{alg:MGUP-AdamW} and (\ref{eq:def3}), then we have
$$
\left|\frac{\eta_{s}\mathbf{b}_{s-1,i}\phi_{s,i}}{\eta_{s-1}\mathbf{b}_{s,i}\phi_{s-1,i}}-1\right|\leq\omega,\quad\forall t\geq2,
$$
where $\omega=c_0\sqrt{\frac{1+\beta_2}{\beta_2}}+1,c_0 = \max\{1,\gamma,1/\gamma\}$

\end{lemma}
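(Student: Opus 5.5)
The plan is to bound the ratio factor by factor, $\frac{\eta_s}{\eta_{s-1}}\cdot\frac{\mathbf{b}_{s-1,i}}{\mathbf{b}_{s,i}}\cdot\frac{\phi_{s,i}}{\phi_{s-1,i}}$, since every quantity is positive. Once the product is bounded by $\omega-1$, the triangle inequality gives $|x-1|\le x+1\le\omega$.

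\textbf{Step 1 (stepsize factor).} In the setting of Theorem \ref{th:Highconv} the base stepsize is $\eta_s=\eta\frac{\sqrt{1-\beta_2^s}}{1-\beta_1^s}$. I would use the convention adopted in Remark \ref{remark2}, where the bias-correction factor is absorbed into the constant $\eta$. Then $\eta_s/\eta_{s-1}=1$, or at least this ratio is absorbed into the stated constant. If one insists on keeping the bias correction, I would instead bound the ratio by $\sqrt{(1-\beta_2^s)/(1-\beta_2^{s-1})}\cdot\frac{1-\beta_1^{s-1}}{1-\beta_1^s}\le\sqrt{1+\beta_2}$, but the target $\omega$ suggests the authors take this factor to be $1$.

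\textbf{Step 2 (mask factor).} With the normalization $\phi_s\in\{1,\gamma\}$, the ratio $\phi_{s,i}/\phi_{s-1,i}$ lies in $\{1,\gamma,1/\gamma\}$. Hence it is at most $c_0=\max\{1,\gamma,1/\gamma\}$. This is exactly where $\gamma>0$ is needed.

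\textbf{Step 3 (second-moment factor).} This is the main obstacle, since $\mathbf{b}_{s-1,i}/\mathbf{b}_{s,i}$ is a ratio of adaptive denominators. The approach is:
\begin{itemize}
\item Use $\mathbf{v}_{s,i}=\beta_2\mathbf{v}_{s-1,i}+(1-\beta_2)\mathbf{g}_{s,i}^2\ge\beta_2\mathbf{v}_{s-1,i}$, which gives $\sqrt{\mathbf{v}_{s,i}}\ge\sqrt{\beta_2}\sqrt{\mathbf{v}_{s-1,i}}$.
\item Therefore $\mathbf{b}_{s,i}=\sqrt{\mathbf{v}_{s,i}}+\epsilon\ge\sqrt{\beta_2}\,(\sqrt{\mathbf{v}_{s-1,i}}+\epsilon)=\sqrt{\beta_2}\,\mathbf{b}_{s-1,i}$, using $\sqrt{\beta_2}\le1$ on the $\epsilon$ term.
\item This yields $\mathbf{b}_{s-1,i}/\mathbf{b}_{s,i}\le 1/\sqrt{\beta_2}\le\sqrt{(1+\beta_2)/\beta_2}$, which matches the stated form.
\end{itemize}
The only delicate point is handling the $\epsilon$ offset and the bias-corrected variant of $\mathbf{v}_s$, should the analysis use $\hat{\mathbf{v}}_s$. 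In that case the extra factor $(1-\beta_2^{s})/(1-\beta_2^{s-1})\le 1+\beta_2$ produces precisely $\sqrt{(1+\beta_2)/\beta_2}$. I expect this to be the reason the constant contains $1+\beta_2$.

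\textbf{Step 4 (combine).} Multiplying the three factors gives a nonnegative ratio of at most $c_0\sqrt{(1+\beta_2)/\beta_2}$. Subtracting $1$, its absolute value is at most $c_0\sqrt{(1+\beta_2)/\beta_2}+1=\omega$ for all $s\ge2$.
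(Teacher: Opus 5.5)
Your proposal is correct and follows the paper's proof. You bound $\eta_s/\eta_{s-1}\le\sqrt{1+\beta_2}$, then $\mathbf{b}_{s-1,i}/\mathbf{b}_{s,i}\le 1/\sqrt{\beta_2}$ via $\mathbf{v}_{s,i}\ge\beta_2\mathbf{v}_{s-1,i}$, then $\phi_{s,i}/\phi_{s-1,i}\le c_0$, and finish with $|x-1|\le x+1$.

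The one thing to fix is the hedge in Step 1. Your ``fallback'' is the actual route, for two reasons:
\begin{itemize}
\item In this high-probability analysis the paper keeps the bias-corrected $\eta_s=\eta\sqrt{1-\beta_2^s}/(1-\beta_1^s)$. The absorption in Remark~\ref{remark2} concerns Theorem~\ref{th:Econv}, and Lemma~\ref{lemma:10} explicitly uses $\eta_{s-1}-\eta_s\neq 0$.
\item $\mathbf{b}_s$ is built from the uncorrected $\mathbf{v}_s$, not $\hat{\mathbf{v}}_s$.
\end{itemize}
So the factor $1+\beta_2$ in $\omega$ comes from the stepsize ratio. To land exactly on $\omega$ you must multiply $\sqrt{1+\beta_2}$ by the sharp bound $1/\sqrt{\beta_2}$, not by the relaxed $\sqrt{(1+\beta_2)/\beta_2}$.
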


\begin{proof}

To proceed with the proof, we first establish the following. For all $ t \geq 2 $, given the conditions $ 0 \leq 1 - \beta_1^{s-1} < 1 - \beta_1^s $ and $ \frac{\beta_2^{s-1}}{1 - \beta_2^{s-1}} \leq \frac{\beta_2}{1 - \beta_2} $, it follows that:
$$
\begin{aligned}
\frac{\eta_s}{\eta_{s-1}} &= \sqrt{\frac{1 - \beta_2^s}{1 - \beta_2^{s-1}}} \cdot \frac{1 - \beta_1^{s-1}}{1 - \beta_1^s} \\
&\leq \sqrt{1 + \frac{\beta_2^{s-1} (1 - \beta_2)}{1 - \beta_2^{s-1}}} \leq \sqrt{1 + (1 - \beta_2) \cdot \frac{\beta_2}{1 - \beta_2}} = \sqrt{1 + \beta_2}.
\end{aligned}
$$

Next, We have:
$$
\frac{\mathbf{b}_{s-1,i}}{\mathbf{b}_{s,i}} = \frac{\epsilon + \sqrt{\mathbf{v}_{s-1,i}}}{\epsilon + \sqrt{\beta_2 \mathbf{v}_{s-1,i} + (1 - \beta_2) \mathbf{g}_{s,i}^2}} \leq \frac{\epsilon + \sqrt{\mathbf{v}_{s-1,i}}}{\epsilon + \sqrt{\beta_2 \mathbf{v}_{s-1,i}}} \leq \frac{1}{\sqrt{\beta_2}}.
$$

For $ \phi_{s-1,i}, \phi_{s,i} \in \{1, \gamma\} $, the ratio satisfies $ \frac{\phi_{s,i}}{\phi_{s-1,i}} \leq \max\left\{1, \frac{1}{\gamma}, \gamma\right\} $. Defining $ c = \max\left\{1, \frac{1}{\gamma}, \gamma\right\} $, it follows that:
$$
\left| \frac{\eta_s \mathbf{b}_{s-1,i} \phi_{s,i}}{\eta_{s-1} \mathbf{b}_{s,i} \phi_{s-1,i}} - 1 \right| \leq \left| c \frac{\eta_s \mathbf{b}_{s-1,i}}{\eta_{s-1} \mathbf{b}_{s,i}} \right| + 1 \leq c \sqrt{\frac{1 + \beta_2}{\beta_2}} + 1.
$$
\end{proof}

\subsection{Lemma \ref{lemma:5}}
% \mathbf{x}_t/\mathbf{b}_t bound
\begin{lemma}
\label{lemma:5}
Let $\mathbf{m}_s, \mathbf{b}_s$ be given in Algorithm \ref{alg:MGUP-AdamW} and \ref{eq:def4} with $0 \le \beta_1 < \beta_2 < 1$, respectively. Then,
$$
\left\|\frac{\mathbf{m}_s}{\mathbf{b}_s}\right\|_\infty\leq\sqrt{\frac{(1-\beta_1)(1-\beta_1^s)}{(1-\beta_2)(1-\beta_1/\beta_2)}},\quad\forall t\geq1.
$$
Consequently, if $f$ is L-smooth and we set $\eta = C_0 \sqrt{1- \beta_2}$ for some constant $C_0 > 0$, then we have :
$$
\|\nabla f(\mathbf{x}_s)\|\leq\|\nabla f(\mathbf{x}_1)\|+LC_0s\sqrt{\frac{d}{1-\beta_1/\beta_2}},\quad\forall t\geq1.
$$
\end{lemma}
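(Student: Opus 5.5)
The plan is to prove the first bound coordinatewise and then deduce the gradient-growth bound from $L$-smoothness and the update rule.

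\emph{First bound.} Fix a coordinate $i$ and write $\mathbf{m}_{s,i}=(1-\beta_1)\sum_{j=1}^s\beta_1^{s-j}\mathbf{g}_{j,i}$. Apply Cauchy--Schwarz after splitting $\beta_1^{s-j}=(\beta_1/\sqrt{\beta_2})^{s-j}\cdot\sqrt{\beta_2}^{\,s-j}$:
\[
\mathbf{m}_{s,i}^2\le(1-\beta_1)^2\Big(\sum_{j=1}^s(\beta_1^2/\beta_2)^{s-j}\Big)\Big(\sum_{j=1}^s\beta_2^{s-j}\mathbf{g}_{j,i}^2\Big).
\]
The second factor equals $\mathbf{v}_{s,i}/(1-\beta_2)$. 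Since $\mathbf{b}_{s,i}\ge\sqrt{\mathbf{v}_{s,i}}$, we get
\[
\mathbf{m}_{s,i}^2/\mathbf{b}_{s,i}^2\le \frac{(1-\beta_1)^2}{1-\beta_2}\sum_{k=0}^{s-1}(\beta_1^2/\beta_2)^k .
\]
To match the stated constant, bound the geometric sum differently from the crude $1/(1-\beta_1^2/\beta_2)$. Use the split $\beta_1^{s-j}=\beta_1^{(s-j)/2}\cdot\beta_1^{(s-j)/2}$ together with the weights $\beta_2^{s-j}$; the resulting factor is $\sum_k\beta_1^k\cdot\sum_k(\beta_1/\beta_2)^k$-type terms. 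One factor is $\sum_{k<s}\beta_1^k=(1-\beta_1^s)/(1-\beta_1)$, and the other is bounded by $1/(1-\beta_1/\beta_2)$. Concretely, I will write
\[
\mathbf{m}_{s,i}^2\le(1-\beta_1)^2\Big(\sum_j\beta_1^{s-j}\Big)\Big(\sum_j\beta_1^{s-j}\mathbf{g}_{j,i}^2\Big)
\]
and then use $\beta_1^{s-j}\le(\beta_1/\beta_2)^{s-j}\beta_2^{s-j}$, bounding $(\beta_1/\beta_2)^{s-j}\le1$ inside against $\mathbf{v}_{s,i}$. That route gives $(1-\beta_1)(1-\beta_1^s)/(1-\beta_2)$, which is even sharper than the claim. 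Since $1-\beta_1/\beta_2\le1$, the stated bound follows either way. The main bookkeeping point is only to ensure $\epsilon$ in $\mathbf{b}_s$ is harmlessly dropped, which it is.

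\emph{Second bound.} Without loss of generality (as in the convention just before Lemma~\ref{lemma:4}) take $\phi_{s,i}\in\{1,\gamma\}\subset(0,1]$. Write the update as $\mathbf{x}_{s+1}-\mathbf{x}_s=-\eta_s\phi_s\odot\mathbf{m}_s/\mathbf{b}_s$ with $\eta_s=\eta\sqrt{1-\beta_2^s}/(1-\beta_1^s)$. Then
\[
\|\mathbf{x}_{s+1}-\mathbf{x}_s\|\le\sqrt d\,\eta_s\|\mathbf{m}_s/\mathbf{b}_s\|_\infty .
\]
Plugging in the first bound, the factor $(1-\beta_1^s)$ under the square root partially cancels the bias correction $1/(1-\beta_1^s)$. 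We use $\sqrt{1-\beta_2^s}\le1$ and $(1-\beta_1)/(1-\beta_1^s)\le1$, after combining $\sqrt{(1-\beta_1)(1-\beta_1^s)}/(1-\beta_1^s)=\sqrt{(1-\beta_1)/(1-\beta_1^s)}\le1$. With $\eta=C_0\sqrt{1-\beta_2}$ the $\sqrt{1-\beta_2}$ cancels, giving $\|\mathbf{x}_{s+1}-\mathbf{x}_s\|\le C_0\sqrt{d/(1-\beta_1/\beta_2)}$.

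Finally, $L$-smoothness and the triangle inequality telescope:
\[
\|\nabla f(\mathbf{x}_s)\|\le\|\nabla f(\mathbf{x}_1)\|+L\sum_{j<s}\|\mathbf{x}_{j+1}-\mathbf{x}_j\|\le\|\nabla f(\mathbf{x}_1)\|+LC_0 s\sqrt{d/(1-\beta_1/\beta_2)} .
\]
The only delicate step is the cancellation between the bias-correction factor in $\eta_s$ and the $(1-\beta_1^s)$ in the first bound. The MGUP mask enters only through $\phi_{s,i}\le1$, which makes it harmless.
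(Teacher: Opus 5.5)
Your proposal is correct and follows the same route as the paper: coordinatewise Cauchy--Schwarz with weights $\beta_1^{s-j}$, compared against $\mathbf{v}_{s,i}$ after dropping $\epsilon$, and then $\phi_{s,i}\le 1$, the $\sqrt{d}$ factor, $\eta=C_0\sqrt{1-\beta_2}$, and telescoping via $L$-smoothness. The one deviation is that you use $\beta_1^{s-j}\le\beta_2^{s-j}$ directly, whereas the paper bounds each normalized ratio by $1$ and sums them, paying an extra $1/(1-\beta_1/\beta_2)$; you therefore get the sharper constant $(1-\beta_1)(1-\beta_1^s)/(1-\beta_2)$ (your first split already gives this via $(\beta_1^2/\beta_2)^k\le\beta_1^k$, so that detour was unnecessary), and your explicit cancellation of the bias correction $1/(1-\beta_1^s)$ and your explicit telescoping fill in steps the paper leaves implicit.
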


\begin{proof}

First,we derive:
$$
\begin{aligned}
\left| \frac{\mathbf{m}_{s-1,i}}{\mathbf{b}_{s-1,i}} \right| 
&= \sqrt{ \frac{ (1 - \beta_1)^2 \left( \sum_{j=1}^{t-1} \beta_1^{s-1-j} \mathbf{g}_{j,i} \right)^2 }{ (1 - \beta_2) \sum_{j=1}^{s-1} \beta_2^{s-1-j} \mathbf{g}_{j,i}^2 } } \\
&\stackrel{(\circ)}\leq \frac{1 - \beta_1}{\sqrt{1 - \beta_2}} \sqrt{ \sum_{j=1}^{s-1} \beta_1^{s-1-j} \cdot \frac{ \sum_{j=1}^{s-1} \beta_1^{s-1-j} \mathbf{g}_{j,i}^2 }{ \sum_{j=1}^{s-1} \beta_2^{s-1-j} \mathbf{g}_{j,i}^2 } } \\
& = \frac{1-\beta_1}{\sqrt{1-\beta_2}}\sqrt{\sum_{j\operatorname{=}1}^{s-1}\left(\frac{\beta_{1}}{\beta_{2}}\right)^{s-1-j}\cdot\frac{\beta_{2}^{s-1-j}\mathbf{g}_{j,i}^{2}}{\sum_{k=1}^{s-1}\beta_{2}^{s-1-k}\mathbf{g}_{k,i}^{2}}}\\
&\stackrel{(\star)}\leq \frac{1 - \beta_1}{\sqrt{1 - \beta_2}} \sqrt{ \sum_{j=1}^{s-1} \beta_1^{s-1-j}  \cdot \sum_{j=1}^{s-1} \left( \frac{\beta_1}{\beta_2} \right)^{s-1-j} } \\
&= \frac{1 - \beta_1}{\sqrt{1 - \beta_2}} \sqrt{ \frac{1 - \beta_1^{s-1}}{1 - \beta_1} \cdot \frac{1 - \left( \frac{\beta_1}{\beta_2} \right)^{s-1}}{1 - \frac{\beta_1}{\beta_2}} } \\
&\leq \sqrt{ \frac{(1 - \beta_1)(1 - \beta_1^{s-1})}{(1 - \beta_2) \left(1 - \frac{\beta_1}{\beta_2}\right)} },
\end{aligned}
$$
where $(\circ)$ follows from applying Cauchy-Schwarz inequality, which gives us $(\sum_{j=1}^s \beta_1^{s-1-j} \mathbf{g}_{j,i} )^2\le \sum_{j=1}^s\beta_1^{s-1-j} \sum_{j=1}^s\beta_1^{s-1-j} \mathbf{g}_{j,i}^2$ ; $(\star)$ is due to $\frac{\beta_{2}^{s-1-j}\mathbf{g}_{j,i}^{2}}{\sum_{k=1}^{s-1}\beta_{2}^{s-1-k}\mathbf{g}_{k,i}^{2}}\le1$.

For the second conclusion, we have:
$$
\| \nabla f(\mathbf{x}_s) \| \leq \| \nabla f(\mathbf{x}_s) \| + \| \nabla f(\mathbf{x}_s) - \nabla f(\mathbf{x}_{s-1}) \| \leq \| \nabla f(\mathbf{x}_s) \| + L \| \mathbf{x}_s - \mathbf{x}_{s-1} \|.
$$

Furthermore, it follows that:
$$
\begin{aligned}
\| \mathbf{x}_s - \mathbf{x}_{s-1} \| &\leq
\sqrt{d}\| \mathbf{x}_s - \mathbf{x}_{s-1} \|_\infty=\eta_{s-1}\sqrt{d}\left\|\frac{\phi_{s-1}\odot \mathbf{m}_{s-1}}{\mathbf{b}_{s-1}}\right\|_{\infty}\\
&\leq \eta_{s-1}\sqrt{d} \left\| \frac{\mathbf{m}_{s-1}}{\mathbf{b}_{s-1}} \right\|_\infty \leq \eta \sqrt{ \frac{d}{(1 - \beta_2) \left(1 - \frac{\beta_1}{\beta_2}\right)} } = C_0 \sqrt{ \frac{d}{1 - \frac{\beta_1}{\beta_2}} }.
\end{aligned}
$$
So,we obtain:
$$
\| \nabla f(\mathbf{x}_s) \| \leq \| \nabla f(\mathbf{x}_1) \| + L C_0 \sqrt{ \frac{d}{1 - \frac{\beta_1}{\beta_2}} } \leq \| \nabla f(\mathbf{x}_1) \| + L C_0 s \sqrt{ \frac{d}{1 - \frac{\beta_1}{\beta_2}}}.
$$
\end{proof}

\subsection{Lemma \ref{lemma:6}}
\begin{lemma}
\label{lemma:6}

Suppose that $f$ is L-smooth and Assumption \ref{ass:1} holds, then for any $\mathbf{x} \in \mathbb{R}^d$, we have
$$
\|\nabla f(\mathbf{x})\|^2\leq2L(f(\mathbf{x})-f^*).
$$
If $\eta = C_0 \sqrt{1-\beta_2}$ , $0\le\beta_1 < \beta_2 <1$, let any $\mathbf{x}_t,\mathbf{y}_t$ be defined in (\ref{eq:def3}), then we have 
$$
\|\nabla f(\mathbf{x}_t)\|^2\leq2\|\nabla f(\mathbf{y}_t)\|+\frac{2L^2C_0^2d}{(1-\beta_1)^2(1-\beta_1/\beta_2)},\quad\forall s\geq1.
$$

\end{lemma}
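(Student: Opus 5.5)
The first claim is the standard self-bounding property of $L$-smooth functions that are bounded below. I would apply the descent lemma (Assumption \ref{ass:3}) at the point $\mathbf{x}' = \mathbf{x} - \frac{1}{L}\nabla f(\mathbf{x})$. This gives
$$f^* \le f(\mathbf{x}') \le f(\mathbf{x}) - \frac{1}{L}\|\nabla f(\mathbf{x})\|^2 + \frac{1}{2L}\|\nabla f(\mathbf{x})\|^2 = f(\mathbf{x}) - \frac{1}{2L}\|\nabla f(\mathbf{x})\|^2,$$
where Assumption \ref{ass:1} is used for the leftmost inequality. Rearranging yields $\|\nabla f(\mathbf{x})\|^2\le 2L(f(\mathbf{x})-f^*)$.

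For the second claim I would compare $\nabla f(\mathbf{x}_t)$ with $\nabla f(\mathbf{y}_t)$ through the auxiliary sequence (\ref{eq:def3}). For $t=1$ we have $\mathbf{y}_1=\mathbf{x}_1$, so the bound is trivial. For $t\ge 2$, $\mathbf{x}_t-\mathbf{y}_t=-\mathbf{p}_t=-\frac{\beta_1}{1-\beta_1}(\mathbf{x}_t-\mathbf{x}_{t-1})$. By $\|a\|^2\le 2\|a-b\|^2+2\|b\|^2$ and $L$-smoothness,
$$\|\nabla f(\mathbf{x}_t)\|^2\le 2\|\nabla f(\mathbf{y}_t)\|^2+2L^2\frac{\beta_1^2}{(1-\beta_1)^2}\|\mathbf{x}_t-\mathbf{x}_{t-1}\|^2 .$$
It then remains to bound the step length uniformly. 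I would write $\mathbf{x}_t-\mathbf{x}_{t-1}=-\eta_{t-1}\phi_{t-1}\odot\mathbf{m}_{t-1}/\mathbf{b}_{t-1}$. Because $\phi_{t-1,i}\in\{1,\gamma\}$ after the normalization $\alpha=1$, we have $\phi\le 1$ entrywise, so the step is bounded coordinatewise by $\eta_{t-1}\|\mathbf{m}_{t-1}/\mathbf{b}_{t-1}\|_\infty$. The first part of Lemma \ref{lemma:5} gives $\|\mathbf{m}_{t-1}/\mathbf{b}_{t-1}\|_\infty\le\sqrt{(1-\beta_1)/((1-\beta_2)(1-\beta_1/\beta_2))}$. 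With $\eta=C_0\sqrt{1-\beta_2}$, and treating the bias-correction factor as absorbed into $\eta$ as in Lemma \ref{lemma:5}, this gives $\|\mathbf{x}_t-\mathbf{x}_{t-1}\|^2\le d\,C_0^2(1-\beta_1)/(1-\beta_1/\beta_2)$.

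Substituting this bound and using $\beta_1^2\le 1$ together with $(1-\beta_1)\le 1$ produces a constant of the form $\frac{2L^2C_0^2 d}{(1-\beta_1)^2(1-\beta_1/\beta_2)}$, possibly with a spare factor $(1-\beta_1)\beta_1^2\le1$ that I would simply drop. This matches the claimed bound with $\|\nabla f(\mathbf{y}_t)\|^2$. The statement as typeset shows $\|\nabla f(\mathbf{y}_t)\|$ without the square, which I would read as a typo.

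The main obstacle is the bias-correction factor $\eta_{t-1}=\eta\sqrt{1-\beta_2^{t-1}}/(1-\beta_1^{t-1})$. The factor $1/(1-\beta_1^{t-1})$ could blow up the step bound for small $t$. My first attempt would be to absorb it into $\hat{\mathbf{m}}$ and use the bound $\|\hat{\mathbf{m}}_s/\mathbf{b}_s\|$ analogue of Lemma \ref{lemma:5}, i.e.\ to apply Lemma \ref{lemma:5} to $\hat{\mathbf{m}}$. The factor $(1-\beta_1^s)$ in Lemma \ref{lemma:5} cancels one power, leaving $\sqrt{1/((1-\beta_1)(1-\beta_2)(1-\beta_1/\beta_2))}$ at worst. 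That bound is still dominated by the stated constant once combined with the prefactor $1/(1-\beta_1)^2$.
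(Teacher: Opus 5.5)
Your route is the paper's. The first claim uses the descent lemma at $\mathbf{x}-\frac1L\nabla f(\mathbf{x})$. The second uses $\|a\|^2\le 2\|b\|^2+2\|a-b\|^2$, smoothness, $\mathbf{y}_t-\mathbf{x}_t=\frac{\beta_1}{1-\beta_1}(\mathbf{x}_t-\mathbf{x}_{t-1})$, $\phi\le 1$, $\|\cdot\|\le\sqrt d\|\cdot\|_\infty$ and Lemma~\ref{lemma:5}. You are also right that the missing square on $\|\nabla f(\mathbf{y}_t)\|$ is a typo. The gap is the bias correction, the step you flag yourself, which is not closed as written.

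In the main line you cannot absorb the bias correction into $\eta$. In the high-probability analysis $\eta_{t-1}=\eta\sqrt{1-\beta_2^{t-1}}/(1-\beta_1^{t-1})$ is tracked explicitly (Lemmas~\ref{lemma:4} and~\ref{lemma:10}). This factor can exceed $1$: at $t=2$ it equals $\sqrt{1-\beta_2}/(1-\beta_1)$, which is larger than $1$ whenever $1-\beta_2>(1-\beta_1)^2$. So $\eta_{t-1}\le\eta$ is false in general.

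Your fallback via $\hat{\mathbf{m}}$ then loses a power of $(1-\beta_1)$. From $\|\hat{\mathbf{m}}_{t-1}/\mathbf{b}_{t-1}\|_\infty^2\le \frac{1}{(1-\beta_1)(1-\beta_2)(1-\beta_1/\beta_2)}$ you get $\|\mathbf{x}_t-\mathbf{x}_{t-1}\|^2\le \frac{dC_0^2}{(1-\beta_1)(1-\beta_1/\beta_2)}$. After the prefactor $\frac{2L^2\beta_1^2}{(1-\beta_1)^2}$ the constant is $\frac{2L^2\beta_1^2C_0^2 d}{(1-\beta_1)^3(1-\beta_1/\beta_2)}$. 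This is not dominated by the stated constant once $\beta_1^2>1-\beta_1$ (e.g.\ $\beta_1=0.9$), so your claim that it is ``still dominated'' is false.

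The fix is to keep both factors from Lemma~\ref{lemma:5} rather than dropping them early:
\[
\eta_{t-1}^2\Bigl\|\frac{\mathbf{m}_{t-1}}{\mathbf{b}_{t-1}}\Bigr\|_\infty^2\le C_0^2(1-\beta_2)\,\frac{1-\beta_2^{t-1}}{(1-\beta_1^{t-1})^2}\cdot\frac{(1-\beta_1)(1-\beta_1^{t-1})}{(1-\beta_2)(1-\beta_1/\beta_2)}=\frac{C_0^2(1-\beta_1)(1-\beta_2^{t-1})}{(1-\beta_1^{t-1})(1-\beta_1/\beta_2)}.
\]
Either route below then gives the stated constant.
\begin{itemize}
\item Use $(1-\beta_1)/(1-\beta_1^{t-1})\le 1$ for $t\ge2$. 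This gives $\|\mathbf{x}_t-\mathbf{x}_{t-1}\|^2\le dC_0^2/(1-\beta_1/\beta_2)$, which is the step bound proved inside Lemma~\ref{lemma:5}. The stated constant follows, up to the harmless factor $\beta_1^2\le 1$.
\item Use $1-\beta_2^{t-1}\le 1-\beta_1^{t-1}$, which holds because $\beta_1<\beta_2$. This justifies your main-line bound $dC_0^2(1-\beta_1)/(1-\beta_1/\beta_2)$.
\end{itemize}

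The paper's own justification of its step $(\star)$ has the same slip. Using $\eta_{t-1}\le\eta/(1-\beta_1)$ and then Lemma~\ref{lemma:5} literally leaves an extra factor $\beta_1^2(1-\beta_1^{t-1})/(1-\beta_1)$, which can exceed $1$. The stated constant comes out only through the cancellation above.
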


\begin{proof}
For the first conclusion, we define $\hat{\mathbf{x}} = \mathbf{x} - \frac{1}{L} \nabla f(\mathbf{x})$. According to the descent Lemma for $L$-smooth functions, we have:
$$
f(\hat{\mathbf{x}}) \leq f(\mathbf{x}) + \langle \nabla f(\mathbf{x}), \hat{\mathbf{x}} - \mathbf{x} \rangle + \frac{L}{2} \|\hat{\mathbf{x}} - \mathbf{x}\|^2 \leq f(\mathbf{x}) - \frac{1}{2L} \|\nabla f(\mathbf{x})\|^2.
$$
Rearranging the terms and noting that $f(\hat{\mathbf{x}}) \geq f^*$, where $f^*$ denotes the optimal value of $f$, it follows that:
$$
\|\nabla f(\mathbf{x})\|^2 \leq 2L \left( f(\mathbf{x}) - f(\hat{\mathbf{x}}) \right) \leq 2L \left( f(\mathbf{x}) - f^* \right).
$$

For the second conclusion, utilizing the norm inequality and the $L$-smoothness property, we obtain:
$$
\begin{aligned}
\|\nabla f(\mathbf{x}_t)\|^2 &\leq 2\|\nabla f(\mathbf{y}_t)\|^2 + 2\|\nabla f(\mathbf{x}_t) - \nabla f(\mathbf{y}_t)\|^2 \\
&\leq 2\|\nabla f(\mathbf{y}_t)\|^2 + 2L^2 \|\mathbf{y}_t - \mathbf{x}_t\|^2 \\
&= 2\|\nabla f(\mathbf{y}_t)\| + \frac{2L^2 \beta_1^2}{(1 - \beta_1)^2} \|\mathbf{x}_t - \mathbf{x}_{t-1}\|^2 \\
&\leq 2\|\nabla f(\mathbf{y}_t)\|^2 + 2\left(\frac{L \beta_1 \sqrt{d} \eta_{t-1}}{1 - \beta_1}\right)^2 \left\| \frac{\mathbf{m}_{t-1}}{\mathbf{b}_{t-1}} \right\|_\infty^2 \\
&\stackrel{(\star)}\leq 2\|\nabla f(\mathbf{y}_t)\|^2 + \frac{2L^2 C_0^2 d}{(1 - \beta_1)^2(1 - \beta_1 / \beta_2)},
\end{aligned}
$$
where $(\circ)$ is due to Young's inequality;  $(\star)$ relies on the inequality $\eta_t \leq \frac{\eta}{1 - \beta_1} \leq \frac{C_0 \sqrt{1 - \beta_2}}{1 - \beta_1}$, followed by Lemma \ref{lemma:5}.

\end{proof}

\subsection{Lemma \ref{lemma:7}}
% 噪声bound
\begin{lemma}
\label{lemma:7}
Given $T\ge1$,suppose that for any $s\in[T]$, coordinate-wise $\xi_{s,i} = \mathbf{g}_{s,i}-\nabla f (\mathbf{x}_s)_{i}$ satisfies Assumption \ref{ass:2.2}. Then for any given $\delta \in (0,1)$, it holds that with probability at least $1-\delta$,
$$
{\xi}_{s,i}^2\leq D_T^2\sigma_i^2,\quad\forall s\in[T].
$$
Furthermore, we have
$$
\max_{j\in[s]}\|\xi_{j}\|\leq G_{T}(s),\quad\max_{j\in[s]}\|\mathbf{g}_{j}\|\leq G_{T}(s),\quad\max_{j\in[s]}\|\mathbf{v}_{j}\|_{\infty}\leq\left(G_{T}(s)\right)^{2},\quad\forall s\in[T].
$$
\end{lemma}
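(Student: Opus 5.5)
The proof I have in mind is a Markov-plus-union-bound argument. It needs Assumption \ref{ass:2.2} in the sub-Gaussian form used by \cite{DBLP:conf/nips/HongL24,li2020high}, namely $\mathbb{E}[\exp(\xi_{s,i}^2/\sigma_i^2)\mid \mathbf{x}_s]\le e$. The mere second-moment bound written in the assumption would, through Chebyshev, only give a polynomial rather than a logarithmic dependence on $1/\delta$, so the stated $D_T$ cannot be reached without this reading. I therefore adopt the exponential-moment version, which is the hypothesis the cited analyses use.

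\textbf{Step 1: noise bound.} I would work with the whole vector $\xi_s$ instead of a union over coordinates, which would cost an extra $\log d$. Write
$$\frac{\|\xi_s\|^2}{\|\sigma\|^2}=\sum_{i=1}^d \frac{\sigma_i^2}{\|\sigma\|^2}\cdot\frac{\xi_{s,i}^2}{\sigma_i^2},$$
a convex combination. Convexity of $\exp$ then gives $\mathbb{E}\exp(\|\xi_s\|^2/\|\sigma\|^2)\le \sum_i \frac{\sigma_i^2}{\|\sigma\|^2}\,\mathbb{E}\exp(\xi_{s,i}^2/\sigma_i^2)\le e$. By Markov's inequality applied to $\exp(\|\xi_s\|^2/\|\sigma\|^2)$,
$$\mathbb{P}\big(\|\xi_s\|^2> D_T^2\|\sigma\|^2\big)\le e\cdot e^{-D_T^2}=e\cdot\frac{\delta}{eT}=\frac{\delta}{T}.$$
A union bound over $s\in[T]$ shows that, with probability at least $1-\delta$, $\|\xi_s\|^2\le D_T^2\|\sigma\|^2$ for all $s\in[T]$ simultaneously. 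The same Markov computation, applied coordinatewise and combined with a union bound over $s$, gives the coordinatewise form $\xi_{s,i}^2\le D_T^2\sigma_i^2$ for each fixed $i$.

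\textbf{Step 2: gradient and second-moment bounds.} On this event, for every $j\le s$ I would use $\|\nabla f(\mathbf{x}_j)\|\le G_s$ and $D_T\ge1$ (which holds since $eT/\delta\ge e$). Then $\|\xi_j\|^2\le D_T^2\|\sigma\|^2\le G_T(s)^2$. For the stochastic gradient,
$$\|\mathbf{g}_j\|^2\le 2\|\xi_j\|^2+2\|\nabla f(\mathbf{x}_j)\|^2\le 2D_T^2\|\sigma\|^2+2D_T^2G_s^2,$$
which is $G_T(s)^2$ up to the factor in front of $\|\sigma\|^2$. I expect this constant bookkeeping to be the main obstacle. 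If $G_T(s)$ is kept exactly as defined, the fix I would try first is to replace $\|\sigma\|^2$ by $2\|\sigma\|^2$ inside $G_T(s)$. Alternatively one can use the triangle inequality $\|\mathbf{g}_j\|\le \|\xi_j\|+G_s$ and note that the downstream lemmas only need $G_T(s)$ up to an absolute constant, which can be absorbed into $D_T$.

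Finally, since $\mathbf{v}_{j,i}=(1-\beta_2)\sum_{k\le j}\beta_2^{j-k}\mathbf{g}_{k,i}^2\le \max_{k\le j}\mathbf{g}_{k,i}^2\le \max_{k\le j}\|\mathbf{g}_k\|^2$, the same argument as in Lemma \ref{lemma:a2} yields $\|\mathbf{v}_j\|_\infty\le G_T(s)^2$ for all $j\le s$. All of these statements hold on the single event from Step 1, so they hold jointly with probability at least $1-\delta$.
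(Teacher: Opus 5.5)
Your argument follows the paper's skeleton: Markov's inequality on the exponential moment with a union bound over $s\in[T]$, then Young's inequality for $\mathbf{g}_j$, then a convex-combination bound for $\mathbf{v}_j$ (the paper does this step by induction). Your explicit sub-Gaussian reading is the hypothesis the paper actually uses, since it simply asserts $\mathbb{E}[\exp(\xi_{s,i}^2/\sigma_i^2)]\le e$ ``according to Assumption~\ref{ass:2.2}''. Your Jensen step with weights $\sigma_i^2/\|\sigma\|^2$ is a genuine improvement. The paper proves $\xi_{s,i}^2\le D_T^2\sigma_i^2$ only for each fixed $i$ at level $1-\delta$. It then sums over $i$ to get $\|\xi_s\|^2\le D_T^2\|\sigma\|^2$, which tacitly needs a union bound over the $d$ coordinates (level $1-d\delta$). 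You get the vector bound directly at level $1-\delta$, and the vector bounds are what the later lemmas use.

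Your constant diagnosis is correct, and the flaw is in the paper's proof as well. The paper writes $\|\mathbf{g}_j\|^2\le 2D_T^2(\|\sigma\|^2+\|\nabla f(\mathbf{x}_j)\|^2)\le G_T(s)^2$. The last step already fails at $j=s=1$, because the coefficient of $\|\sigma\|^2$ is $2D_T^2$ on the left and only $D_T^2$ on the right. Both of your repairs are really redefinitions of $G_T(s)$, and some redefinition is unavoidable. On the event $\{\|\xi_j\|\le D_T\|\sigma\|\}$, nothing prevents $\xi_j$ from being aligned with $\nabla f(\mathbf{x}_j)$. Moreover $(D_T\|\sigma\|+G_s)^2>D_T^2\|\sigma\|^2+2D_T^2G_s^2$ whenever $0<G_s<2D_T\|\sigma\|/(2D_T^2-1)$. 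So the triangle inequality cannot give the stated constant either, and the bounds on $\|\mathbf{g}_j\|$ and $\|\mathbf{v}_j\|_\infty$ do not follow with $G_T(s)$ as defined.

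Taking $G_T(s)=D_T\sqrt{2\|\sigma\|^2+2G_s^2}$ is harmless. In Lemmas~\ref{lemma:8}--\ref{lemma:9} and the final theorem, $G_T(s)$ only serves as an upper bound on $|\mathbf{g}_{s,i}|$, $\sqrt{\mathbf{v}_{s-1,i}}$ and $\sigma_i$, so only constants change.

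One bookkeeping slip remains. Your closing sentence puts everything on the single Step~1 event, but the fixed-$i$ coordinatewise bound lives on a different event. Either intersect the two events (level $1-2\delta$) or drop the coordinatewise claim, since the later arguments only need the vector-level bounds.
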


\begin{proof}

First, we define $\omega_{s,i} = \frac{\xi_{s,i}^2}{\sigma_i^2}$ for all $s \in [T]$. According to Assumption \ref{ass:2.2}, taking the full expectation yields:
$$
\mathbb{E}\left[ \exp(\omega_{s,i}) \right] \leq \exp(1).
$$

By applying the Markov inequality, for any $\delta \in (0,1)$, we have:
$$
\begin{aligned}
\mathbb{P}\left( \max_{s \in [T]} \omega_{s,i} \geq \delta \right) &= \mathbb{P}\left( \exp\left( \max_{s \in [T]} \omega_{s,i} \right) \geq \exp(\delta) \right) \\
&\leq \exp(-\delta) \mathbb{E}\left[ \exp\left( \max_{s \in [T]} \omega_{s,i} \right) \right] \\
&\leq \exp(-\delta) \mathbb{E}\left[ \sum_{s=1}^T \exp(\omega_{s,i}) \right] \\
&\leq \exp(-\delta) T \exp(1).
\end{aligned}
$$
This implies that, with probability at least $1 - \delta$,
$$
\xi_{s,i}^2 \leq \log\left( \frac{e T}{\delta} \right) \sigma_i^2 \quad \forall s \in [T].
$$
Consequently, it follows that:
$$
\|\xi_{s}\|^2 \leq D_T^2 \left( \|\sigma\|^2 + 2 G_s^2 \right) \leq G_T^2(s),
$$
where $D_T$ and $G_T(s)$ are appropriately defined constants or functions in \ref{eq:def4}.

Next, applying Young's inequality and given $D_T \geq 1$, for any $j \in [s]$, we obtain:
$$
\|\mathbf{g}_j\|^2 \leq 2 \|\nabla f(\mathbf{x}_j)\|^2 + 2 \|\xi_j\|^2 \leq 2 D_T^2 \left( \|\sigma\|^2 + \|\nabla f(\mathbf{x}_j)\|^2 \right) \leq (G_T(s))^2.
$$
Finally, we employ mathematical induction to establish the concluding result. For any $i \in [d]$, note that the base case holds since:
$$
\mathbf{v}_{1,i} = (1 - \beta_2) \mathbf{g}_{1,i}^2 \leq (G_T(s))^2.
$$
Assume that for some $s' \in [s]$, the inequality $\mathbf{v}_{j,i} \leq (G_T(s))^2$ holds for all $j \in [s']$. Then, for the inductive step at $j = s' + 1$,
$$
\mathbf{v}_{s' + 1,i} = \beta_2 \mathbf{v}_{s',i} + (1 - \beta_2) \mathbf{g}_{s',i}^2 \leq \beta_2 (G_T(s))^2 + (1 - \beta_2) (G_T(s))^2 = (G_T(s))^2.
$$
Thus, by induction, it follows that:
$$
\mathbf{v}_{j,i} \leq (G_T(s))^2 \quad \forall j \in [s].
$$
Since this inequality holds for all $i \in [d]$, we conclude that the desired result is obtained.

\end{proof}

\subsection{Lemma \ref{lemma:8}}
% 1/a-1/b bound 
\begin{lemma}
\label{lemma:8}
Given $T \ge 1$. If $\mathbf{b}_s = (\mathbf{b}_{s,i})_i $ and $\mathbf{a}_s = (\mathbf{a}_{s,i})_{i}$ follow the definitions in \ref{eq:def4}, and Lemma \ref{lemma:7} holds, then for all $s \in [T], i \in [d],c\in\{1,\gamma,1/\gamma\}$,$\gamma$ is given by Algorithm \ref{alg:MGUP-AdamW},
$$
\begin{aligned}
\left|\frac{1}{\mathbf{a}_{s,i}}-\frac{1}{\mathbf{b}_{s,i}}\right|&\leq\frac{G_{T}(s)\sqrt{1-\beta_{2}}}{\mathbf{a}_{s,i}\mathbf{b}_{s,i}},\\ \left|\frac{1}{\mathbf{a}_{s,i}}-\frac{1}{\mathbf{b}_{s-1,i}}\right|&\leq\frac{\left(G_{T}(s)+\epsilon\right)\sqrt{1-\beta_{2}}}{\mathbf{a}_{s,i}\mathbf{b}_{s-1,i}},\\
\left|\frac{1}{\mathbf{a}_{s,i}}-\frac{c}{\mathbf{b}_{s-1,i}}\right|&\leq\frac{\left(G_{T}(s)\right)\beta_3}{\mathbf{a}_{s,i}\mathbf{b}_{s-1,i}},
\end{aligned}
$$
where $\beta_3 = \frac{|c^2\beta_2-1|+|c^2-1|}{c\sqrt{1-\beta_2}}$.

\end{lemma}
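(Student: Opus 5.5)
All three bounds follow from the identity
$$\frac{1}{\mathbf{a}}-\frac{c}{\mathbf{b}}=\frac{\mathbf{b}-c\,\mathbf{a}}{\mathbf{a}\mathbf{b}},$$
so in each case it suffices to bound the numerator. Throughout we work on the event of Lemma \ref{lemma:7}, which gives $\mathbf{g}_{s,i}^2\le (G_T(s))^2$ and $\mathbf{v}_{s-1,i}\le (G_T(s))^2$. We abbreviate $A=\beta_2\mathbf{v}_{s-1,i}$, so that $\mathbf{a}_{s,i}=\sqrt{A+(1-\beta_2)G_T(s)^2}+\epsilon$ and $\mathbf{b}_{s,i}=\sqrt{A+(1-\beta_2)\mathbf{g}_{s,i}^2}+\epsilon$.

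\emph{First bound.} The $\epsilon$'s cancel, so the numerator is $\mathbf{a}_{s,i}-\mathbf{b}_{s,i}=\sqrt{A+x}-\sqrt{A+y}$ with $x=(1-\beta_2)G_T(s)^2$ and $y=(1-\beta_2)\mathbf{g}_{s,i}^2$. Since $|\sqrt{A+x}-\sqrt{A+y}|\le |\sqrt x-\sqrt y|$, this is at most $\sqrt{1-\beta_2}\,|G_T(s)-|\mathbf{g}_{s,i}||$. Because $|\mathbf{g}_{s,i}|\le G_T(s)$, this is at most $\sqrt{1-\beta_2}\,G_T(s)$, which is the claim.

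\emph{Second bound.} Here $\mathbf{b}_{s-1,i}=\sqrt{\mathbf{v}_{s-1,i}}+\epsilon$, so the numerator is $\sqrt{\beta_2\mathbf{v}_{s-1,i}+(1-\beta_2)G_T(s)^2}-\sqrt{\mathbf{v}_{s-1,i}}$. Using $|\sqrt p-\sqrt q|\le\sqrt{|p-q|}$, its absolute value is at most
$$\sqrt{(1-\beta_2)\,\bigl|G_T(s)^2-\mathbf{v}_{s-1,i}\bigr|}\le \sqrt{1-\beta_2}\,G_T(s),$$
since $0\le \mathbf{v}_{s-1,i}\le G_T(s)^2$. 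This is even slightly stronger than the stated bound, which carries the extra $+\epsilon$ slack.

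\emph{Third bound.} Now the numerator is
$$\mathbf{b}_{s-1,i}-c\,\mathbf{a}_{s,i}=\sqrt{\mathbf{v}_{s-1,i}}-c\sqrt{\beta_2\mathbf{v}_{s-1,i}+(1-\beta_2)G_T(s)^2}+(1-c)\epsilon.$$
The natural approach is to rationalize the root difference:
$$\sqrt p-c\sqrt q=\frac{p-c^2q}{\sqrt p+c\sqrt q},\qquad p-c^2q=(1-c^2\beta_2)\mathbf{v}_{s-1,i}-c^2(1-\beta_2)G_T(s)^2.$$
This gives $|p-c^2q|\le \bigl(|c^2\beta_2-1|+c^2(1-\beta_2)\bigr)G_T(s)^2$. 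For the denominator, $\sqrt p+c\sqrt q\ge c\sqrt q\ge c\sqrt{1-\beta_2}\,G_T(s)$. Dividing, the root difference is at most
$$\frac{|c^2\beta_2-1|+c^2(1-\beta_2)}{c\sqrt{1-\beta_2}}\,G_T(s).$$
To match the stated $\beta_3$, with $|c^2-1|$ in place of $c^2(1-\beta_2)$, the second term should instead be grouped as $c^2\beta_2\mathbf{v}-c^2\mathbf{v}+c^2\mathbf{v}-\ldots$, or the deviation term $(1-c)\epsilon$ should be absorbed.

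\emph{Main obstacle.} The $(1-c)\epsilon$ term is the main obstacle: it does not scale with $G_T(s)$. I would try to absorb it using $|1-c|\epsilon\le|1-c|\,\mathbf{a}_{s,i}$, or bound it as $|1-c^2|\,\epsilon/(1+c)$. If neither works, I would handle the $\epsilon$ piece separately, at the cost of an additive $\epsilon$ (as in the second bound), or assume $\epsilon\le G_T(s)$. Once this term is absorbed, collecting constants yields the form of $\beta_3$ stated in the lemma.
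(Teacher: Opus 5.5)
Your first two bounds are correct and use essentially the same elementary square-root estimates as the paper. Like the paper, you in fact get the second bound without the $+\epsilon$. The third bound does not close, for two separate reasons.

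The first reason is the constant. Your triangle inequality gives $|c^2\beta_2-1|+c^2(1-\beta_2)$, which is not dominated by the stated $|c^2\beta_2-1|+|c^2-1|$ (at $c=1$ it is twice as large), and the regrouping you sketch is left unfinished. The identity that yields exactly the stated constant is
\[
p-c^2q=(1-c^2\beta_2)\bigl(\mathbf{v}_{s-1,i}-G_T(s)^2\bigr)+(1-c^2)\,G_T(s)^2 .
\]
Since $0\le\mathbf{v}_{s-1,i}\le G_T(s)^2$, this gives $|p-c^2q|\le(|c^2\beta_2-1|+|c^2-1|)\,G_T(s)^2$. Your denominator bound $\sqrt p+c\sqrt q\ge c\sqrt{1-\beta_2}\,G_T(s)$ then bounds the root difference by $\beta_3G_T(s)$. (The regrouping displayed in the paper's own proof is not an identity; the one above is what makes its final constant correct.)

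The second reason is the real problem. Your closing step ``once this term is absorbed'' cannot be carried out, because for $c\neq1$ the third inequality is false as stated. Take $\mathbf{v}_{s-1,i}=0$, e.g.\ $s=1$, where $\mathbf{b}_{0,i}=\epsilon$ and $G_T(1)$ does not depend on $\epsilon$. Then $\mathbf{b}_{s-1,i}-c\,\mathbf{a}_{s,i}=(1-c)\epsilon-c\sqrt{1-\beta_2}\,G_T(s)$. Its absolute value exceeds $\beta_3G_T(s)$ as soon as $|1-c|\,\epsilon>(\beta_3+c\sqrt{1-\beta_2})\,G_T(s)$.

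The paper's proof does not handle this term at all. Its first equality writes the numerator as $|c\sqrt{\tilde{\mathbf{v}}_{s,i}}-\sqrt{\mathbf{v}_{s-1,i}}|$, silently dropping $(c-1)\epsilon$, which is valid only for $c=1$. So your diagnosis of the obstacle is right, but none of your proposed absorptions can work. What your argument, with the corrected regrouping, actually establishes is
\[
\Bigl|\frac{1}{\mathbf{a}_{s,i}}-\frac{c}{\mathbf{b}_{s-1,i}}\Bigr|\le\frac{\beta_3\,G_T(s)+|1-c|\,\epsilon}{\mathbf{a}_{s,i}\,\mathbf{b}_{s-1,i}}.
\]
The stated form can only be recovered under an extra hypothesis bounding $\epsilon$ by a multiple of $G_T(s)$, and then only with an enlarged constant.
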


\begin{proof}
First, we prove the first inequality:
$$
\begin{aligned}
\left|\frac{1}{\mathbf{a}_{s,i}}-\frac{1}{\mathbf{b}_{s,i}}\right| & =\frac{\left|\sqrt{\mathbf{v}_{s,i}}-\sqrt{\tilde{v}_{s,i}}\right|}{\mathbf{a}_{s,i}\mathbf{b}_{s,i}}=\frac{1-\beta_{2}}{\mathbf{a}_{s,i}\mathbf{b}_{s,i}}\frac{\left|\mathbf{g}_{s,i}^{2}-(G_{T}(s))^{2}\right|}{\sqrt{\mathbf{v}_{s,i}}+\sqrt{\tilde{v}_{s,i}}} \\
 & \stackrel{(\circ)}\leq\frac{1-\beta_{2}}{\mathbf{a}_{s,i}\mathbf{b}_{s,i}}\cdot\frac{(G_{T}(s))^{2}}{\sqrt{\mathbf{v}_{s,i}}+\sqrt{\beta_{2}\mathbf{v}_{s-1,i}+(1-\beta_{2})(G_{T}(s))^{2}}}\\
 &\stackrel{(\star)}\leq\frac{G_{T}(s)\sqrt{1-\beta_{2}}}{\mathbf{a}_{s,i}\mathbf{b}_{s,i}},
\end{aligned}
$$
where $(\circ)$ applies the result from Lemma \ref{lemma:7}, $\mathbf{g}_{s,i}^{2}\leq\|\mathbf{g}_s\|^{2}\leq(G_{T}(s))^{2}$, and $(\star)$ is due to $\sqrt{\tilde{\mathbf{v}}_{s,i}}\ge\sqrt{1-\beta_2}G_T(s)$.

Next, we prove the second inequality using $\sqrt{a} - \sqrt{b} \le \sqrt{a-b}$ for $0 \le b \le a$:
$$
\begin{aligned}
\left|\frac{1}{\mathbf{b}_{s-1,i}}-\frac{1}{\mathbf{a}_{s,i}}\right| & =\frac{\left|\sqrt{\tilde{\mathbf{v}}_{s,i}}-\sqrt{\mathbf{v}_{s-1,i}}\right|}{\mathbf{b}_{s-1,i}\mathbf{a}_{s,i}} \\
 & \leq\frac{1}{\mathbf{b}_{s-1,i}\mathbf{a}_{s,i}}\frac{(1-\beta_{2})\left|(G_{T}(s))^{2}-\mathbf{v}_{s-1,i}\right|}{\sqrt{\tilde{\mathbf{v}}_{s,i}}+\sqrt{\mathbf{v}_{s-1,i}}} \\
 & \stackrel{(\circ)}\leq\frac{1}{\mathbf{b}_{s-1,i}\mathbf{a}_{s,i}}\cdot\frac{(1-\beta_{2})(G_{T}(s))^{2}}{\sqrt{\tilde{\mathbf{v}}_{s,i}}+\sqrt{\mathbf{v}_{s-1,i}}}\\
 &\stackrel{(\star)}\leq\frac{G_{T}(s)\sqrt{1-\beta_{2}}}{\mathbf{b}_{s-1,i}\mathbf{a}_{s,i}},
\end{aligned}
$$
where 
$(\circ)$ is due to the fact that $\mathbf{v}_{s-1,i} \leq (G_T(s))^2$, as stated in Lemma \ref{lemma:7}; 
and $(\star)$ follows from the inequality $\sqrt{1-\beta_{2}}G^2_{T}(s) \leq \tilde{v}_{s,i}$.

Finally, we prove the third inequality, similar to the proof of the second inequality:
$$
\begin{aligned}
\left|\frac{c}{\mathbf{b}_{s-1,i}}-\frac{1}{\mathbf{a}_{s,i}}\right| & =\frac{\left|c\sqrt{\tilde{\mathbf{v}}_{s,i}}-\sqrt{\mathbf{v}_{s-1,i}}\right|}{\mathbf{b}_{s-1,i}\mathbf{a}_{s,i}} \\
&\le \frac{1}{\mathbf{b}_{s-1,i}\mathbf{a}_{s,i}}\frac{|c^2\tilde{\mathbf{v}}_{s,i}-\mathbf{v}_{s-1,i}|}{c\sqrt{\tilde{\mathbf{v}}_{s,i}}+\sqrt{\mathbf{v}_{s-1,i}}}\\
&\le \frac{1}{\mathbf{b}_{s-1,i}\mathbf{a}_{s,i}}\frac{|c^2\beta_2(\mathbf{v}_{s-1,i}-G_T^2(s))+(c^2-1)(G_T^2(s)-\mathbf{v}_{s-1,i})|}{c\sqrt{\tilde{\mathbf{v}}_{s,i}}+\sqrt{\mathbf{v}_{s-1,i}}}\\
&\le \frac{1}{\mathbf{b}_{s-1,i}\mathbf{a}_{s,i}}\frac{(|c^2\beta_2-1|+|c^2-1|)G^2_T(s)}{c\sqrt{1-\beta_2}G_T(s)} .
\end{aligned}
$$
Let $\beta_3 = \frac{|c^2\beta_2-1|+|c^2-1|}{c\sqrt{1-\beta_2}}$, then we have:
$$
\left|\frac{c}{\mathbf{b}_{s-1,i}}-\frac{1}{\mathbf{a}_{s,i}}\right| \le \frac{\beta_3G_T(s)}{\mathbf{b}_{s-1,i}\mathbf{a}_{s,i}}.
$$
This completes the proof of all three inequalities.
\end{proof}

\subsection{Lemma \ref{lemma:9}}
% Bound A.1.1 和 A.1.2
\begin{lemma}
\label{lemma:9}

Given $T \ge 1$ and $\delta \in (0, 1)$. If Assumptions \ref{ass:2.2} holds, then for any $\beta > 0,\lambda = \frac{2(1-\beta_1) \sqrt{1-\beta_2}}{3 \eta G_T\beta}$, with probability at least $1-\delta$,
$$
\begin{aligned}
A.1.1 &\leq \frac{1}{2\beta} \sum_{s=1}^t \eta_s\left\|\frac{\nabla f(\mathbf{x}_s)}{\sqrt{\mathbf{a}_s}}\right\|^2+\frac{d}{\lambda}\log(\frac{T}{\delta})\\
A.1.2 &\le \frac{1}{2\beta}\sum_{s=1}^{t}\eta_{s}\left\|\frac{\nabla f(\mathbf{x}_s)}{\sqrt{\mathbf{a}_s}}\right\|^{2}+\frac{\eta\beta G_{T}(t)\sqrt{1-\beta_{2}}}{2(1-\beta_{1})}\sum_{s=1}^{t}\left\|\frac{\mathbf{g}_s}{\mathbf{b}_s}\right\|^{2}.
\end{aligned}
$$
\end{lemma}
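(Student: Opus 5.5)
The plan follows the decomposition of the first-order term in the descent inequality for the auxiliary sequence $\mathbf{y}_s$ (cf. (\ref{def:5})), as in \cite{DBLP:conf/nips/HongL24}. Writing $\mathbf{g}_s/\mathbf{b}_s = \mathbf{g}_s/\mathbf{a}_s + \mathbf{g}_s\odot(1/\mathbf{b}_s - 1/\mathbf{a}_s)$ and $\mathbf{g}_s = \nabla f(\mathbf{x}_s)+\xi_s$, I take the two terms to be
$$A.1.1 = -\sum_{s=1}^t \eta_s\Big\langle \nabla f(\mathbf{x}_s), \phi_s\odot\frac{\xi_s}{\mathbf{a}_s}\Big\rangle,\qquad A.1.2 = \sum_{s=1}^t \eta_s\Big\langle \nabla f(\mathbf{x}_s), \phi_s\odot\mathbf{g}_s\odot\Big(\frac{1}{\mathbf{a}_s}-\frac{1}{\mathbf{b}_s}\Big)\Big\rangle.$$
This is an interpretation: the excerpt does not define A.1.1 and A.1.2 before the statement, so they are reconstructed from the analogous terms in \cite{DBLP:conf/nips/HongL24}. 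Throughout I use $\phi_{s,i}\in\{\gamma,1\}$ (so $\phi_{s,i}\le 1$), $\eta_s\le \eta/(1-\beta_1)$, and $\mathbf{a}_{s,i}\ge \sqrt{1-\beta_2}\,G_T(s)$. The last bound is immediate from the definition of $\mathbf{a}_s$ in (\ref{eq:def4}).

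\emph{Bound on A.1.1.} Work coordinatewise and set $Z_{s,i} = -\eta_s\phi_{s,i}\,\partial_i f(\mathbf{x}_s)\,\xi_{s,i}/\mathbf{a}_{s,i}$. The quantities $\eta_s$, $\mathbf{a}_s$ and $\mathbf{x}_s$ depend only on $\xi_1,\dots,\xi_{s-1}$ and $\mathbf{v}_{s-1}$. Hence, provided $\phi_{s,i}$ can be treated as predictable, $Z_{s,i}$ is a martingale difference sequence. I would take the scale $\sigma_{s,i} = \eta_s\phi_{s,i}|\partial_i f(\mathbf{x}_s)|\sigma_i/\mathbf{a}_{s,i}$, so that the sub-Gaussian condition $\mathbb{E}[\exp(\xi_{s,i}^2/\sigma_i^2)\mid\cdot]\le e$ (the form used in Lemma \ref{lemma:7}) gives the hypothesis of Lemma \ref{lemma:2}. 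Applying Lemma \ref{lemma:2} to each coordinate and taking a union bound produces the $\frac{d}{\lambda}\log(T/\delta)$ term, with the $\log(T/\delta)$ coming from the per-event failure probability; I expect to get $\log(d/\delta)$ and to absorb it, or to split $\delta$ as the paper does. The quadratic term is
$$\tfrac34\lambda\sum_s \eta_s^2\phi_{s,i}^2(\partial_i f)^2\sigma_i^2/\mathbf{a}_{s,i}^2 \;\le\; \tfrac34\lambda\,\frac{\eta}{1-\beta_1}\,\frac{\sigma_i^2}{\sqrt{1-\beta_2}\,G_T}\sum_s\eta_s\frac{(\partial_i f)^2}{\mathbf{a}_{s,i}},$$
using one factor $\mathbf{a}_{s,i}\ge\sqrt{1-\beta_2}G_T$. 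Since $\sigma_i\le G_T$ and $\lambda=\frac{2(1-\beta_1)\sqrt{1-\beta_2}}{3\eta G_T\beta}$, the prefactor becomes exactly $\frac{1}{2\beta}$. Summing over $i$ gives the stated bound.

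\emph{Bound on A.1.2.} This part is deterministic on the event of Lemma \ref{lemma:7}. Use $\phi_{s,i}\le1$ and the first inequality of Lemma \ref{lemma:8}. Each summand is then at most
$$\eta_s\frac{|\partial_i f|}{\sqrt{\mathbf{a}_{s,i}}}\cdot\frac{G_T(s)\sqrt{1-\beta_2}\,|\mathbf{g}_{s,i}|}{\sqrt{\mathbf{a}_{s,i}}\,\mathbf{b}_{s,i}}.$$
Apply Young's inequality $xy\le \frac{x^2}{2\beta}+\frac{\beta y^2}{2}$. The second square equals $G_T(s)^2(1-\beta_2)\mathbf{g}_{s,i}^2/(\mathbf{a}_{s,i}\mathbf{b}_{s,i}^2)$, which is at most $G_T(s)\sqrt{1-\beta_2}\,\mathbf{g}_{s,i}^2/\mathbf{b}_{s,i}^2$ after one more use of $\mathbf{a}_{s,i}\ge\sqrt{1-\beta_2}G_T(s)$. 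Then use $\eta_s\le\eta/(1-\beta_1)$ and monotonicity $G_T(s)\le G_T(t)$, and sum over $i$ and $s$.

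\emph{Main obstacle.} The delicate step is the martingale property in A.1.1. The MGUP mask $\phi_s$ is computed from $\mathbf{u}_s\odot\mathbf{g}_s$, so it depends on the current noise $\xi_s$ and is not $\mathcal{F}_{s-1}$-measurable. I would first try to remove this dependence by writing $\phi_{s,i}=\gamma+(1-\gamma)\mathbb{I}_{s,i}$. The $\gamma$ part is then a genuine martingale handled as above. The part with the indicator would be bounded in absolute value, using $|\xi_{s,i}|\le D_T\sigma_i$ on the Lemma \ref{lemma:7} event, as an additional term absorbed into the Young-type bound. Failing that, I would treat $\phi_s$ as conditionally fixed, consistent with the paper's remark that the theorems are independent of the mask selection mechanism.
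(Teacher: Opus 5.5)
Your reconstruction of A.1.1 and A.1.2 is exactly the paper's decomposition of A.1 in the proof of Theorem~\ref{th:Highconv}. Your A.1.2 argument (Lemma~\ref{lemma:8}, Young, $\mathbf{a}_{s,i}\ge\sqrt{1-\beta_2}\,G_T(s)$, $\eta_s\le\eta/(1-\beta_1)$, $G_T(s)\le G_T(t)$) is the paper's. For A.1.1 you follow the same coordinatewise use of Lemma~\ref{lemma:2} with the same $\lambda$. Your filtration $\sigma(\xi_1,\dots,\xi_{s-1})$ is in fact the right one; the paper's coordinatewise $\mathcal{F}_{s-1,i}$ is too small, since $\mathbf{x}_s$ depends on all coordinates.

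The obstacle you flag is a real gap, and the paper's proof does not resolve it either. It applies Lemma~\ref{lemma:2} to $\hat X_{s,i}=-\eta_s\phi_{s,i}\nabla f(\mathbf{x}_s)_i\xi_{s,i}/\mathbf{a}_{s,i}$ after asserting measurability only of $\nabla f(\mathbf{x}_s)_i$ and $\mathbf{a}_{s,i}$, i.e.\ it silently takes your fallback. The MGUP mask is computed from $\mathbf{u}_s\odot\mathbf{g}_s$, and via the top-$K$ threshold from all coordinates of $\mathbf{g}_s$. So in general $\mathbb{E}[\phi_{s,i}\xi_{s,i}\mid\mathcal{F}_{s-1}]\neq0$, and $\hat X_{s,i}$ is not a martingale difference.

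Your splitting repair does not fix this. The $\gamma$-part is fine. The indicator part, however, carries a genuine bias $(1-\gamma)\,\mathbb{E}[\mathbb{I}_{s,i}\xi_{s,i}\mid\mathcal{F}_{s-1}]$ of size up to order $\sigma_i$. This is because whether $i$ is selected depends on $\mathbf{g}_{s,i}$ itself (values agreeing in sign with $\mathbf{m}_{s-1,i}$ are favoured), so it is not an artifact of the crude bound.

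Bounding this part by $(1-\gamma)D_T\sigma_i$ and applying Young leaves $\frac{\beta}{2}(1-\gamma)^2D_T^2\sum_{s\le t}\sum_i\eta_s\sigma_i^2/\mathbf{a}_{s,i}$. With $\eta=C_0\sqrt{1-\beta_2}$ and the available lower bound $\mathbf{a}_{s,i}\ge\sqrt{1-\beta_2}\,D_T\|\sigma\|$, this is controlled only by $\frac{\beta(1-\gamma)^2C_0D_T\|\sigma\|}{2(1-\beta_1)}\,t$. That is linear in $t$, not of the form $\frac{d}{\lambda}\log(T/\delta)$, and once it enters $G^2$ the $\tilde{\mathcal{O}}(T^{-1/2})$ rate is lost. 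So your argument, like the paper's, proves the A.1.1 bound only for a predictable mask, e.g.\ one computed from $\mathbf{u}_{s-1}\odot\mathbf{g}_{s-1}$ or from an independent minibatch. It does not cover the mask of Algorithm~\ref{alg:MGUP}.

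Three smaller points. First, you use $\mathbf{a}_{s,i}\ge\sqrt{1-\beta_2}\,G_T$, which is not available: only $G_T(s)$ is, and $G_T(s)\le G_T$ points the wrong way. As in the paper, use $\mathbf{a}_{s,i}\ge\sqrt{1-\beta_2}\,\sigma_i$ to cancel one $\sigma_i$, then $\sigma_i\le G_T$.

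Second, the bound is needed for every $t\in[T]$ in the induction, so per-coordinate union bounds cost $\log(dT/\delta)$, not $\log(d/\delta)$. To get exactly $\frac{d}{\lambda}\log(T/\delta)$, apply Lemma~\ref{lemma:2} once to $\sum_i\hat X_{s,i}$ with scale $d\sum_i w_{s,i}^2$, where $w_{s,i}=\eta_s\nabla f(\mathbf{x}_s)_i\sigma_i/\mathbf{a}_{s,i}$, and parameter $\lambda/d$; Cauchy--Schwarz plus convexity of $\exp$ give the moment condition. Then take a union over $t$.

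Third, the sub-Gaussian condition you borrow from Lemma~\ref{lemma:7} is stronger than Assumption~\ref{ass:2.2}; the paper relies on it too.
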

\begin{proof}
Recalling the definitions of $a_s$ and $G_T(s)$, for any $s \in [T]$ and $i \in [d]$, we have:
$$
\begin{aligned}
\frac{1}{\mathbf{a}_{s,i}} &\leq \frac{1}{(G_{T}(s) + \epsilon)\sqrt{1-\beta_{2}}} \\
&\leq \frac{1}{G_{T}(s)\sqrt{1-\beta_{2}}} \leq \frac{1}{\sigma_{i}\sqrt{1-\beta_{2}}}.
\end{aligned}
$$
Then, we define:
$$
\hat{X}_{s,i} = -\frac{\eta_s \phi_{s,i} \nabla f(\mathbf{x}_s)_i \xi_{s,i}}{\mathbf{a}_{s,i}}, \quad X_{s,i} = -\frac{\eta_s \nabla f(\mathbf{x}_s)_i \xi_{s,i}}{\mathbf{a}_{s,i}}, \quad w_{s,i} = \frac{\eta_s \nabla f(\mathbf{x}_s)_i}{\mathbf{a}_{s,i}} \sigma_{i},
$$
where $\nabla f(\mathbf{x}_s)_i$ and $\mathbf{a}_{s,i}$ are measurable with respect to $\mathcal{F}_{s-1,i} = \sigma(\xi_{1,i}\cdots,\xi_{s-1,i})$ and $\xi_{s,i}$ is the noise at step $s$. Thus:
$$
\mathcal{F}_{s,i} = \sigma(\xi_{1,i}\cdots,\xi_{s,i}).
$$
Next, applying the Cauchy-Schwarz inequality, we obtain:
$$
\left\langle \nabla f(\mathbf{x}_s), \frac{\phi_s\odot \xi_s}{a_s} \right\rangle^2 \leq \left\langle \nabla f(\mathbf{x}_s), \frac{\xi_s}{a_s} \right\rangle^2 \leq \left\| \frac{\nabla f(\mathbf{x}_s)}{a_s} \right\|^2 \|\xi_s\|^2.
$$
Given Assumption \ref{ass:2.2}:$\mathbb{E}_{\xi}[\nabla f(\mathbf{x};\xi)] = \nabla f(\mathbf{x})$, and:
$$
(\nabla f(\mathbf{x}; \xi)_i - \nabla f(\mathbf{x})_i)^2 \leq \sigma_i^2,
$$
almost surely, it follows that:
$$
\begin{aligned}
\mathbb{E}\left[\exp\left(\frac{\hat{X}_{s,i}^2}{w_{s,i}^2}\right) \mid \mathcal{F}_{s-1,i} \right] &\leq \mathbb{E}\left[\exp\left(\frac{X_{s,i}^2}{w_{s,i}^2}\right) \mid \mathcal{F}_{s-1,i} \right] \\
&\leq \mathbb{E}\left[\exp\left(\frac{\eta_s \nabla f(\mathbf{x}_s)_{i} \xi_{s,i}^2}{\eta_s \nabla f(\mathbf{x}_s)_{i}  \sigma_i^2}\right) \mid \mathcal{F}_{s-1,i}\right] \leq \exp(1).
\end{aligned}
$$
Then, by invoking Lemma \ref{lemma:2}, we derive:
$$
\begin{aligned}
\sum_{s=1}^t \hat{X}_{s,i} &\leq \frac{3 \lambda}{4} \sum_{s=1}^t w_{s,i}^2 + \frac{1}{\lambda} \log\left(\frac{T}{\delta}\right) \\
&= \frac{3\lambda}{4} \sum_{s=1}^{t} \frac{\eta_{s}^{2} \nabla f(\mathbf{x}_s)_{i}^{2}}{\mathbf{a}_{s,i}^{2}} \sigma_{i}^{2} + \frac{1}{\lambda} \log\left(\frac{T}{\delta}\right) \\
&\stackrel{(\circ)}{\leq} \frac{3\lambda \eta}{4(1-\beta_1) \sqrt{1-\beta_2}} \sum_{s=1}^t \frac{\eta_{s} \nabla f(\mathbf{x}_s)_{i}^{2}}{\mathbf{a}_{s,i}} \sigma_{i} + \frac{1}{\lambda} \log\left(\frac{T}{\delta}\right)\\
& \stackrel{(\star)}{\leq} \frac{3\lambda \eta G_T(t)}{4(1-\beta_1) \sqrt{1-\beta_2}} \sum_{s=1}^t \frac{\eta_{s} \nabla f(\mathbf{x}_s)_{i}^{2}}{\mathbf{a}_{s,i}} + \frac{1}{\lambda} \log\left(\frac{T}{\delta}\right),
\end{aligned}
$$
where $(\circ)$ follows from $\frac{1}{\mathbf{a}_{s,i}} \leq \frac{1}{\sigma_{i} \sqrt{1 - \beta_2}}$ and $\eta_s \leq \frac{\eta}{1-\beta_1}$;$(\star)$ follows from $\sigma_i \le G_T$.

Setting $\lambda = \frac{2(1 - \beta_1) \sqrt{1 - \beta_2}}{3 \eta G_T \beta}$, we obtain:
$$
\begin{aligned}
A.1.1 = \sum_{s=1}^t -\eta_s \left\langle \nabla f(\mathbf{x}_s), \frac{\phi_s\odot \xi_s}{\mathbf{a}_s} \right\rangle &\leq \frac{1}{2\beta} \sum_{s=1}^t \sum_{i=1}^d \frac{\eta_{s} \nabla f(\mathbf{x}_s)_{i}^{2}}{\mathbf{a}_{s,i}}  + \frac{d}{\lambda} \log\left(\frac{T}{\delta}\right) \\
&= \frac{1}{2\beta} \sum_{s=1}^t \eta_s \left\| \frac{\nabla f(\mathbf{x}_s)}{\sqrt{\mathbf{a}_s}} \right\|^2 + \frac{d}{\lambda} \log\left(\frac{T}{\delta}\right).
\end{aligned}
$$
Next, we bound A.1.2 as follows:
$$
\begin{aligned}
&A.1.2 = \sum_{s=1}^t \eta_s \left\langle \nabla f(\mathbf{x}_s), \left( \frac{1}{a_s} - \frac{1}{\mathbf{b}_s} \right) \phi_s \odot \mathbf{g}_s \right\rangle \\&
\leq \sum_{i=1}^{d} \sum_{s=1}^{t} \eta_{s} \left| \frac{1}{\mathbf{a}_{s,i}} - \frac{1}{\mathbf{b}_{s,i}} \right| \cdot |\nabla f(\mathbf{x}_s)_{i} \mathbf{g}_{s,i}| \\
&\stackrel{(\circ)}{\leq} \sum_{i=1}^{d} \sum_{s=1}^{t} \eta_{s} \cdot \frac{G_{T}(s) \sqrt{1-\beta_{2}}}{\mathbf{a}_{s,i} \mathbf{b}_{s,i}} \cdot |\nabla f(\mathbf{x}_s)_{i} \mathbf{g}_{s,i}| \\
&\stackrel{(\star)}{\leq} \frac{1}{2\beta} \sum_{i=1}^{d} \sum_{s=1}^{t} \frac{\eta_{s} \nabla f(\mathbf{x}_s)_{i}^{2}}{\mathbf{a}_{s,i}} + \frac{(1-\beta_{2}) \beta}{2} \sum_{i=1}^{d} \sum_{s=1}^{t} \frac{\left(G_{T}(s)\right)^{2}}{\mathbf{a}_{s,i}} \cdot \frac{\eta_{s} \mathbf{g}_{s,i}^{2}}{\mathbf{b}_{s,i}^{2}} \\
&\stackrel{(\bullet)}{\leq} \frac{1}{2\beta} \sum_{s=1}^{t} \eta_{s} \left\| \frac{\nabla f(\mathbf{x}_s)}{\sqrt{\mathbf{a}_s}} \right\|^{2} + \frac{\eta \beta G_{T}(t) \sqrt{1-\beta_{2}}}{2(1-\beta_{1})} \sum_{s=1}^{t} \left\| \frac{\mathbf{g}_s}{\mathbf{b}_s} \right\|^{2},
\end{aligned}
$$
where $(\circ)$ follows from the result of Lemma \ref{lemma:8}, $(\star)$ is due to Young’s inequality, and $(\bullet)$ relies on  $\frac{1}{\mathbf{a}_{s,i}} \leq \frac{1}{\sigma_{i} \sqrt{1 - \beta_2}}$ and $\eta_s \leq \frac{\eta}{1-\beta_1}$.
\end{proof}

\subsection{Lemma \ref{lemma:10}}
% Bound B.1
\begin{lemma}
\label{lemma:10}
Given $T \ge 1$, if Lemma \ref{lemma:7} holds, then for all $t \in [T]$,
$$
\begin{aligned}
B.1 \le \sum_{s=1}^t\frac{\eta_s}{\beta}\left\|\frac{\nabla f(\mathbf{x}_s)}{\sqrt{\mathbf{a}_s}}\right\|^{2}+\sum_{s=1}^t\frac{\beta(G_{T}(t))\eta\sqrt{1-\beta_{2}}}{2(1-\beta_{1})^{3}}\left\|\frac{\mathbf{m}_{s-1}}{\mathbf{b}_s}\right\|^{2}\\
+\sum_{s=1}^t\frac{\beta G_{T}(t)\eta\beta_3^2}{2(1-\beta_{1})^{3}}\left\|\frac{\mathbf{m}_{s-1}}{\mathbf{b}_s}\right\|^{2}+\frac{2\sqrt{d}\eta G_{t}}{\sqrt{(1-\beta_{1})^{3}(1-\beta_{2})(1-\beta_{1}/\beta_{2})}}.
\end{aligned}
$$
\end{lemma}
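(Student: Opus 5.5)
The plan is to follow the decomposition of the correction term in \cite{DBLP:conf/nips/HongL24}, with one change: the mask ratio $\phi_{s,i}/\phi_{s-1,i}$ must be handled through the third inequality of Lemma~\ref{lemma:8}. I take $B.1$ to be the contribution of the momentum-correction part of (\ref{def:5}) in the descent expansion of $f(\mathbf{y}_{s+1})$:
$$
B.1=\sum_{s=1}^t \frac{\beta_1}{1-\beta_1}\left\langle \nabla f(\mathbf{x}_s), \Delta_s\odot(\mathbf{x}_s-\mathbf{x}_{s-1})\right\rangle .
$$
Substitute $\mathbf{x}_s-\mathbf{x}_{s-1}=-\eta_{s-1}\phi_{s-1}\odot \mathbf{m}_{s-1}/\mathbf{b}_{s-1}$. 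Each coordinate of the summand then becomes
$$
\frac{\beta_1}{1-\beta_1}\,\nabla f(\mathbf{x}_s)_i\,\mathbf{m}_{s-1,i}\left(\frac{\eta_{s-1}\phi_{s-1,i}}{\mathbf{b}_{s-1,i}}-\frac{\eta_s\phi_{s,i}}{\mathbf{b}_{s,i}}\right),
$$
so the $\phi$'s no longer appear as a quotient. This is where $\gamma>0$ enters, through $c\in\{1,\gamma,1/\gamma\}$ when I normalize by $\phi_{s,i}$.

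Next I split the bracket by inserting the surrogate $1/\mathbf{a}_{s,i}$ and the step-size difference. Write it as
$$
\eta_{s}\phi_{s,i}\left(\frac{c}{\mathbf{b}_{s-1,i}}-\frac{1}{\mathbf{a}_{s,i}}\right)
+\eta_{s}\phi_{s,i}\left(\frac{1}{\mathbf{a}_{s,i}}-\frac{1}{\mathbf{b}_{s,i}}\right)
+(\eta_{s-1}-\eta_s)\frac{\phi_{s-1,i}}{\mathbf{b}_{s-1,i}},
$$
with $c=\phi_{s-1,i}/\phi_{s,i}$.
\begin{itemize}
\item For the first piece I use the third bound of Lemma~\ref{lemma:8} (constant $\beta_3$).
\item For the second piece I use the first bound of Lemma~\ref{lemma:8} (constant $G_T(s)\sqrt{1-\beta_2}$).
\end{itemize}
For each of these two pieces I apply Young's inequality with parameter $\beta$, splitting as $|\nabla f_i|/\sqrt{\mathbf{a}_{s,i}}\cdot G_T(s)|\mathbf{m}_{s-1,i}|/(\sqrt{\mathbf{a}_{s,i}}\,\mathbf{b})$. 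Then I use $1/\mathbf{a}_{s,i}\le 1/(G_T(s)\sqrt{1-\beta_2})$, $\eta_s\le \eta/(1-\beta_1)$, $\phi\le 1$ and $G_T(s)\le G_T(t)$. This yields the two $\frac{\eta_s}{2\beta}\|\nabla f(\mathbf{x}_s)/\sqrt{\mathbf{a}_s}\|^2$ terms, which combine to the stated $\eta_s/\beta$, together with the two $\|\mathbf{m}_{s-1}/\mathbf{b}\|^2$ sums carrying $\sqrt{1-\beta_2}$ and $\beta_3^2$. The extra $(1-\beta_1)^{-2}$ comes from the prefactor $\beta_1/(1-\beta_1)$ squared after Young, bounded with $\beta_1\le 1$.

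The third piece, the step-size drift, I will not Young-split. Instead I bound it crudely: $|\nabla f(\mathbf{x}_s)_i|\le G_t$, and Lemma~\ref{lemma:5} gives $\|\mathbf{m}_{s-1}/\mathbf{b}_{s-1}\|_\infty\le\sqrt{(1-\beta_1)/((1-\beta_2)(1-\beta_1/\beta_2))}$. Summing over $i$ with Cauchy--Schwarz gives $\sqrt d$, and $\sum_s|\eta_{s-1}-\eta_s|$ is bounded by $2\eta/(1-\beta_1)$, since the bias-corrected $\eta_s$ is eventually monotone and bounded. This produces the constant $\frac{2\sqrt d\,\eta G_t}{\sqrt{(1-\beta_1)^3(1-\beta_2)(1-\beta_1/\beta_2)}}$.

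I expect the main obstacle to be the mismatched denominators in the first piece. The term $\mathbf{m}_{s-1}$ naturally pairs with $\mathbf{b}_{s-1}$, but the statement is written with $\|\mathbf{m}_{s-1}/\mathbf{b}_s\|^2$. I would first try to convert using $\mathbf{b}_{s-1,i}\ge\sqrt{\beta_2}\,\mathbf{b}_{s,i}$-type comparisons from Lemma~\ref{lemma:4}. If that fails, I would route the first piece through the second inequality of Lemma~\ref{lemma:8} combined with the $1/\mathbf{b}_s$ version, absorbing the resulting constant into $\beta_3$.
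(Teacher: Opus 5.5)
Your three-way split is the paper's own. Your second piece is its $B.1.1$ (first bound of Lemma~\ref{lemma:8}), your first piece is its $B.1.2$ (third bound, with $c=\phi_{s-1,i}/\phi_{s,i}\in\{1,\gamma,1/\gamma\}$), and your drift piece is its $B.1.3$. The $\sqrt{1-\beta_2}$ sum and the drift constant do come out as stated. For the drift, use $\|\nabla f(\mathbf{x}_s)\|\le G_t$ with Cauchy--Schwarz to get $\sqrt d$ rather than $d$. Also replace ``eventually monotone'', which does not by itself bound the total variation, by the paper's argument. Write $\eta_{s-1}-\eta_s$ as $\eta\sqrt{1-\beta_2^s}$ times the increment of the decreasing sequence $1/(1-\beta_1^s)$, plus $\eta/(1-\beta_1^{s-1})$ times the increment of the increasing sequence $\sqrt{1-\beta_2^s}$. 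Telescoping each gives $\sum_{s=2}^t|\eta_{s-1}-\eta_s|\le(1+\beta_1)\eta/(1-\beta_1)$.

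\textbf{The gap.} It is the step you flagged, and your repair runs the wrong way.
\begin{itemize}
\item Lemma~\ref{lemma:4} proves $\mathbf{b}_{s-1,i}\le\mathbf{b}_{s,i}/\sqrt{\beta_2}$, i.e.\ $\mathbf{b}_{s,i}\ge\sqrt{\beta_2}\,\mathbf{b}_{s-1,i}$. That trades $1/\mathbf{b}_{s,i}$ for $1/\mathbf{b}_{s-1,i}$, not conversely.
\item The inequality you need, $\mathbf{b}_{s,i}\le C\,\mathbf{b}_{s-1,i}$, fails whenever $\sqrt{1-\beta_2}\,|\mathbf{g}_{s,i}|$ is large compared with $\sqrt{\mathbf{v}_{s-1,i}}+\epsilon$. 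On the event of Lemma~\ref{lemma:7}, the best one can say in general is $C=1+\sqrt{1-\beta_2}\,G_T(s)/\epsilon$, a factor absent from the statement.
\item The fallback does not help, since the second bound of Lemma~\ref{lemma:8} also carries $1/\mathbf{b}_{s-1,i}$.
\end{itemize}

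\textbf{A second defect in the same piece.} Your sketch does not notice it. Carrying out your own Young step with $1/\mathbf{a}_{s,i}\le 1/(G_T(s)\sqrt{1-\beta_2})$ yields the coefficient $\beta G_T(t)\eta\beta_3^2/\bigl(2(1-\beta_1)^3\sqrt{1-\beta_2}\bigr)$, not the stated one. In the other piece, $\sqrt{1-\beta_2}$ survives only because the constant $\sqrt{1-\beta_2}$ of the first bound of Lemma~\ref{lemma:8} squares to $1-\beta_2$. $\beta_3^2$ has nothing to cancel against, and $\beta_3\ge 2(1-\gamma^2)/(\gamma\sqrt{1-\beta_2})$, so the discrepancy is not cosmetic.

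\textbf{The paper has the same slips.} Its proof of $B.1.2$ writes $\mathbf{b}_{s,i}$ for $\mathbf{b}_{s-1,i}$ at its Young step and drops the $1/\sqrt{1-\beta_2}$, so the lemma in exactly this form is not established there either.

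\textbf{What you actually prove.} Granting Lemma~\ref{lemma:8}, your decomposition proves the bound with the third sum replaced by $\sum_{s=1}^t\frac{\beta G_T(t)\eta\beta_3^2}{2(1-\beta_1)^3\sqrt{1-\beta_2}}\|\mathbf{m}_{s-1}/\mathbf{b}_{s-1}\|^2$. The change of denominator is free downstream: the second bound of Lemma~\ref{lemma:3} controls $\sum_s\|\mathbf{m}_{s-1}/\mathbf{b}_{s-1}\|^2$ by $\beta_2$ times what the third bound gives for $\sum_s\|\mathbf{m}_{s-1}/\mathbf{b}_s\|^2$. The extra $(1-\beta_2)^{-1/2}$, however, has to be carried forward into the theorem.
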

\begin{proof}

Decompose $\Delta_s \odot (\mathbf{x}_s - \mathbf{x}_{s-1})$ as follows:
$$
\begin{aligned}
    \Delta_s \odot (\mathbf{x}_s - \mathbf{x}_{s-1}) &= \left( \frac{\eta_s \mathbf{b}_{s-1} \odot \phi_s}{\eta_{s-1} \mathbf{b}_s \odot \phi_{s-1}} - 1_d \right) \odot \left( \eta_{s-1} \frac{\mathbf{m}_{s-1} \odot \phi_{s-1}}{\mathbf{b}_{s-1}} \right) \\
    &= -\phi_s \odot \left( \frac{\eta_s}{\mathbf{b}_s} - \frac{\eta_s}{\mathbf{a}_s} \right) \odot \mathbf{m}_{s-1} \\
    &- \left( \frac{\eta_s \phi_s}{\mathbf{a}_s} - \frac{\eta_s \phi_{s-1}}{\mathbf{b}_{s-1}} \right) \odot \mathbf{m}_{s-1} - (\eta_s - \eta_{s-1}) \frac{\mathbf{m}_{s-1} \odot \phi_{s-1}}{\mathbf{b}_{s-1}}.
\end{aligned}
$$

Then, we have
$$
\begin{aligned}
B.1 & \leq \frac{\beta_{1}}{1 - \beta_{1}} \cdot \left| \left\langle \Delta_{s} \odot \frac{\eta_{s-1} \mathbf{m}_{s-1} \odot \phi_{s-1}}{\mathbf{b}_{s-1}}, \nabla f(\mathbf{x}_s) \right\rangle \right| \\&=\frac{\beta_{1}}{1 - \beta_{1}} \cdot \left| \left\langle \left( \frac{\eta_{s} \phi_s}{\mathbf{b}_s} - \frac{\eta_{s-1} \phi_{s-1}}{\mathbf{b}_{s-1}} \right) \odot \mathbf{m}_{s-1}, \nabla f(\mathbf{x}_s) \right\rangle \right| \\
& \leq \underbrace{ \frac{\beta_{1}}{1 - \beta_{1}} \cdot \left| \left\langle \left( \frac{\eta_{s}}{\mathbf{b}_s} - \frac{\eta_{s}}{\mathbf{a}_s} \right) \odot \phi_{s} \odot \mathbf{m}_{s-1}, \nabla f(\mathbf{x}_s) \right\rangle \right| }_{B.1.1}  \\
&+\underbrace{ \frac{\beta_{1}}{1 - \beta_{1}} \cdot \left| \left\langle \left( \frac{\eta_{s} \phi_s}{\mathbf{a}_s} - \frac{\eta_{s} \phi_{s-1}}{\mathbf{b}_{s-1}} \right) \odot \mathbf{m}_{s-1}, \nabla f(\mathbf{x}_s) \right\rangle \right| }_{B.1.2} \\
& + \underbrace{ \frac{\beta_1}{1 - \beta_1} \cdot \left| (\eta_{s-1} - \eta_s) \left\langle \frac{\phi_{s-1} \odot \mathbf{m}_{s-1}}{\mathbf{b}_{s-1}}, \nabla f(\mathbf{x}_s) \right\rangle \right| }_{B.1.3}.
\end{aligned}
$$

For $B.1.1$:
$$
\begin{aligned}
&\frac{\beta_1}{1 - \beta_1} \left| \left\langle \left( \frac{\eta_s}{\mathbf{b}_s} - \frac{\eta_s}{\mathbf{a}_s} \right) \odot \phi_s \odot \mathbf{m}_{s-1}, \nabla f(\mathbf{x}_s) \right\rangle \right| \\&= \frac{\beta_1}{1 - \beta_1} \sum_{i=1}^d \eta_s \left| \left( \frac{1}{\mathbf{b}_{s,i}} - \frac{1}{\mathbf{a}_{s,i}} \right) \phi_{s,i} \mathbf{m}_{s-1,i} \nabla f(\mathbf{x}_s)_{i} \right| \\
&\leq \frac{\beta_1}{1 - \beta_1} \sum_{i=1}^d \eta_s \left| \left( \frac{1}{\mathbf{b}_{s,i}} - \frac{1}{\mathbf{a}_{s,i}} \right) \mathbf{m}_{s-1,i} \nabla f(\mathbf{x}_s)_{i} \right| \\
&\stackrel{(\circ)}{\leq} \sum_{i=1}^d \frac{\beta_1}{1 - \beta_1} \cdot \frac{G_T(s) \eta_s \sqrt{1 - \beta_2}}{\mathbf{a}_{s,i} \mathbf{b}_{s,i}} \cdot |\nabla f(\mathbf{x}_s)_{i} \mathbf{m}_{s-1,i}| \\
&\stackrel{(\star)}{\leq} \sum_{i=1}^d \frac{\eta_s}{2 \beta} \cdot \frac{\nabla f(\mathbf{x}_s)_{i}^2}{\mathbf{a}_{s,i}} + \frac{\beta \eta_s \beta_1^2 (1 - \beta_2)}{2 (1 - \beta_1)^2} \sum_{i=1}^d \frac{(G_T(s))^2}{\mathbf{a}_{s,i}} \cdot \frac{\mathbf{m}_{s-1,i}^2}{\mathbf{b}_{s,i}^2} \\
&\leq \frac{\eta_s}{2 \beta} \left\| \frac{\nabla f(\mathbf{x}_s)}{\sqrt{\mathbf{a}_s}} \right\|^2 + \frac{\beta (G_T(t) + \epsilon) \eta \sqrt{1 - \beta_2}}{2 (1 - \beta_1)^3} \left\| \frac{\mathbf{m}_{s-1}}{\mathbf{b}_s} \right\|^2,
\end{aligned}
$$
where $(\circ)$ follows from Lemma \ref{lemma:8}, and $(\star)$ follows from Young’s inequality.

For $B.1.2$, let $\frac{\phi_{s,i}}{\phi_{s-1,i}} = c_i$ and for any $c_i \in \{1,\gamma,1/\gamma\}$, let 
$$
\beta_3 = \max \left\{\frac{1-\beta_2}{\sqrt{1-\beta_2}},\frac{2-\gamma^2(1+\beta_2)}{\gamma\sqrt{1-\beta_2}},\frac{|\beta_2 - \gamma^2|+1-\gamma^2}{\gamma\sqrt{1-\beta_2}}\right\}.
$$
Then, we derive:
$$
\begin{aligned}
&\frac{\beta_1}{1 - \beta_1} \left| \left\langle \left( \frac{\eta_s \phi_{s}}{\mathbf{b}_{s-1}} - \frac{\eta_s \phi_{s-1}}{\mathbf{a}_s} \right) \mathbf{m}_{s-1}, \nabla f(\mathbf{x}_s) \right\rangle \right| \\&= \frac{\beta_1}{1 - \beta_1} \sum_{i=1}^d \eta_s \phi_{s-1,i} \left| \left( \frac{c_i}{\mathbf{b}_{s-1,i}} - \frac{1}{\mathbf{a}_{s,i}} \right) \mathbf{m}_{s-1,i} \nabla f(\mathbf{x}_s)_{i} \right| \\
&\stackrel{(\circ)}{\leq} \sum_{i=1}^d \frac{\beta_1}{1 - \beta_1} \cdot \frac{G_T(s) \eta_s \beta_3}{\mathbf{a}_{s,i} \mathbf{b}_{s-1,i}} \cdot |\nabla f(\mathbf{x}_s)_{i} \mathbf{m}_{s-1,i}| \\
&\stackrel{(\star)}{\leq} \sum_{i=1}^d \frac{\eta_s}{2 \beta} \cdot \frac{\nabla f(\mathbf{x}_s)_{i}^2}{\mathbf{a}_{s,i}} + \frac{\beta \eta_s \beta_1^2 \beta_3^2}{2 (1 - \beta_1)^2} \sum_{i=1}^d \frac{(G_T(s))^2}{\mathbf{a}_{s,i}} \cdot \frac{\mathbf{m}_{s-1,i}^2}{\mathbf{b}_{s,i}^2} \\
&\leq \frac{\eta_s}{2 \beta} \left\| \frac{\nabla f(\mathbf{x}_s)}{\sqrt{\mathbf{a}_s}} \right\|^2 + \frac{\beta (G_T(t) + \epsilon) \eta \beta_3^2}{2 (1 - \beta_1)^3} \left\| \frac{\mathbf{m}_{s-1}}{\mathbf{b}_s} \right\|^2,
\end{aligned}
$$
where $(\circ)$ follows from Lemma \ref{lemma:8} and the condition $\phi_i \leq 1$, and $(\star)$ is due to Young’s inequality.

For $B.1.3$, we derive:
$$
\begin{aligned}
&\frac{\beta_1}{1 - \beta_1} \cdot \left| (\eta_{s-1} - \eta_s) \left\langle \frac{\phi_{s-1}\odot \mathbf{m}_{s-1}}{\mathbf{b}_{s-1}}, \nabla f(\mathbf{x}_s) \right\rangle \right| \\&= \frac{\beta_1}{1 - \beta_1} \left| \eta \left( \frac{\sqrt{1 - \beta_2^s}}{1 - \beta_1^s} - \frac{\sqrt{1 - \beta_2^s}}{1 - \beta_1^s} + \frac{\sqrt{1 - \beta_2^s}}{1 - \beta_2^{s-1}} - \frac{\sqrt{1 - \beta_2^s}}{1 - \beta_2^{s-1}} \right) \left\langle \frac{\phi_{s-1}\odot \mathbf{m}_{s-1}}{\mathbf{b}_{s-1}}, \nabla f(\mathbf{x}_s) \right\rangle \right|.
\end{aligned}
$$
Thus,
$$
\begin{aligned}
B.1.3 & \leq \underbrace{ \frac{\eta \beta_1 \sqrt{1 - \beta_2^s}}{1 - \beta_1} \left| \left( \frac{1}{1 - \beta_1^{s-1}} - \frac{1}{1 - \beta_1^s} \right) \left\langle \nabla f(\mathbf{x}_s), \frac{\phi_{s-1}\odot \mathbf{m}_{s-1}}{\mathbf{b}_{s-1}} \right\rangle \right| }_{B.1.3.1} \\
& + \underbrace{ \frac{\eta \beta_1}{(1 - \beta_1)(1 - \beta_1^{s-1})} \left| \left( \sqrt{1 - \beta_2^{s-1}} - \sqrt{1 - \beta_2^s} \right) \left\langle \nabla f(\mathbf{x}_s), \frac{\phi_{s-1}\odot \mathbf{m}_{s-1}}{\mathbf{b}_{s-1}} \right\rangle \right| }_{B.1.3.2}.
\end{aligned}
$$

Note that $\|\nabla f(\mathbf{x}_s)\| \leq G_s \leq G_t$ for all $s \leq t$. Then, by applying the Cauchy-Schwarz inequality, Lemma \ref{lemma:5}, and the condition $\phi_i \leq 1$, we have:
$$
\begin{aligned}
    &\sqrt{1 - \beta_2^s} \left| \left\langle \nabla f(\mathbf{x}_s), \frac{\phi_{s-1} \odot \mathbf{m}_{s-1}}{\mathbf{b}_{s-1}} \right\rangle \right|\\
    &\leq \sqrt{1 - \beta_2^s} \|\nabla f(\mathbf{x}_s)\| \left\| \frac{\mathbf{m}_{s-1}}{\mathbf{b}_{s-1}} \right\| \leq \sqrt{d} G_t \sqrt{ \frac{(1 - \beta_1)(1 - \beta_1^{s-1})}{(1 - \beta_2)(1 - \beta_1 / \beta_2)} }.
\end{aligned}
$$

Therefore, summing $B.1.3.1$ over $s \in [t]$, using $\beta_1 \in [0, 1]$, and noting that $B.1.3.1$ vanishes when $s = 1$:
$$
\begin{aligned}
\sum_{s=1}^t B.1.3.1 & \leq \frac{\sqrt{d} \eta G_t}{1 - \beta_1} \cdot \sqrt{ \frac{1 - \beta_1}{(1 - \beta_2)(1 - \beta_1 / \beta_2)} } \sum_{s=2}^t \left( \frac{1}{1 - \beta_1^{s-1}} - \frac{1}{1 - \beta_1^s} \right) \\
& \leq \frac{\sqrt{d} \eta G_t}{ \sqrt{ (1 - \beta_1)^3 (1 - \beta_2)(1 - \beta_1 / \beta_2) } }.
\end{aligned}
$$
Similarly, since $\|\nabla f(\mathbf{x}_s)\| \leq G_s \leq G_t$ for all $s \leq t$, and $1 - \beta_1^{s-1} \geq 1 - \beta_1$, and $\phi_i \leq 1$, we have:
$$
\begin{aligned}
&\frac{1}{1 - \beta_1^{s-1}} \left| \left\langle \nabla f(\mathbf{x}_s), \frac{\phi_{s-1} \odot \mathbf{m}_{s-1}}{\mathbf{b}_{s-1}} \right\rangle \right| \\
&\leq \frac{1}{1 - \beta_1^{s-1}} \|\nabla f(\mathbf{x}_s)\| \left\| \frac{\mathbf{m}_{s-1}}{\mathbf{b}_{s-1}} \right\| \leq \sqrt{d} G_t \sqrt{ \frac{1}{(1 - \beta_2)(1 - \beta_1 / \beta_2)} }.
\end{aligned}
$$
Thus, we have:
$$
\begin{aligned}
\sum_{s=1}^t B.1.3.2 & \leq \frac{\sqrt{d} \eta G_t}{1 - \beta_1} \cdot \sqrt{ \frac{1}{(1 - \beta_2)(1 - \beta_1 / \beta_2)} } \sum_{s=2}^t \left( \sqrt{1 - \beta_2^{s-1}} - \sqrt{1 - \beta_2^s} \right) \\
& \leq \frac{\sqrt{d} \eta G_t}{ (1 - \beta_1) \sqrt{ (1 - \beta_2)(1 - \beta_1 / \beta_2) } } \leq \frac{\sqrt{d} \eta G_t}{ \sqrt{ (1 - \beta_1)^3 (1 - \beta_2)(1 - \beta_1 / \beta_2) } }.
\end{aligned}
$$
Therefore, combining all terms, we obtain:
$$
\begin{aligned}
B.1 & \leq \sum_{s=1}^t \frac{\eta_s}{2 \beta} \left\| \frac{\nabla f(\mathbf{x}_s)}{\sqrt{\mathbf{a}_s}} \right\|^2 + \sum_{s=1}^t \frac{\beta G_T(t) \eta \sqrt{1 - \beta_2}}{2 (1 - \beta_1)^3} \left\| \frac{\mathbf{m}_{s-1}}{\mathbf{b}_s} \right\|^2 + \sum_{s=1}^t \frac{\eta_s}{2 \beta} \left\| \frac{\nabla f(\mathbf{x}_s)}{\sqrt{\mathbf{a}_s}} \right\|^2 \\
& + \sum_{s=1}^t \frac{\beta G_T(t) \eta \beta_3^2}{2 (1 - \beta_1)^3} \left\| \frac{\mathbf{m}_{s-1}}{\mathbf{b}_s} \right\|^2 + \frac{2 \sqrt{d} \eta G_t}{ \sqrt{ (1 - \beta_1)^3 (1 - \beta_2)(1 - \beta_1 / \beta_2) } } \\
&= \sum_{s=1}^t \frac{\eta_s}{\beta} \left\| \frac{\nabla f(\mathbf{x}_s)}{\sqrt{\mathbf{a}_s}} \right\|^2 + \sum_{s=1}^t \frac{\beta G_T(t) \eta \sqrt{1 - \beta_2}}{2 (1 - \beta_1)^3} \left\| \frac{\mathbf{m}_{s-1}}{\mathbf{b}_s} \right\|^2 \\
& + \sum_{s=1}^t \frac{\beta G_T(t) \eta \beta_3^2}{2 (1 - \beta_1)^3} \left\| \frac{\mathbf{m}_{s-1}}{\mathbf{b}_s} \right\|^2 + \frac{2 \sqrt{d} \eta G_t}{ \sqrt{ (1 - \beta_1)^3 (1 - \beta_2)(1 - \beta_1 / \beta_2) } }.
\end{aligned}
$$
\end{proof}

\subsection{Lemma \ref{lemma:11}}
\begin{lemma}
    \label{lemma:11}
Let $\beta_2 = 1 - 1/T$, and $\mathcal{F}_i(t)$ be given in Lemma \ref{lemma:3}. We have $\log\left(\frac{\mathcal{F}_i(T)}{\beta_2^T}\right) \sim \mathcal{O} (\log(T))$. 
\end{lemma}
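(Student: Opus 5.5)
The quantity splits as $\log\mathcal{F}_i(T) - T\log\beta_2$, and I will bound the two pieces separately.

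\textbf{The $\beta_2^{-T}$ factor.} With $\beta_2 = 1-1/T$, write $-T\log\beta_2 = -T\log(1-1/T)$. For $T\ge 2$ the inequality $-\log(1-x)\le \frac{x}{1-x}$ on $x\in[0,1)$ gives
$$-T\log(1-1/T)\le \frac{T\cdot(1/T)}{1-1/T}=\frac{T}{T-1}\le 2.$$
So this term is $\mathcal{O}(1)$, uniformly in $T$. The degenerate case $T=1$ ($\beta_2=0$) is excluded by $0\le\beta_1<\beta_2$ and only affects constants anyway.

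\textbf{The $\mathcal{F}_i(T)$ term.} By definition $\mathcal{F}_i(T)=1+\epsilon^{-2}\sum_{s=1}^T \mathbf{g}_{s,i}^2$, so it suffices to show $\sum_{s\le T}\mathbf{g}_{s,i}^2$ is at most polynomial in $T$. I will work on the high-probability event of Lemma \ref{lemma:7}. There $\mathbf{g}_{s,i}^2\le\|\mathbf{g}_s\|^2\le G_T(s)^2 = D_T^2(\|\sigma\|^2+2G_s^2)$ with $D_T^2=\log(eT/\delta)$. Lemma \ref{lemma:5} then gives $G_s\le\|\nabla f(\mathbf{x}_1)\|+LC_0 s\sqrt{d/(1-\beta_1/\beta_2)}$, which is linear in $s$.

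\textbf{Combining.} These bounds give $\sum_{s\le T}\mathbf{g}_{s,i}^2\le T\,D_T^2(\|\sigma\|^2+2G_T^2)=\mathcal{O}(T^3\log(T/\delta))$, hence $\log\mathcal{F}_i(T)\le \log\bigl(1+\epsilon^{-2}C\,T^3\log(eT/\delta)\bigr)=\mathcal{O}(\log T)$. Adding the $\mathcal{O}(1)$ bound from the first step yields the claim. The hidden constants depend on $\epsilon,\delta,d,L,C_0,\|\sigma\|,\|\nabla f(\mathbf{x}_1)\|$, and on $1-\beta_1/\beta_2$.

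\textbf{Main obstacle.} The delicate point is the factor $1/(1-\beta_1/\beta_2)$ inside $G_T$. Since $\beta_2\to1$ while $\beta_1$ is fixed, $1-\beta_1/\beta_2\to 1-\beta_1>0$, so this factor stays bounded; I will state this explicitly. The only other thing needed is to note that the statement holds on the event of Lemma \ref{lemma:7} (probability at least $1-\delta$). Under the almost-sure bound $\|\nabla f(\mathbf{x};\xi)\|\le M$ of Assumption \ref{ass:2}, the estimate is immediate instead, since $\mathcal{F}_i(T)\le 1+TM^2/\epsilon^2$.
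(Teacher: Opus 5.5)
Your proof is correct and follows essentially the paper's route. It uses the same bound $-T\log\beta_2\le 2$ via $\log(1/a)\le(1-a)/a$, and the same polynomial bound on $\mathcal{F}_i(T)$ coming from the linear growth of $\|\nabla f(\mathbf{x}_s)\|$ in Lemma~\ref{lemma:5}. The only difference is the noise bound: the paper uses $\xi_{s,i}^2\le\sigma_i^2$ pointwise, an almost-sure bound that Assumption~\ref{ass:2.2} does not literally provide, and gets a deterministic $\mathcal{O}(T^3)$ bound. You instead work on the event of Lemma~\ref{lemma:7}, which costs only an extra $\log(eT/\delta)$ factor and a probability-$(1-\delta)$ qualifier; this is harmless because Theorem~\ref{th:Highconv} already works on that event. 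Your explicit check that $1-\beta_1/\beta_2\to 1-\beta_1$ stays bounded away from zero is a point the paper leaves implicit.
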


\begin{proof}
First, we have 
\begin{equation}
\label{eq:beta2}
-\log\beta_2=\log\left(\frac{1}{\beta_2}\right)\leq\frac{1-\beta_2}{\beta_2}=\frac{1/T}{1-1/T}\le \frac{2}{T},
\end{equation}
where we apply $\log(1/a)\le(1-a)/a,\forall a \in (0,1)$.

It follows that 
\begin{equation}
\label{eq:G}
    \log\left(\frac{\mathcal{F}(T)}{\beta_2^T}\right) \le \log(\mathcal{F}(T))+2\le \log(e^2\mathcal{F}(T)).
\end{equation}
According to the definition of $\mathcal{F}_i(t)$ in Lemma \ref{lemma:3}, we have 
\begin{equation}
\begin{aligned}
\sum_{s=1}^{t}\beta_{2}^{t-s}\mathbf{g}_{s,i}^{2}&\leq2\sum_{s=1}^{t}\left(\nabla f(\mathbf{x}_s)_{i}^{2}+\xi_{s,i}^{2}\right)\leq2\sum_{s=1}^{t}\left(\sigma_{i}^{2}+\nabla f(\mathbf{x}_s)_{i}^{2}\right)\\
&\le 2\left(\|\sigma\|_{\infty}^2 t + \sum_{s=1}^t\|\nabla f(\mathbf{x}_s)\|_{\infty}^2\right).
\end{aligned}
\end{equation}
Thus, we obtain
\begin{equation}
\label{eq:F}
\begin{aligned}
   \mathcal{F}_i(t) &= 1 + \frac{1}{\epsilon^2}\sum_{s=1}^t \beta_2^{t-s} \mathbf{g}_{s,i}^2 \\
   &\stackrel{(\circ)} \leq  1+\frac{2}{\epsilon}\left[\left(\|\sigma\|_{\infty}^2  + \left(\|\nabla f(\mathbf{x}_1)\|_{\infty}+tLC_0\sqrt{\frac{d}{1-\beta_1/\beta_2}}\right)^2\right)t\right] \\
   &\stackrel{(\star)} \leq 1+\frac{2}{\epsilon}\left[\left(\|\sigma\|_{\infty}^2  + 2\|\nabla f(\mathbf{x}_1)\|_{\infty}^2\right)t+\frac{2L^2C_0^2d}{1-\beta_1/\beta_2}t^3\right],
\end{aligned}
\end{equation}
where $(\circ)$ follows from using Lemma \ref{lemma:5}; $(\star)$ is due to $(a+b)^2 \le 2a^2+2b^b$.

Therefore combining (\ref{eq:beta2}), (\ref{eq:G}) and (\ref{eq:F}), we arrive at $\log\left(\frac{\mathcal{F}_i(T)}{\beta_2^T}\right) \sim \mathcal{O} (\log(T))$.
\end{proof}

\section{Proofs of Theorem \ref{th:Highconv}}
\label{appendix:d}

\begin{proof}
Applying the descent Lemma to the algorithm, we have
$$
\begin{aligned}
f(\mathbf{y}_{s+1}) &\le f(\mathbf{y}_s) + \langle \nabla f(\mathbf{y}_s), \mathbf{y}_{s+1} - \mathbf{y}_s \rangle + \frac{L}{2} \| \mathbf{y}_{s+1} - \mathbf{y}_s \|^2 \\
&= f(\mathbf{y}_s) + \left\langle \nabla f(\mathbf{y}_s), -\eta_s \phi_s \odot \frac{\mathbf{g}_s}{\mathbf{b}_s} + \frac{\beta_1}{1 - \beta_1} \Delta_s \odot (\mathbf{x}_s - \mathbf{x}_{s-1}) \right\rangle \\
&+\frac{L}{2} \left\| -\eta_s \phi_s \odot \frac{\mathbf{g}_s}{\mathbf{b}_s} + \frac{\beta_1}{1 - \beta_1} \Delta_s \odot (\mathbf{x}_s - \mathbf{x}_{s-1}) \right\|^2 \\
&= f(\mathbf{y}_s) - \eta_s \left\langle \nabla f(\mathbf{y}_s), \phi_s \odot \frac{\mathbf{g}_s}{\mathbf{b}_s} \right\rangle + \frac{\beta_1}{1 - \beta_1} \left\langle \nabla f(\mathbf{y}_s), \Delta_s \odot (\mathbf{x}_s - \mathbf{x}_{s-1}) \right\rangle\\&+\frac{L}{2} \left\| -\eta_s \phi_s \odot\frac{\mathbf{g}_s}{\mathbf{b}_s} + \frac{\beta_1}{1 - \beta_1} \Delta_s \odot (\mathbf{x}_s - \mathbf{x}_{s-1}) \right\|^2 \\
&\le f(\mathbf{x}_1) + \sum_{s=1}^t - \eta_s \left\langle \nabla f(\mathbf{y}_s), \phi_s \odot \frac{\mathbf{g}_s}{\mathbf{b}_s} \right\rangle + \sum_{s=1}^t \frac{\beta_1}{1 - \beta_1} \left\langle \nabla f(\mathbf{y}_s), \Delta_s \odot (\mathbf{x}_s - \mathbf{x}_{s-1}) \right\rangle \\&+ \sum_{s=1}^t \frac{L}{2} \left\| -\eta_s \phi_s \odot \frac{\mathbf{g}_s}{\mathbf{b}_s} + \frac{\beta_1}{1 - \beta_1} \Delta_s \odot (\mathbf{x}_s - \mathbf{x}_{s-1}) \right\|^2.
\end{aligned}
$$
Then, we define
$$
\begin{aligned}
A &= \sum_{s=1}^t - \eta_s \left\langle \nabla f(\mathbf{y}_s), \phi_s \odot \frac{\mathbf{g}_s}{\mathbf{b}_s} \right\rangle,\\
B &= \sum_{s=1}^t \frac{\beta_1}{1 - \beta_1} \left\langle \nabla f(\mathbf{y}_s), \Delta_s \odot (\mathbf{x}_s - \mathbf{x}_{s-1}) \right\rangle,\\
C &= \sum_{s=1}^t \frac{L}{2} \left\| -\eta_s \phi_s \odot \frac{\mathbf{g}_s}{\mathbf{b}_s} + \frac{\beta_1}{1 - \beta_1} \Delta_s \odot (\mathbf{x}_s - \mathbf{x}_{s-1}) \right\|^2.
\end{aligned}
$$
Now, decomposing $A$ and $B$,
$$
\begin{aligned}
A &= \sum_{s=1}^t - \eta_s \left\langle \nabla f(y_s), \phi_s \odot\frac{\mathbf{g}_s}{\mathbf{b}_s} \right\rangle \\
&= \underbrace{\sum_{s=1}^t - \eta_s \left\langle \nabla f(\mathbf{x}_s), \phi_s \odot\frac{\mathbf{g}_s}{\mathbf{b}_s} \right\rangle}_{A.1} + \underbrace{\sum_{s=1}^t \eta_s \left\langle \nabla f(\mathbf{x}_s) - \nabla f(\mathbf{y}_s), \phi_s \odot\frac{\mathbf{g}_s}{\mathbf{b}_s} \right\rangle}_{A.2}.
\end{aligned}
$$
$$
\begin{aligned}
    B = \sum_{s=1}^t \frac{\beta_1}{1 - \beta_1} \left\langle \nabla f(\mathbf{y}_s), \Delta_s \odot (\mathbf{x}_s - \mathbf{x}_{s-1}) \right\rangle = \underbrace{\frac{\beta_1}{1 - \beta_1} \sum_{s=1}^t \left\langle \nabla f(\mathbf{x}_s), \Delta_s \odot (\mathbf{x}_s - \mathbf{x}_{s-1}) \right\rangle}_{B.1} \\+ \underbrace{\frac{\beta_1}{1 - \beta_1} \sum_{s=1}^t \left\langle \nabla f(\mathbf{y}_s) - \nabla f(\mathbf{x}_s), \Delta_s \odot (\mathbf{x}_s - \mathbf{x}_{s-1}) \right\rangle}_{B.2}.
\end{aligned}
$$
Subsequently, using the conclusions of Lemma \ref{lemma:9} and Lemma \ref{lemma:8}, we have
$$
\begin{aligned}
A.1 =& -\sum_{s=1}^t \eta_s \left\| \frac{\sqrt{\phi_s} \odot\nabla f(\mathbf{x}_s)}{\sqrt{\mathbf{a}_s}} \right\|^2\\
&\underbrace{-\sum_{s=1}^t \eta_s \left\langle \nabla f(\mathbf{x}_s), \frac{\phi_s \odot \xi_s}{\mathbf{a}_s} \right\rangle}_{A.1.1} + \underbrace{\sum_{s=1}^t \eta_s \left\langle \nabla f(\mathbf{x}_s), \left( \frac{1}{\mathbf{a}_s} - \frac{1}{\mathbf{b}_s} \right) \odot \phi_s \odot \mathbf{g}_s \right\rangle}_{A.1.2}.
\end{aligned}
$$
$$
\begin{aligned}
A.1 &\le -\sum_{s=1}^t \eta_s \gamma \left\| \frac{\nabla f(\mathbf{x}_s)}{\sqrt{\mathbf{a}_s}} \right\|^2 + \frac{1}{2 \beta} \sum_{s=1}^t \eta_s \left\| \frac{\nabla f(\mathbf{x}_s)}{\sqrt{\mathbf{a}_s}} \right\|^2 + \frac{d}{\lambda} \log \left( \frac{T}{\delta} \right) \\
&\quad + \frac{1}{2 \beta} \sum_{s=1}^t \eta_s \left\| \frac{\nabla f(\mathbf{x}_s)}{\sqrt{\mathbf{a}_s}} \right\|^2 + \frac{\eta \beta G_T(t) \sqrt{1 - \beta_2}}{2 (1 - \beta_1)} \sum_{s=1}^t \left\| \frac{\mathbf{g}_s}{\mathbf{b}_s} \right\|^2 \\
&= \left( \frac{1}{\beta} - \gamma \right) \sum_{s=1}^t \eta_s \left\| \frac{\nabla f(\mathbf{x}_s)}{\sqrt{\mathbf{a}_s}} \right\|^2 + \frac{d}{\lambda} \log \left( \frac{T}{\delta} \right) + \frac{\eta \beta G_T(t) \sqrt{1 - \beta_2}}{2 (1 - \beta_1)} \sum_{s=1}^t \left\| \frac{\mathbf{g}_s}{\mathbf{b}_s} \right\|^2.
\end{aligned}
$$
For $A.2$, applying Young's inequality, we have
$$
\begin{aligned}
\eta_s \left\langle \nabla f(\mathbf{x}_s) - \nabla f(\mathbf{y}_s), \frac{\phi_s \odot\mathbf{g}_s}{\mathbf{b}_s} \right\rangle &\stackrel{(\bullet)}\leq \eta_s \| \nabla f(\mathbf{x}_s) - \nabla f(\mathbf{y}_s) \| \cdot \left\| \frac{\mathbf{g}_s}{\mathbf{b}_s} \right\| \\
&\stackrel{(\circ)}\leq \frac{1}{2L} \| \nabla f(\mathbf{x}_s) - \nabla f(\mathbf{y}_s) \|^2 + \frac{L \eta_s^2}{2} \left\| \frac{\mathbf{g}_s}{\mathbf{b}_s} \right\|^2 \\
&\leq \frac{L \beta_1^2}{2 (1 - \beta_1)^2} \| \mathbf{x}_s - \mathbf{x}_{s-1} \|^2 + \frac{L \eta^2}{2 (1 - \beta_1)^2} \left\| \frac{\mathbf{g}_s}{\mathbf{b}_s} \right\|^2 \\
&\stackrel{(\star)}\leq \frac{L \eta^2}{2 (1 - \beta_1)^2} \left\| \frac{\hat{\mathbf{m}}_{s-1}}{\mathbf{b}_{s-1}} \right\|^2 + \frac{L \eta^2}{2 (1 - \beta_1)^2} \left\| \frac{\mathbf{g}_s}{\mathbf{b}_s} \right\|^2,
\end{aligned}
$$
where $(\bullet)$ is based on $\phi_i \le 1$ and applying the Cauchy-Schwarz inequality; $(\circ)$ follows from applying Young's inequality; $(\star)$ is due to
$$
\| \mathbf{x}_s - \mathbf{x}_{s-1} \|^2 \le \eta_{s-1}^2 \left\| \frac{\mathbf{m}_{s-1}}{\mathbf{b}_{s-1}} \right\|^2 \leq \eta^2 \left\| \frac{\hat{\mathbf{m}}_{s-1}}{\mathbf{b}_{s-1}} \right\|^2.
$$
Thus, summing over $s \in [t]$, we obtain:
$$
A.2 \leq \frac{L \eta^2}{2 (1 - \beta_1)^2} \sum_{s=1}^t \left\| \frac{\hat{\mathbf{m}}_{s-1}}{\mathbf{b}_{s-1}} \right\|^2 + \frac{L \eta^2}{2 (1 - \beta_1)^2} \sum_{s=1}^t \left\| \frac{\mathbf{g}_s}{\mathbf{b}_s} \right\|^2.
$$
For $(B.1)$ , using Lemma \ref{lemma:10}, we have 
$$
\begin{aligned}
B.1 \le \sum_{s=1}^t\frac{\eta_s}{\beta}\left\|\frac{\nabla f(\mathbf{x}_s)}{\sqrt{\mathbf{a}_s}}\right\|^{2}+\sum_{s=1}^t\frac{\beta(G_{T}(t))\eta\sqrt{1-\beta_{2}}}{2(1-\beta_{1})^{3}}\left\|\frac{\mathbf{m}_{s-1}}{\mathbf{b}_s}\right\|^{2}\\
+\sum_{s=1}^t\frac{\beta G_{T}(t)\eta\beta_3^2}{2(1-\beta_{1})^{3}}\left\|\frac{\mathbf{m}_{s-1}}{\mathbf{b}_s}\right\|^{2}+\frac{2\sqrt{d}\eta G_{t}}{\sqrt{(1-\beta_{1})^{3}(1-\beta_{2})(1-\beta_{1}/\beta_{2})}}.
\end{aligned}
$$
For $B.2$, applying vector inequalities and Lemma \ref{lemma:4}, we have
$$
\begin{aligned}
\mathrm{B.2} &\leq \frac{\beta_1}{1 - \beta_1} \sum_{s=1}^t \| \Delta_s \|_\infty \| \mathbf{x}_s - \mathbf{x}_{s-1} \| \| \nabla f(\mathbf{y}_s) - \nabla f(\mathbf{x}_s) \| \\
&\leq \frac{L \beta_1^2 \omega}{(1 - \beta_1)^2} \sum_{s=1}^t \| \mathbf{x}_s - \mathbf{x}_{s-1} \|^2 \\
&\leq \frac{L \omega^2\eta^2}{(1 - \beta_1)^2} \sum_{s=1}^t \left\| \frac{\hat{\mathbf{m}}_{s-1}}{\mathbf{b}_{s-1}} \right\|^2.
\end{aligned}
$$

Finally, for the upper bound of $C$, we have
$$
\begin{aligned}
C &\leq L \sum_{s=1}^t \eta_s^2 \left\| \frac{\mathbf{g}_s}{\mathbf{b}_s} \right\|^2 + \frac{L \beta_1^2}{(1 - \beta_1)^2} \sum_{s=1}^t \| \Delta_s \|_\infty^2 \| \mathbf{x}_s - \mathbf{x}_{s-1} \|^2 \\
&\leq \frac{L \eta^2}{(1 - \beta_1)^2} \sum_{s=1}^t \left\| \frac{\mathbf{g}_s}{\mathbf{b}_s} \right\|^2 + \frac{L \eta^2 \omega^2}{(1 - \beta_1)^2} \sum_{s=1}^t \left\| \frac{\hat{\mathbf{m}}_{s-1}}{\mathbf{b}_{s-1}} \right\|^2.
\end{aligned}
$$

Therefore, we define
$$
\begin{aligned}
C_1 &= \frac{\eta \beta G_T \sqrt{1 - \beta_2}}{2 (1 - \beta_1)} + \frac{3 L \eta^2}{2 (1 - \beta_1)^2} ,\\
C_2 &= \frac{L \eta^2}{2 (1 - \beta_1)^2} + \frac{2L \eta^2 \omega ^2}{(1 - \beta_1)^2} ,\\
C_3 &= \frac{\beta G_T \eta \sqrt{1 - \beta_2}}{2 (1 - \beta_1)^3} + \frac{\beta G_T \eta \beta_3^2}{2 (1 - \beta_1)^3} ,\\
C_4 &= \frac{2 \sqrt{d} \eta}{\sqrt{(1 - \beta_1)^3 (1 - \beta_2) (1 - \beta_1 / \beta_2)}},\\
C_5 & = \frac{L^2C_0^2d}{(1-\beta_1)^2(1-\beta_1/\beta_2)}.
\end{aligned}
$$

Then, we have
$$
\begin{aligned}
f(\mathbf{y}_{s+1}) &\le f(\mathbf{x}_1) + \left( \frac{1}{\beta} - \gamma \right) \sum_{s=1}^t \eta_s \left\| \frac{\nabla f(\mathbf{x}_s)}{\sqrt{\mathbf{a}_s}} \right\|^2 + \frac{d}{\lambda} \log \left( \frac{T}{\delta} \right) + \frac{\eta \beta G_T(t) \sqrt{1 - \beta_2}}{2 (1 - \beta_1)} \sum_{s=1}^t \left\| \frac{\mathbf{g}_s}{\mathbf{b}_s} \right\|^2 \\
&\quad + \frac{L \eta^2}{2 (1 - \beta_1)^2} \sum_{s=1}^t \left\| \frac{\hat{\mathbf{m}}_{s-1}}{\mathbf{b}_{s-1}} \right\|^2 + \frac{L \eta^2}{2 (1 - \beta_1)^2} \sum_{s=1}^t \left\| \frac{\mathbf{g}_s}{\mathbf{b}_s} \right\|^2 + \frac{1}{\beta} \sum_{s=1}^t \eta_s \left\| \frac{\nabla f(\mathbf{x}_s)}{\sqrt{\mathbf{a}_s}} \right\|^2 \\
&\quad + \frac{\beta G_T(t) \eta \sqrt{1 - \beta_2}}{2 (1 - \beta_1)^3} \sum_{s=1}^t \left\| \frac{\mathbf{m}_{s-1}}{\mathbf{b}_s} \right\|^2 + \frac{\beta G_T(t) \eta \beta_3^2}{2 (1 - \beta_1)^3} \sum_{s=1}^t \left\| \frac{\mathbf{m}_{s-1}}{\mathbf{b}_s} \right\|^2 \\
&\quad+\frac{2 \sqrt{d} \eta G_t}{\sqrt{(1 - \beta_1)^3 (1 - \beta_2) (1 - \beta_1 / \beta_2)}} \\
&\quad + \frac{L \eta^2}{(1 - \beta_1)^2} \sum_{s=1}^t \left\| \frac{\mathbf{g}_s}{\mathbf{b}_s} \right\|^2 + \frac{2L \eta^2 \omega^2}{(1 - \beta_1)^2} \sum_{s=1}^t \left\| \frac{\hat{\mathbf{m}}_{s-1}}{\mathbf{b}_{s-1}} \right\|^2 \\
&\leq \left( \frac{2}{\beta} - \gamma \right) \sum_{s=1}^t \eta_s \left\| \frac{\nabla f(\mathbf{x}_s)}{\sqrt{\mathbf{a}_s}} \right\|^2 + \frac{d}{\lambda} \log \left( \frac{T}{\delta} \right) + C_1 \sum_{s=1}^t \left\| \frac{\mathbf{g}_s}{\mathbf{b}_s} \right\|^2 + C_2 \sum_{s=1}^t \left\| \frac{\hat{\mathbf{m}}_{s-1}}{\mathbf{b}_{s-1}} \right\|^2 \\
&\quad + C_3 \sum_{s=1}^t \left\| \frac{\mathbf{m}_{s-1}}{\mathbf{b}_s} \right\|^2 + C_4 G_T.
\end{aligned}
$$
Then, according to Lemma \ref{lemma:6}, we have
$$
\begin{aligned}
\| \nabla f(\mathbf{x}_{s+1}) \|^2 &\leq 2 \| \nabla f(\mathbf{y}_{s+1}) \|^2 + 2 C_5 \\
&\leq 4 L (f(\mathbf{y}_{s+1}) - f^*) + 2 C_5.
\end{aligned}
$$
Thus, we have
$$
\begin{aligned}
\| \nabla f(\mathbf{x}_{s+1}) \|^2 &\le 4 L (f(\mathbf{x}_1) - f^*) + 4 L \left( \frac{2}{\beta} - \gamma \right) \sum_{s=1}^t \eta_s \left\| \frac{\nabla f(\mathbf{x}_s)}{\sqrt{\mathbf{a}_s}} \right\|^2 + \frac{4 L d}{\lambda} \log \left( \frac{T}{\delta} \right) + 2 C_5 \\
&\quad + 4 L C_1 \sum_{s=1}^t \left\| \frac{\mathbf{g}_s}{\mathbf{b}_s} \right\|^2 + 4 L C_2 \sum_{s=1}^t \left\| \frac{\hat{\mathbf{m}}_{s-1}}{\mathbf{b}_{s-1}} \right\|^2 + 4 L C_3 \sum_{s=1}^t \left\| \frac{\mathbf{m}_{s-1}}{\mathbf{b}_s} \right\|^2 + 4 L C_4 G_T.
\end{aligned}
$$
Recall Lemma \ref{lemma:3}, we have:
$$
\begin{aligned}
& \sum_{s=1}^t \left\| \frac{\mathbf{g}_s}{\mathbf{b}_s} \right\|^2 \leq \frac{1}{1 - \beta_2} \sum_{i=1}^d \log \left( \frac{\mathcal{F}_i(t)}{\beta_2^t} \right), \\
& \sum_{s=1}^t \left\| \frac{\mathbf{m}_s}{\mathbf{b}_s} \right\|^2 \leq \frac{1 - \beta_1}{(1 - \beta_2) (1 - \beta_1 / \beta_2)} \sum_{i=1}^d \log \left( \frac{\mathcal{F}_i(t)}{\beta_2^t} \right), \\
& \sum_{s=1}^t \left\| \frac{\mathbf{m}_s}{\mathbf{b}_{s+1}} \right\|^2 \leq \frac{1 - \beta_1}{\beta_2 (1 - \beta_2) (1 - \beta_1 / \beta_2)} \sum_{i=1}^d \log \left( \frac{\mathcal{F}_i(t)}{\beta_2^t} \right), \\
& \sum_{s=1}^t \left\| \frac{\hat{\mathbf{m}}_s}{\mathbf{b}_s} \right\|^2 \leq \frac{1}{(1 - \beta_2) (1 - \beta_1 / \beta_2)} \sum_{i=1}^d \log \left( \frac{\mathcal{F}_i(t)}{\beta_2^t} \right).
\end{aligned}
$$
Let
$$
\begin{aligned}
D_1 &= \left(\frac{2 L\eta \beta G_T \sqrt{1 - \beta_2}}{(1 - \beta_1)^2} + \frac{6 L \eta^2}{(1 - \beta_1)^3}\right) d \log \left( \frac{\mathcal{F}(T)}{\beta_2^T} \right),\\
D_2 &= \frac{2L^2\eta^2(1+4\omega^2)}{(1-\beta_1)^2(1-
\beta_2)(1-\beta_1/\beta_2)}d \log \left( \frac{\mathcal{F}(T)}{\beta_2^T} \right),\\
D_3 &= \frac{2L\eta\beta G_T(\sqrt{1-\beta_2}+\beta_3^2)}{\beta_2(1-\beta_1)^2(1-\beta_2)(1-\beta_1/\beta_2)}d \log \left( \frac{\mathcal{F}(T)}{\beta_2^T} \right),\\ 
D_4 &=\frac{8LG_T \sqrt{d} \eta}{\sqrt{(1 - \beta_1)^3 (1 - \beta_2) (1 - \beta_1 / \beta_2)}}, \\
D_5 &=  \frac{4 L d}{\lambda} \log \left( \frac{T}{\delta} \right),\\
\lambda &= \frac{2(1 - \beta_1) \sqrt{1 - \beta_2}}{3 \eta G_T \beta}.
\end{aligned}
$$
Finally, given $0 \leq \beta_1 < \beta_2 < 1, \quad \eta = C_0 \sqrt{1 - \beta_2}, \quad \gamma \in \left(\frac{2}{\beta}, 1\right)$ , and $\beta > 2,$ we define $\beta_3 = \max \left\{ \frac{1-\beta_2}{\sqrt{1-\beta_2}}, \frac{2-\gamma^2(1+\beta_2)}{\gamma\sqrt{1-\beta_2}}, \frac{|\beta_2 - \gamma^2| + 1 - \gamma^2}{\gamma\sqrt{1-\beta_2}} \right\}$ and $\omega = (\sqrt{1 + 1/\beta_2} + 1) \max\{1, \gamma, 1/\gamma\}.$
With these definitions, we can derive that $G^2$ satisfies 
\begin{equation}
G^2 = 4 L (f(x_1) - f^*) + D_1 + D_2 + D_3 + D_4 + D_5 + 2 C_5.
\end{equation}
Next, we proceed with a proof by mathematical induction. First, we assume that $G_t \le G ,G_T(t) \le G_T,\forall t\in [t]$ . Thus,
$$
\| \nabla f(\mathbf{x}_{t+1}) \|^2 \le G^2 + 4 L \left( \frac{2}{\beta} - \gamma \right) \sum_{s=1}^t \eta_s \left\| \frac{\nabla f(\mathbf{x}_s)}{\sqrt{\mathbf{a}_s}} \right\|^2 \le G^2.
$$
Thus, $G_{t+1}=\max\{G_t,\|\nabla f(\mathbf{x}_{t+1})\|\} \le G$, which confirms the validity of the initial hypothesis.

%%%%%%%%% todo
% Then, in the following, we assume that both (50) and (60) hold. Based on these two inequalities, we can derive the final convergence bound. Since (50) and (60) hold with probability at least $1 - 2\delta$, the final convergence constraint also holds with probability at least $1 - 2\delta$. According to the definition of $a_s$ and (50), we have
Since Lemma \ref{lemma:7} and Lemma \ref{lemma:9} each hold with probability at least 1-$\delta$, they hold simultaneously with probability at least 1-2$\delta$. We have,
$$
\begin{aligned}
\| \mathbf{a}_s \|_\infty &= \max_{i \in [d]} \sqrt{\beta_2 \mathbf{v}_{s-1,i} + (1 - \beta_2) (G_T(s))^2} + \epsilon \\
&\leq \max_{i \in [d]} \sqrt{(1 - \beta_2) \left[ \sum_{j=1}^{s-1} \beta_2^{s-j} \mathbf{g}_{j,i}^2 + (G_T(s))^2 \right]} + \epsilon \\
&\leq \sqrt{(1 - \beta_2) \sum_{j=1}^s \beta_2^{s-j} G_T^2} + \epsilon = G_T \sqrt{1 - \beta_2^s} + \epsilon, \quad \forall s \in [T].
\end{aligned}
$$
Then, combining $\eta = C_0 \sqrt{1 - \beta_2}$, $\epsilon > \epsilon \sqrt{(1 - \beta_2^s) (1 - \beta_2)}$, for any $s \in [T]$, we can obtain
$$
\frac{\eta_s}{\| \mathbf{a}_s \|_\infty} \geq \frac{C_0 \sqrt{(1 - \beta_2^s) (1 - \beta_2)}}{G_T \sqrt{1 - \beta_2^s} + \epsilon \sqrt{(1 - \beta_2^s) (1 - \beta_2)}} \cdot \frac{1}{1 - \beta_1^s} \geq \frac{C_0 \sqrt{1 - \beta_2}}{G_T + \epsilon \sqrt{1 - \beta_2}}.
$$

Thus,
$$
\frac{\| \mathbf{a}_s \|_\infty}{\eta_s} \le \frac{G_T + \epsilon \sqrt{1 - \beta_2}}{C_0 \sqrt{1 - \beta_2}}.
$$

Therefore, letting $\gamma > \frac{2}{\beta}$,
$$
L \sum_{s=1}^T \frac{\eta_s}{\| \mathbf{a}_s \|_\infty} \| \nabla f(\mathbf{x}_s) \|^2 \leq L \sum_{s=1}^T \eta_s \left\| \frac{\nabla f(\mathbf{x}_s)}{\sqrt{\mathbf{a}_s}} \right\|^2 \leq \frac{G^2 - \| \nabla f(\mathbf{x}_{T+1}) \|^2}{4 (\gamma - 2 / \beta)} \leq \frac{G^2}{4 (\gamma - 2 / \beta)}.
$$
$$
\begin{aligned}
    \frac{1}{T} \sum_{s=1}^T \| \nabla f(\mathbf{x}_s) \|^2 &\leq \frac{\sqrt{2} G^2}{4 (\gamma - 2 / \beta) T L C_0} \left( \sqrt{\frac{\| \sigma \|^2 + G^2}{1 - \beta_2}} + \epsilon \right) \sqrt{\log \left( \frac{\mathrm{e} T}{\delta} \right)}\\
&= \frac{\sqrt{2} G^2}{4 (\gamma - 2 / \beta)  L C_0} \left( \sqrt{\frac{\| \sigma \|^2 + G^2}{T}} + \frac{\epsilon}{T} \right) \sqrt{\log \left( \frac{\mathrm{e} T}{\delta} \right)}\stackrel{(\circ)}=\tilde{\mathcal{O}}(T^{-1/2}),
\end{aligned}
$$
where $(\circ)$ is due to Lemma \ref{lemma:11}, we have $G^2\sim\mathcal{O}(\text{poly}(\log(T)))$.
\end{proof}

\section{Experimental Details}
\label{appendix:e}
\subsection{Pretraining on CIFAR10}
The ViT-27M model~\cite{DBLP:conf/iclr/DettmersLSZ22} undergoes pretraining on the CIFAR-10 dataset with comprehensive hyperparameter specifications provided in Table~\ref{tab:vit_hyperparameters}. Our training protocol employs a base learning rate of $1.5 \times 10^{-4}$ coupled with a cosine decay schedule over 200 training epochs. The optimization configuration utilizes AdamW parameters with weight decay coefficient $\lambda = 0.05$, numerical stability constants $\epsilon = 1\times10^{-8}$, and momentum terms $\beta_1 = 0.9$, $\beta_2 = 0.95$. To maintain training stability while processing large-scale inputs, we implement a batch size of 4096 through gradient accumulation with a step size of 20, ensuring memory efficiency without compromising convergence dynamics.

\begin{table}[h!]
\centering
\caption{Hyperparameters used for training ViT}
\begin{tabular}{lcccccc}
\hline
\# Params & $\beta_1$ & $\beta_2$ & Learning Rate & Weight Decay & Batch Size & Warmup Epochs \\
\hline
27.6M & 0.9 & 0.95 & 1.5e-4 & 0.05 & 4096 & 20 \\
\hline
\end{tabular}

\label{tab:vit_hyperparameters}
\end{table}

\subsection{Pretraining on WikiText-103}
The LLaMA2-71M model\cite{Touvron2023Llama2O} and Qwen2.5-150M model\cite{Yang2024Qwen25TR} were pre-trained on the Wikitext-103 dataset. Identical learning rates and scheduling protocols were systematically implemented across all optimizers during the training process. Comprehensive experimental specifications are tabulated in Table \ref{tab:llama_wiki_hyperparameters} and Table \ref{tab:qwen_wiki_hyperparameters}.

\begin{table}[h!]
\centering
\caption{Hyperparameters used for training LLaMA2-71M on WikiText-103}
\begin{tabular}{lccc}
\hline
 & Adam-Type & Lion-Type & Muon-Type \\
\hline
Model Size & \multicolumn{3}{c}{71M} \\
Hidden Size & \multicolumn{3}{c}{512} \\
Head & \multicolumn{3}{c}{8} \\
Depth & \multicolumn{3}{c}{12} \\
Training Steps & \multicolumn{3}{c}{2034} \\
Warmup Steps & \multicolumn{3}{c}{203} \\
Maximum Length & \multicolumn{3}{c}{1024} \\
Batch Size & \multicolumn{3}{c}{480} \\
Learning Rate & \multicolumn{3}{c}{3e-4} \\
Warmup Scheduling & \multicolumn{3}{c}{linear from 3e-5} \\
Learning Rate Scheduling & \multicolumn{3}{c}{cosine to 10\%} \\
Numerical precision & \multicolumn{3}{c}{bfloat16} \\
Weight Decay & \multicolumn{3}{c}{0.01}\\
$\beta_1$ & 0.9 & 0.9 & 0.95 \\
$\beta_2$ & 0.999 &0.99 & 0.95\\
Momentum & \XSolidBrush & \XSolidBrush & 0.95\\
\hline
\end{tabular}

\label{tab:llama_wiki_hyperparameters}
\end{table}

\begin{table}[h!]
\centering
\caption{Hyperparameters used for training Qwen2.5-150M on WikiText-103}
\begin{tabular}{lccc}
\hline
 & Adam-Type  & Muon-Type \\
\hline
Model Size & \multicolumn{2}{c}{150M} \\
Hidden Size & \multicolumn{2}{c}{640} \\
Head & \multicolumn{2}{c}{10} \\
Depth & \multicolumn{2}{c}{12} \\
Training Steps & \multicolumn{2}{c}{1525} \\
Warmup Steps & \multicolumn{2}{c}{154} \\
Maximum Length & \multicolumn{2}{c}{1024} \\
Batch Size & \multicolumn{2}{c}{160} \\
Learning Rate & \multicolumn{2}{c}{1e-3}  \\
Warmup Scheduling & \multicolumn{2}{c}{linear from 6e-5} \\
Learning Rate Scheduling & \multicolumn{2}{c}{cosine to 10\%} \\
Numerical precision & \multicolumn{2}{c}{bfloat16} \\
Weight Decay & \multicolumn{2}{c}{0.01} \\
$\beta_1$ & 0.9  & 0.95 \\
$\beta_2$ & 0.95 & 0.95\\
Momentum & \XSolidBrush  & 0.95\\
\hline
\end{tabular}

\label{tab:qwen_wiki_hyperparameters}
\end{table}

\subsection{Fine-Tuning on GLUE}

We fine-tune the pre-trained RoBERTa-Base model\cite{Liu2019RoBERTaAR} on the GLUE benchmark using the Hugging Face implementation\footnote{\url{https://huggingface.co/transformers/model_doc/roberta.html}}\footnote{\url{https://huggingface.co/datasets/nyu-mll/glue}}. For all tasks except QQP, we employ a batch size of 32, while QQP uses a larger batch size of 128 due to its dataset characteristics. The model is trained uniformly for 3 epochs across all tasks with a maximum sequence length of 512. For each task, we perform a grid search over learning rates. For most optimizers, the learning rate range is $\{1\text{e-}5, 2\text{e-}5, 3\text{e-}5, 4\text{e-}5, 5\text{e-}5\}$ and the weight decay is set to 0.01. For Lion-type optimizers, the learning rate range is $\{1\text{e-}6, 2\text{e-}6, 3\text{e-}6, 4\text{e-}6, 5\text{e-}6\}$ and the weight decay is set to 0.1. The complete hyperparameter configurations are summarized in Table~\ref{tab:hyperparameters-glue}. For MGUP-AdamW, AdamW, Adam-mini, and C-AdamW, we use $\beta_1 = 0.9$ and $\beta_2 = 0.999$. For LDAdam and Galore, we use $\beta_1 = 0.9$ and $\beta_2 = 0.99$. For Lion, C-Lion, and MGUP-Lion, we use $\beta_1 = 0.95$ and $\beta_2 = 0.98$.

\begin{table}[h!]
\centering
\caption{Hyperparameters used for fine-tuning on GLUE.}
\begin{tabular}{lccccccc}
\hline
\textbf{Hyperparameter} & \textbf{MRPC} & \textbf{STS-B} & \textbf{CoLA} & \textbf{RTE} & \textbf{SST-2} & \textbf{QNLI} & \textbf{QQP} \\
\hline
Batch Size & 32 & 32 & 32 & 32 & 32 & 32 & 128 \\
Weight Decay (Most) & \multicolumn{7}{c}{0.01} \\
Weight Decay (Lion-type) & \multicolumn{7}{c}{0.1} \\
Epochs & \multicolumn{7}{c}{3} \\
Max Seq Len & \multicolumn{7}{c}{512} \\
\hline
\end{tabular}
\label{tab:hyperparameters-glue}
\end{table}
\subsection{Fine-Tuing on GSM8K}
We fine-tune the pre-trained LLaMA2-7B model \cite{Touvron2023Llama2O} using the llm-foundry codebase\footnote{\url{https://github.com/hiyouga/LLaMA-Factory}} with evaluation via standardized lm-evaluation-harness\footnote{\url{https://github.com/EleutherAI/lm-evaluation-harness}} on the GSM8K benchmark with the Hugging Face implementation\footnote{\url{https://huggingface.co/datasets/openai/gsm8k}}. The fine-tuning process employs consistent hyperparameters across all optimizers, including MGUP-AdamW, Adam-8bit, AdamW, and C-AdamW. Specifically, we train for 3 epochs with a total of 702 training steps, including 20 warm-up steps. The batch size is set to 32, and the maximum sequence length is 512. We use a learning rate of $5\text{e-}5$ and optimizer parameters $\beta_1 = 0.9$ and $\beta_2 = 0.999$. The complete hyperparameter configurations are summarized in Table~\ref{tab:hyperparameters-gsm8k}.

\begin{table}[!h]
\centering
\caption{Hyperparameter configurations for fine-tuning LLaMA2-7B on GSM8K.}
\label{tab:hyperparameters-gsm8k}
\begin{tabular}{lc}
\toprule
\textbf{Hyperparameter} & \textbf{Value} \\
\midrule
Epochs & 3 \\
Training Steps & 702 \\
Warm-up Steps & 20 \\
Batch Size & 32 \\
Maximum Length & 512 \\
Learning Rate & $5\text{e-}5$ \\
$\beta_1$ & 0.9 \\
$\beta_2$ & 0.999 \\
\bottomrule
\end{tabular}
\end{table}

\section{Other MGUP-type Algorithms}

\label{alg:other}
\begin{algorithm}[H]
   \caption{\textbf{MGUP-Lion}}
   \label{alg:MGUP-Lion}
\begin{algorithmic}
   \STATE {\bf Input:} Learning rate $\eta_t > 0$, initial parameters $\mathbf{x}_0\in \mathbb{R}^d$, loss function $f(\mathbf{x})$, momentum factors $\beta_1, \beta_2 \in [0, 1)$, weight decay coefficient $\lambda$, stability term $\epsilon > 0$, ratio $\tau\in(0,1)$.
   \FOR{$t = 1$ {\bf to} $T$}
      \STATE Compute the stochastic gradient $\mathbf{g}_t = \nabla f(\mathbf{x}_t;\xi_t)$
      \STATE $\mathbf{u}_{t} = \text{sign}(\beta_1 \mathbf{m}_{t-1} + (1 - \beta_1) \mathbf{g}_t)$
      \STATE $\mathbf{m}_t = \beta_2 \mathbf{m}_{t-1} + (1 - \beta_2)\mathbf{g}_t$
      \STATE $\phi_t = \textbf{MGUP}(\mathbf{u}_t \odot \mathbf{g}_t)$
      \STATE $\mathbf{x}_{t} = (1 - \eta_t\lambda) \odot \mathbf{x}_{t}$
      \STATE $\mathbf{x}_{t+1} = \mathbf{x}_{t} -\eta_t \phi_t \odot \mathbf{u}_t$
   \ENDFOR
\end{algorithmic}
\end{algorithm}

\begin{algorithm}[H]
   \caption{\textbf{MGUP-Muon}}
   \label{alg:MGUP-Muon}
\begin{algorithmic}
   \STATE {\bf Input:} Learning rate $\eta_t > 0$, initial parameters $\mathbf{X}_0 \in \mathbb{R}^{m\times n}$, loss function $f(\mathbf{X})$, momentum factors $\beta \in [0, 1)$, weight decay coefficient $\lambda$, ratio $\tau\in(0,1)$.
   \FOR{$t = 1$ {\bf to} $T$}
       \STATE Compute the stochastic gradient $\mathbf{G}_t = \nabla f(\mathbf{X}_t;\xi_t)$
      \STATE $\mathbf{M}_t = \beta \mathbf{M}_{t-1} + \mathbf{G}_t$
      \STATE $\phi_t = \textbf{MGUP}(\mathbf{M}_t \odot \mathbf{G}_t)$
      \STATE $\mathbf{U}_{t} = \text{Newton-Schulz}(\mathbf{M}_t)$
       \STATE $\mathbf{X}_{t} = (1- \eta_t\lambda)\odot \mathbf{X}_{t}$
      \STATE $\mathbf{X}_{t+1} = \mathbf{X}_{t} -\eta_t \phi_t \odot \mathbf{U}_t$
   \ENDFOR
\end{algorithmic}
\end{algorithm}

\section{More Results}
\label{appendix:more_results}
 As shown in Figure \ref{fig:gsm}, which depicts the training curves under a learning rate of 5e-5, MGUP-AdamW achieves lower training loss per epoch, outperforming baseline optimizers.
 
\begin{figure}[!h]
\begin{center}
\centerline{\includegraphics[width=0.8\columnwidth]{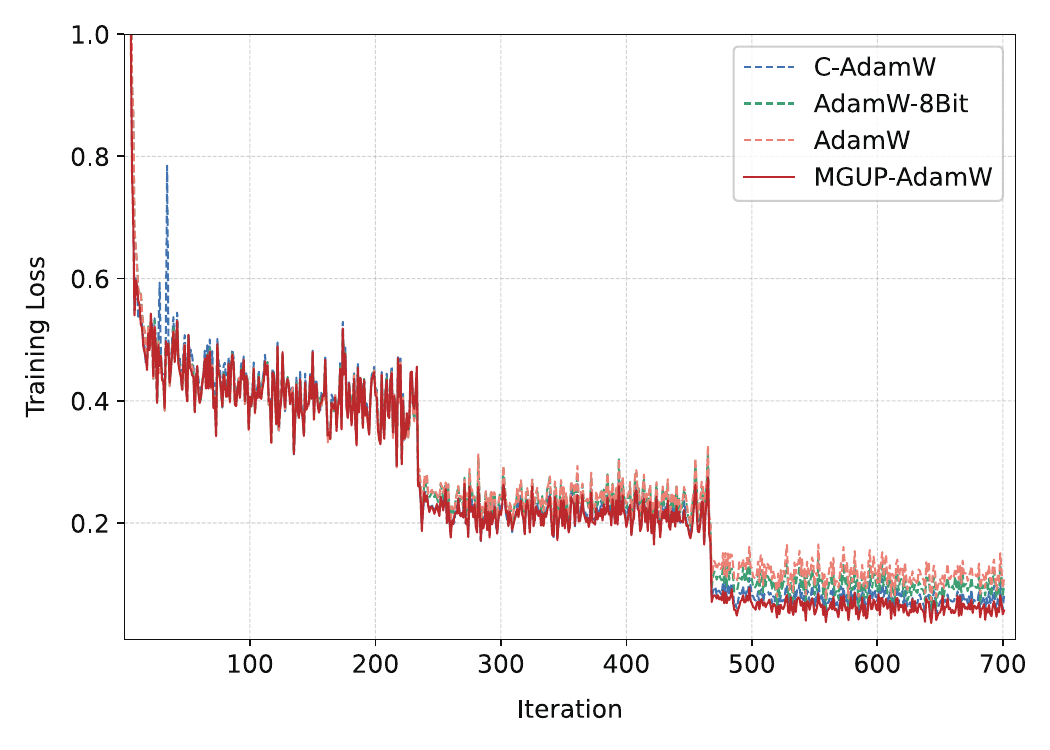}}
\caption{Training curves of LLaMA2-7B on GSM-8K.}
\label{fig:gsm}
\end{center}
\end{figure}

\subsection{Memory cost}
Table \ref{lab:memory} reports memory for fine-tuning  LLaMA-7B on two NVIDIA V100 32GB GPUs. We use a micro batch size of 1 for GSM8K fine-tuning. Our algorithm introduces transient elevation in peak memory overhead while maintaining unaltered static memory allocation throughout the computational process.
\begin{table}[h!]
\centering
\caption{Memory for FineTuing LLaMA2-7B on GSM8K}
\label{lab:memory}
\begin{tabular}{lcccc}
\hline
 & Adam-8bit & AdamW & C-AdamW & MGUP-AdamW \\
\hline
% Training Steps & \multicolumn{3}{c}{48800} \\
Peak Reserved Memory & 25.38GB & 32.09GB & 32.98GB & 33.82 GB \\
\hline
\end{tabular}
\end{table}

\subsection{Time cost}
% TODO
Table \ref{lab:runtime} documents the runtime measurements for both fine-tuning and pre-training processes conducted primarily on a single NVIDIA RTX 4090 24GB GPU, with one exceptional case: LLaMA2-7B fine-tuning utilizing two NVIDIA V100-32GB GPUs. For GSM8K fine-tuning experiments, we maintained a micro-batch size of 1 throughout the process. In pre-training configurations, gradient accumulation strategy was implemented to optimize memory utilization.

\begin{remark}
  In all experiments, we documented the duration from initiation to completion rather than the algorithm's execution time. Discrepancies may arise due to variations in GPU operational states.   
\end{remark}

\begin{table}[h!]
\centering
\caption{Runtime  for FineTuing(PT) and PreTraining(PT) tasks}
\label{lab:runtime}
\begin{tabular}{lcccc}
\hline
Model & Task & AdamW & C-AdamW & MGUP-AdamW  \\
\hline
ViT-28M & CIFAR10(PT) & 1h5m & 1h 6m & 1h 6m \\ 
LLaMA2-71M & WikiText-103(PT) & 5h 36m & 5h 37m & 5h 37m\\
Qwen2.5-150M & WikiText-103(PT) & 1h 54m & 1h55m & 1h56m \\
RoBERTa-Base & QQP(FT) &  25m & 30m & 34m  \\
LLaMA2-7B&GSM8K(FT)&15h 53m & 21h 37m & 22h 58m\\
\hline
\end{tabular}
\end{table}

%%%%%%%%%%%%%%%%%%%%%%%%%%%%%%%%%%%%%%%%%%%%%%%%%%%%%%%%%%%%
\end{document}